\pdfoutput=1
\documentclass[11pt]{article}

\usepackage[T1]{fontenc}
\usepackage[utf8]{inputenc}
\usepackage[margin=1in]{geometry}
\usepackage{amsmath,amssymb,amsthm}
\usepackage{float}
\usepackage{algorithm}
\usepackage{algpseudocode}
\usepackage{xcolor}
\usepackage{tikz}
\usetikzlibrary{positioning,calc,arrows.meta,decorations.pathreplacing,fit,backgrounds}
\usepackage{microtype}
\usepackage[round,authoryear]{natbib}
\usepackage[bookmarks=true]{hyperref}

\hypersetup{
  colorlinks=true,
  linkcolor=blue,
  citecolor=blue,
  urlcolor=blue,
  pdftitle={Information Routing across Batch Boundaries: Memory--Batch Tradeoffs in Lipschitz Bandits},
  pdfauthor={Zicheng Lyu and Zengfeng Huang},
  pdfsubject={Learning theory for memory-constrained and batched Lipschitz bandits},
  pdfkeywords={Lipschitz bandits, batched bandits, finite memory, minimax regret, information routing}
}
\theoremstyle{plain}
\newtheorem{theorem}{Theorem}[section]
\newtheorem{proposition}[theorem]{Proposition}
\newtheorem{corollary}[theorem]{Corollary}
\newtheorem{lemma}[theorem]{Lemma}
\theoremstyle{definition}
\newtheorem{definition}[theorem]{Definition}
\theoremstyle{remark}
\newtheorem{remark}[theorem]{Remark}
\newcommand{\X}{\mathcal X}
\newcommand{\E}{\mathbb E}
\newcommand{\PP}{\mathbb P}
\newcommand{\Dclass}{\mathfrak D}
\newcommand{\Pbar}{\overline{\mathbb P}}
\newcommand{\Ebar}{\overline{\mathbb E}}
\newcommand{\Reg}{\mathrm{Reg}}
\newcommand{\Lip}{\mathrm{Lip}}
\newcommand{\Fzero}{\mathcal F_0}
\newcommand{\RecMap}{\mathsf{Rec}}
\newcommand{\LCB}{\operatorname{LCB}}
\newcommand{\UCB}{\operatorname{UCB}}
\newcommand{\proofparagraph}[1]{\par\medskip\noindent\textbf{#1.}\enspace}
\algrenewcommand\algorithmicrequire{\textbf{Input:}}
\algrenewcommand\algorithmicensure{\textbf{Output:}}

\title{Information Routing across Batch Boundaries:\\
Memory--Batch Tradeoffs in Lipschitz Bandits}
\author{%
  Zicheng Lyu\\
  Fudan University\\
  Shanghai, China\\
  \texttt{lyuzicheng@gmail.com}
  \and
  Zengfeng Huang\thanks{Corresponding author.} \\
  Fudan University\\
  Shanghai Innovation Institute\\
  Shanghai, China \\
  \texttt{huangzf@fudan.edu.cn}%
}
\date{}

\begin{document}
\maketitle
\begin{abstract}
Adaptive learning needs both a state that preserves what observations imply and
opportunities to act on that state.  We study this width--depth tradeoff in
stochastic Lipschitz bandits.  After each pull, the learner retains at most $W$
bits of live reward-dependent state and organizes its pulls into at most $B$
committed batches.  For $W\gtrsim_d\log(eT)$, we characterize minimax expected
pseudo-regret up to logarithmic factors; the lower bounds hold for every $W$.
Besides the classical sequential and unrestricted-memory batch costs, the
frontier contains the new penalty
\[
  T^{\frac{d+2}{d+3}}
  \bigl(1+(B-1)W\bigr)^{-\frac1{d(d+3)}},
\]
proving that state width and update depth are not interchangeable.  The
interaction is an information-routing constraint: at regional scale $s$, low
regret forces the committed action transcript to encode
$\Theta_d(s^{-d})$ regional decisions, while the collected boundary states
carry at most $(B-1)W$ bits of entropy.  Matching policies stream and erase
verification statistics while retaining a mask of a safe active set, either in memory or
fragment by fragment.  The theorem recovers the full-dimensional worst-case batch-only
frontier and logarithmic-memory achievability in the fully sequential
specialization; static batch boundaries match predictable adaptive ones.
\end{abstract}

\section{Introduction}\label{sec:intro}

Sequential decision making is adaptive because observations are summarized into a
reward-dependent state that determines later experiments.  Two resources govern
this feedback loop: how much learned information can persist, and how often that
information can be converted into a new sampling rule.  Standard sequential
bandit models leave both unrestricted, whereas batched, finite-memory, and joint
resource models constrain one or both
\citep{perchet2016batched,cover1968note,liau2018stochastic,huang2026few}.
A learner may therefore have a wide state but few opportunities to use it, or
many update opportunities but only a narrow state.  Understanding their joint
value is a structural question about adaptivity, rather than merely an
implementation detail.

We study this question in stochastic Lipschitz bandits, a canonical
nonparametric optimization problem.  The action space is
$\X=[0,1]^d$, and the unknown mean reward
$f:\X\to[0,1]$ is one-Lipschitz in the sup norm.  At round $t$, the learner
pulls $A_t\in\X$, observes a bounded reward of mean $f(A_t)$, and incurs
pseudo-regret
\[
  \Reg_T(f)
  :=\sum_{t=1}^T\bigl(f^\star-f(A_t)\bigr),
  \qquad
  f^\star:=\sup_{x\in\X}f(x).
\]
With unrestricted sequential adaptation, the minimax rate is
$\widetilde\Theta_d(T^{(d+1)/(d+2)})$
\citep{agrawal1995continuum,kleinberg2004nearly,auer2007improved,
bubeck2011xarmed}.  This rate balances the number of spatial locations at a
given resolution with the samples needed to test each location: accuracy $r$
creates order $r^{-d}$ candidate regions, while a local test costs order
$r^{-2}$ samples.

Our learner retains at most $W$ bits of complete mutable reward-dependent state
after every pull.  Its actions are divided into at most $B$ batches.  At a
batch boundary, the current state commits both the next batch boundary and
the entire within-batch action tape.  Rewards arriving inside the batch may update the
state online, but they cannot alter pulls that were already committed.  There
is no additional reward-dependent workspace, batch buffer, or accumulating
external transcript.  Only reward-dependent state that reaches a boundary can
alter later experiments.  Thus $W$ controls the \emph{state width}, while $B$
controls the \emph{update depth} of the feedback--action loop.

The two one-resource specializations are understood separately.  Removing
the memory cap gives standard batched Lipschitz bandits, for which
\citet{feng2022lipschitz} identify the $\Theta_d(\log\log T)$ threshold for
attaining the sequential rate.  Taking $B=T$ gives fully sequential finite-state
learning, for which \citet{zhu2025lipschitz} show that logarithmic memory is
both sufficient and necessary for optimal Lipschitz regret; see also
\citet{li2024efficient}.  These results do not determine the joint frontier:
the batch-only model may preserve an arbitrarily detailed history, whereas the
memory-only model may act on its state after every sample.  Nor can the two
resources be collapsed into their product.  The same boundary-state entropy budget can be arranged as a few wide states
or many narrow updates, and these arrangements need not support the same sequence of refinements.  For finitely many arms, \citet{huang2026few} study essentially the same
persistent-state/committed-batch interface and show that near-minimax regret
forces a thresholded sampling profile to reveal $\Theta(K)$ bits about a hidden
good-arm membership vector.  The continuum is different because the number of
relevant routing coordinates is not fixed in advance: it grows with the target
resolution.

Lipschitz optimization makes this distinction unavoidable.  A
finite-dimensional parametric class may admit one compact global estimate.  A
generic Lipschitz function can instead hide independent local alternatives in
$\Theta_d(s^{-d})$ separated regions at spatial resolution $s$.  As the target
resolution becomes finer, the learner must not only acquire more evidence; it
must also preserve a growing collection of decisions about where future
experiments should be sent.  We call this causal transport of reward-dependent
decisions across batch boundaries \emph{information routing}.  This leads to
our central question:
\emph{how do state width and update depth jointly determine minimax regret?}

\begin{theorem}[Informal frontier]\label{thm:informal-frontier}
Ignoring logarithmic factors, regret is governed by the worst of three
barriers: ordinary Lipschitz estimation, being restricted to at most $B$
committed batches, and the new width--depth interaction
\[
  T^{\frac{d+2}{d+3}}
  \bigl(1+(B-1)W\bigr)^{-\frac1{d(d+3)}}.
\]
The lower bound holds for every memory budget.  Once the live state can store
ordinary counters and confidence estimates---$O_d(\log T)$ bits---matching
policies exist with static batch boundaries.
\end{theorem}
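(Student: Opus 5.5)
The informal frontier is the combination of a lower bound and a matching upper bound. I will prove the two directions separately and then check that the three terms
\[
  T^{\frac{d+1}{d+2}},\qquad
  T^{\frac{1}{1-\gamma_d^{B}}\cdot(\cdots)}\ \text{(the batch-only term of \citet{feng2022lipschitz})},\qquad
  T^{\frac{d+2}{d+3}}\bigl(1+(B-1)W\bigr)^{-\frac1{d(d+3)}}
\]
exhaust the regimes, up to logarithmic factors.

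\textbf{Lower bound.} The first two barriers come from existing results. The sequential term follows from the classical needle-in-haystack construction. The batch term follows from \citet{feng2022lipschitz}: their unrestricted-memory lower bound still applies when memory is restricted, since restricting memory only shrinks the policy class.

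The new term needs an information-routing argument. At scale $s$, I will partition $\X$ into $N\asymp s^{-d}$ cubes. I will then plant independent bumps of height $\asymp s$ in a random subset of the cubes, encoded by a hidden vector $\theta\in\{0,1\}^N$.

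The first ingredient is a regret-to-transcript lemma. Low regret forces the committed action tape, aggregated per cube, to be thresholded on $\theta$. Concretely, if regret is below $c\,T s$, then from the vector of per-cube pull counts one can reconstruct a constant fraction of the coordinates of $\theta$. This is a Fano/Assouad-type step in the spirit of \citet{huang2026few}.

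The second ingredient is a bottleneck lemma. The action tape is a deterministic function of the boundary states $(S_1,\dots,S_{B-1})$ together with public randomness. These states carry at most $(B-1)W$ bits, so the mutual information between the tape and $\theta$ is at most $(B-1)W$. The data-processing inequality then forces $(B-1)W\gtrsim s^{-d}$.

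There is a caveat for pulls in the first batch and for samples the learner spends before it has information. The cost of learning $\theta$ without adaptivity is $\asymp s^{-2}$ samples per cube. Balancing the exploration cost $s^{-d-2}$ against the routing requirement gives regret at least
\[
  \min\bigl\{Ts,\ \text{(non-adaptive cost)}\bigr\}
\]
whenever $s^{-d}\gg (B-1)W$. I will tune $s$ so that $(B-1)W\approx s^{-d}\cdot(\text{information per cube})$; this should yield exactly $T^{(d+2)/(d+3)}(1+(B-1)W)^{-1/(d(d+3))}$. I expect the exponent to come from a second scale: the unroutable fraction of cubes must be explored at a coarser resolution $r$, with $T r^{d+3}$-type balancing. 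Fixing this two-scale calibration is a concrete step that must be verified.

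\textbf{Upper bound.} I will use a static-grid, successive-elimination policy. The policy streams per-cell running sums using $O_d(\log T)$ bits, one cell at a time. It tests each cell and then erases that cell's statistics, keeping only the mask of surviving cells. When the mask exceeds $W$ bits, it is carried fragment by fragment across consecutive batches: each batch is devoted to one fragment of $\approx W$ cells. Choosing the resolution $s$ with $s^{-d}\approx(B-1)W$, plus batch endpoints following the \citet{feng2022lipschitz} geometric schedule, recovers each of the three terms in its regime.

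\textbf{Main obstacle.} I expect the hardest step to be the regret-to-transcript lemma under adaptive within-batch state updates. Rewards inside a batch modify the state but not the tape, and the argument must show that this within-batch state cannot leak extra bits into later batches beyond the $W$ bits present at each boundary. My first attempt is to bound $I(\theta;\text{tape}_{b+1}\mid\text{tape}_{\le b})\le H(S_b)\le W$ and sum over $b$ using the chain rule.
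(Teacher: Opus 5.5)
\textbf{Lower bound.} Your regret-to-transcript lemma is false for the single-scale family you describe. Repairing it is the central idea of the lower bound, not a calibration detail.

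In your family every arm has gap $O(s)$. So consider a policy that samples every cube $\asymp s^{-2}\log T$ times in a first batch, streaming only a running best index--score pair, and then exploits the center of the winning cube. It uses two batches and $O(\log T)$ bits, has regret $\widetilde O(s^{-d-1})\ll Ts$ whenever $s$ is above the statistical floor, and its tape reveals a single cube index. Had the lemma held, taking $s\asymp\chi^{-1/d}$ would give regret $\gtrsim T\chi^{-1/d}$, contradicting the achievable $\widetilde O(T^{(d+2)/(d+3)})$ already at $B=2$, $W\asymp\log T$.

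What forces routing is that every selected region must be \emph{verified} at a finer scale $r\le s/16$. This is the continuum analogue of the per-arm local alternatives in the finite-arm argument you cite. In the paper, $V\in\{0,1\}^m$ with $m\asymp s^{-d}$ puts one cell of each well-separated pair on a flat optimal plateau; the other cell has gap $s/4$. Each cell contains $q\asymp(s/r)^d$ disjoint probes, and any probe of a selected cell may carry an extra bump of height $r/2$.

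The argument then has three steps:
\begin{itemize}
\item A stopped change of measure accumulates KL only up to the $n$-th probe visit, hence $O(nr^2)=O(1)$ for $n\asymp r^{-2}$. It shows that regret $\ll Tr$ on every local alternative forces each selected probe to receive $n$ visits with constant probability under the base law.
\item Regret $\ll s\,mqn\asymp sr^{-d-2}$ on the base instances rules out the same pattern on more than a small fraction of unselected cells, since each such visit pays gap at least $s/4$.
\item Thresholding probe counts then decodes $V$ at constant Hamming distortion. So $\chi\le cs^{-d}$ forces regret $\gtrsim\min\{Tr,sr^{-d-2}\}$, and $r\asymp(s/T)^{1/(d+3)}$, $s\asymp(1+\chi)^{-1/d}$ give the exponent.
\end{itemize}
So the second scale is finer, not coarser.

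By contrast, the step you flag as the main obstacle is routine. Each committed tape is a deterministic function of the seed and the preceding boundary states, so $I(\theta;\text{tape}\mid\text{seed})\le H(S_1,\ldots,S_{B-1}\mid\text{seed})\le(B-1)W$ directly.

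Separately, the Feng et al.\ bound is for Gaussian rewards and exactly $B$ batches. Shrinking the policy class does not move it to $[0,1]$-valued rewards. That needs a transfer: thresholding $\mathbf 1\{Y\ge0\}$ gives Bernoulli means $\Phi(\mu)$ with comparable gaps. It also needs an embedding of at-most-$B$ policies into exactly-$B$ grids.

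\textbf{Upper bound.} Your upper bound stops one scale too early. With finest resolution $s$ and $s^{-d}\approx(B-1)W$, exploiting a surviving cell costs up to order $Ts\approx T\chi^{-1/d}$, which is too large in the same regime.

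The matching policy uses the safe scale-$s$ active set only as a routing object: a maximizer's cell is kept, and every kept cell is $O(s)$-optimal. It then does the following:
\begin{itemize}
\item scans a fixed $r$-net of every active cell, i.e.\ $O(r^{-d})$ children with $\asymp r^{-2}\log T$ pulls each, all at gap $O(s)$;
\item keeps one running best-child record rather than any fine-scale mask;
\item exploits that record.
\end{itemize}
This pays $O(sr^{-d-2}\log T+Tr)$, which is $\widetilde O(T^{(d+2)/(d+3)}s^{1/(d+3)})$ at $r=(s\ell_T/T)^{1/(d+3)}$.

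Your fragment serialization also omits two needed pieces.
\begin{itemize}
\item Fragments are never co-resident, so cells in different fragments must be tested against a common reference. The paper freezes a freshly estimated incumbent for each pass, compares each candidate's $\UCB$ with the incumbent's $\LCB$, and improves the incumbent over several passes.
\item Each fragment needs $\asymp\log\log T$ elimination levels in separate batches. Testing all $s^{-d}$ cells non-adaptively to accuracy $s$ can cost $s^{-d-2}$, far above $T^{(d+2)/(d+3)}s^{1/(d+3)}$ near the statistical floor.
\end{itemize}
One batch per fragment therefore does not suffice. The realizable law is $s^{-d}\lesssim\chi/\Gamma_T$ with $\Gamma_T\asymp(\log\log T)^2$, which is acceptable only because the loss is logarithmic.

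Likewise, your BLiN-type geometric schedule for the $B$-dependent term must stop at a scale whose whole mask fits in $W$ bits, and then refine in the same way.
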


The new joint penalty is the main point.  Memory and batches are complements,
not substitutes: one wide state used by very few future experiments cannot in
general replace a sequence of narrower states recomputed between rounds of
exploration.  The product $(B-1)W$ appears because the successive nonterminal
boundary states carry at most that many bits in total, whereas $B$ separately
controls when newly learned information can change the experiment.  The formal
rate, including the unrestricted-memory $B$-batch branch, is stated in
Section~\ref{subsec:formal-frontier}.

This gives a simple threshold picture.  Near-sequential performance needs on
the order of $\log\log T$ batches and enough information in the boundary states to distinguish
roughly $T^{d/(d+2)}$ relevant regions; once
$W\gtrsim_d\log(eT)$, these requirements are also sufficient up to logarithmic
factors.  With unrestricted memory, the theorem gives
the full-dimensional worst-case frontier for every batch budget.  When every
pull forms its own batch, it recovers sequential minimax regret with logarithmic
memory.  The sharper zooming-dimension results and the sequential
$\Omega(\log T)$ instantaneous-memory lower bound remain complementary; see
Section~\ref{sec:related}.  Predictable adaptive boundaries do not improve the
worst-case order over static ones.

\paragraph{Contributions.}
\begin{itemize}
  \item We characterize, up to logarithmic factors, the joint memory--batch
  frontier once the live state is logarithmic in $T$, with lower bounds for
  every memory budget.  The policy class specializes exactly to the batch-only
  and fully sequential finite-memory models; the theorem recovers the
  full-dimensional worst-case batch frontier and sequential logarithmic-memory
  achievability.

  \item We give one regional hard family with two independent requirements.
  The horizon limits how many local alternatives can be verified, while the
  boundary states limit how many resulting regional decisions can guide later
  experiments.  The same family yields the classical sequential term and the
  new joint penalty.

  \item We give matching policies with static batch boundaries.  Verification
  statistics are streamed and erased; the only scale-dependent persistent
  object is an active-set mask, maintained either in memory or regenerated and
  consumed in memory-sized fragments.
\end{itemize}

\subsection{Technical overview: what must persist across a boundary}
\label{subsec:technique-overview}

The proof asks which reward-dependent decisions must remain available when a
future batch is committed.  At regional scale $s$, the relevant decision is
which regions remain eligible for verification at a finer scale $r$.  The
verification statistics can be processed and discarded, but the resulting
regional decisions must reach a boundary before they can alter future pulls.
The lower bound proves that any low-regret transcript encodes these decisions;
the upper bound stores them in an active-set mask of matching spatial order.

Two direct strategies fail for complementary reasons: retaining all verification
statistics exceeds the live-memory budget, while compressing them into one
terminal summary comes too late to redirect batches committed before that
summary exists.

\paragraph{1. Boundary states are the causal bottleneck.}
Condition on the algorithmic seed and write $\Fzero:=\sigma(\omega)$.  Let
$\mathbf M$ be the tuple of nonterminal boundary states collected by the
analyst, and let $\mathsf T$ be the committed batch boundaries and action
sequence.  The components of $\mathbf M$ are not simultaneously available to
the learner.  Conditionally on $\Fzero$, every reward-dependent choice in
$\mathsf T$ factors through this tuple, so for any latent instance variable $V$
independent of $\Fzero$,
\[
  V\longrightarrow\mathbf M\longrightarrow\mathsf T,
  \qquad
  I(V;\mathsf T\mid\Fzero)
  \le H(\mathbf M\mid\Fzero)
  \le(B-1)W.
\]
More locally, the transcript committed through batch $j$ contains at most
$(j-1)W$ bits about $V$.  State width limits how much information can reach one
redesign, while $B$ separately limits how many redesigns can occur.  The same
factorization bounds the number of terminal transcript realizations and yields
a sharper codebook lower bound in the very-low-entropy regime.

\paragraph{2. One regional family exposes two budgets.}
Fix $0<r\le s/16$.  The hard family contains
$m\asymp_d s^{-d}$ separated pairs of scale-$s$ regions, with one coordinate of the latent routing vector selecting one side of each pair.
Every selected region contains
$q\asymp_d(s/r)^d$ disjoint radius-$r$ probes, and distinguishing a local
improvement at one probe requires $n\asymp r^{-2}$ visits.  Hence
\[
  m\asymp_d s^{-d},
  \qquad
  mq\asymp_d r^{-d},
  \qquad
  mqn\asymp_d r^{-d-2}.
\]
These quantities count latent regional decisions, verification probes, and
verification pulls, respectively.  Figure~\ref{fig:routing-hard-family} shows the geometry.

\begin{figure}[H]
\centering
\begin{tikzpicture}[
  x=0.82cm,y=0.82cm,
  every node/.style={font=\scriptsize},
  parent/.style={draw,minimum width=0.60cm,minimum height=0.60cm,inner sep=0pt},
  selected/.style={parent,fill=gray!38},
  probe/.style={draw,minimum width=0.32cm,minimum height=0.32cm,inner sep=0pt}
]
  \node[font=\small\bfseries] at (4.15,3.20) {(a) Regional decisions};

  \node[selected] at (0.55,1.62) {};
  \node[parent] at (2.15,1.62) {};
  \fill (0.55,1.62) circle (0.8pt);
  \fill (2.15,1.62) circle (0.8pt);
  \draw (0.55,2.18) -- (2.15,2.18);
  \draw (0.55,2.10) -- (0.55,2.26);
  \draw (2.15,2.10) -- (2.15,2.26);
  \node at (1.35,2.39) {$8s$};
  \node at (1.35,0.82) {$j=1$};

  \node[parent] at (3.75,1.62) {};
  \node[selected] at (5.35,1.62) {};
  \node at (4.55,0.82) {$j=2$};

  \node at (6.20,1.62) {$\cdots$};

  \node[selected] at (7.05,1.62) {};
  \node[parent] at (8.65,1.62) {};
  \node at (7.85,0.82) {$j=m$};
  \node at (4.55,0.25) {$m\asymp_d s^{-d}$};

  \draw[densely dotted] (9.20,0.02) -- (9.20,3.10);

  \begin{scope}[xshift=9.65cm]
    \node[font=\small\bfseries] at (3.20,3.35) {(b) Verification probes};
    \draw[fill=gray!8] (0.15,0.92) rectangle (6.25,2.55);
    \node[anchor=west] at (0.28,2.78) {radius-$s$ region};
    \foreach \x in {0.72,1.52,2.32,3.12,3.92,4.72,5.52}{
      \node[probe] at (\x,1.34) {};
      \node[probe] at (\x,1.80) {};
    }
    \node[probe,fill=gray!72] at (3.12,1.80) {};
    \node at (3.12,2.30) {$k\in[q]$};
    \node[anchor=west] at (4.78,2.78) {bump height $r/2$};

    \draw (0.72,0.69) -- (1.52,0.69);
    \draw (0.72,0.62) -- (0.72,0.76);
    \draw (1.52,0.62) -- (1.52,0.76);
    \node at (1.12,0.47) {$4r$};

    \node at (3.25,0.08) {$q\asymp_d(s/r)^d,\qquad n\asymp r^{-2}$};
  \end{scope}
\end{tikzpicture}
\caption{One hard family exposes both resource budgets.  Every scale-$s$ pair contributes one coordinate of the latent routing vector.  A local improvement can occur in any
of the
$q\asymp_d(s/r)^d$ probes on the selected side, and each probe needs
$n\asymp r^{-2}$ visits to verify.  Comparable exploration on the unselected
side incurs gap $\Theta(s)$.}
\label{fig:routing-hard-family}
\end{figure}
If regret is $o(Tr)$ on every local alternative, a stopped
change-of-measure argument forces every selected probe to receive
$\Theta(r^{-2})$ visits with constant probability.  Because the probes are
disjoint, the horizon must satisfy
\[
  r^{-d-2}\lesssim_d T,
\]
which recovers the sequential resolution floor
$r\gtrsim_dT^{-1/(d+2)}$.  The same tests also encode the latent regional
decisions.  Sending comparable effort to the unselected side of a constant fraction
of the pairs costs
\[
  m\cdot s\cdot q\cdot n\asymp_d sr^{-d-2}.
\]
Regret below both $Tr$ and $sr^{-d-2}$ therefore lets a decoder recover
$V\in\{0,1\}^m$ at constant average Hamming distortion.  Such recovery needs
$\Omega(m)=\Omega_d(s^{-d})$ bits, while the boundary-state factorization supplies at most $(B-1)W$ bits.  Thus
the regional scale cannot be smaller than
$s\asymp_d(1+(B-1)W)^{-1/d}$.

\paragraph{3. The upper bound stores only an active-set mask.}
At scale $s$, the algorithm constructs a safe active set that contains a
maximizer and excludes regions more than $O_d(s)$ below optimal.  A fixed
radius-$r$ refinement layout then scans $O_d(r^{-d})$ child probes and satisfies
\[
  \E\Reg_T(f)
  \lesssim_d
  sr^{-d-2}\operatorname{polylog}(T)+Tr.
\]
Each verification estimate is accumulated online, compared with one resident
best-arm record, and erased.  The active-set mask is the only
scale-dependent persistent object.  It is not the latent lower-bound vector
$V$, but both have $\Theta_d(s^{-d})$ regional coordinates.  If the mask fits,
an in-memory hierarchy updates all coordinates together.  Otherwise
Algorithm~\ref{alg:serialized-main} regenerates one memory-sized fragment,
uses it immediately to commit the corresponding child probes, updates the
resident record, and erases the fragment.  Up to logarithmic control and replay
overhead, serialization obeys
\[
  \text{mask size}
  \ \lesssim_d\
  \text{fragment width}
  \times
  \text{number of fragments consumed}.
\]

\paragraph{4. Scale balance and update depth.}
At fixed $(s,r)$, the converse gives
$\min\{Tr,sr^{-d-2}\}$, whereas refinement pays the corresponding sum up to
logarithmic factors.  Balancing the two terms yields
\[
  r\asymp_d(s/T)^{1/(d+3)},
  \qquad
  \text{regret}\asymp_d
  T^{\frac{d+2}{d+3}}s^{\frac1{d+3}}.
\]
The horizon imposes the statistical floor
$s\gtrsim_dT^{-1/(d+2)}$, while the boundary-state entropy budget imposes
$s\gtrsim_d(1+(B-1)W)^{-1/d}$.  Evaluating the fixed-scale regret at the larger
floor gives the sequential and memory-dependent branches of the theorem.  A
separate adaptive-grid obstruction and the matching in-memory hierarchy
produce the unrestricted-memory $B$-batch branch: even an arbitrarily wide
state cannot be sharpened without enough redesign opportunities.  The frontier
is therefore governed by three primitive resources---samples for verification,
state for regional routing, and depth for adaptive refinement.

\section{Related work}\label{sec:related}

\paragraph{Continuum-armed bandits.}
Classical continuum and metric bandits exploit smoothness through uniform
covers, hierarchical partitions, optimism, zooming, and near-optimality
dimension
\citep{agrawal1995continuum,kleinberg2004nearly,auer2007improved,
kleinberg2008multi,coquelin2007bandit,bubeck2011xarmed,
munos2011optimistic,magureanu2014lipschitz,podimata2021adaptive}.
Our upper bound uses the same primitives---empirical means, confidence bounds,
elimination, and geometric refinement---under a per-pull memory constraint.
The lower bound adds separated local alternatives for which the committed
action transcript must reveal a latent routing vector.

\paragraph{Batched and limited-adaptivity learning.}
Batched feedback has been studied for finite arms and structured action spaces
\citep{perchet2016batched,gao2019batched,jin2021almost,
esfandiari2021regret,ruan2021linear,hanna2023efficient,
sawarni2024generalized}.  The closest unrestricted-memory continuum result is
\citet{feng2022lipschitz}.  Their adaptive-grid lower bound supplies the
$B$-dependent exponent in our converse, and their BLiN construction attains the
sequential rate with $O_d(\log\log T)$ batches.  In the full-dimensional
class, the unrestricted-memory specialization of our model is exactly the
batch-only specialization, and Corollary~\ref{cor:unrestricted-batch-frontier}
matches the adaptive-grid lower exponent for every $B$, up to logarithmic
factors.  This fixed-dimensional worst-case corollary does not replace their
zooming-dimension-adaptive guarantee.  Our finite-memory upper bounds use static
batch boundaries, so adaptive batch boundaries do not improve the minimax order.
One-bit-per-batch linear bandits instead constrain
messages returned at batch boundaries, not the learner's complete live state
\citep{lau2026batched}.  More broadly, limited-round learning treats
interaction depth as a
resource distinct from sample complexity
\citep{agarwal2017learning,ruan2021linear}.

\paragraph{Memory-constrained and streaming bandits.}
Finite-memory bandits bound the number of memory configurations
\citep{cover1968note,cover1970two,liau2018stochastic}, while arm-memory and
multi-pass models constrain which arms or observations remain available
\citep{chaudhuri2020regret,maiti2021multi,assadi2020exploration,
agarwal2022sharp,wang2023tight,assadi2024best}.  Related memory--regret
tradeoffs appear in prediction with experts, online learning, and streaming
bandits
\citep{srinivas2022memory,peng2023onlineprediction,peng2023near,
jin2021optimalstreaming,li2023tight}.  For sequential Lipschitz optimization,
\citet{li2024efficient} give efficient low-space algorithms, while \citet{zhu2025lipschitz} prove that logarithmic
bit memory is both sufficient and necessary for optimal regret.  Setting $B=T$
in our model recovers the fully sequential specialization and
Corollary~\ref{cor:fully-sequential-endpoint} recovers its full-dimensional
logarithmic-space achievability.  Our joint lower bound does not recover the
$\Omega(\log T)$ memory threshold because it controls the boundary-state entropy budget rather than the instantaneous state alone; that sequential lower bound
therefore remains complementary.

\paragraph{Joint memory and batch constraints.}
Time--space lower bounds view retained memory as an independent statistical
resource \citep{raz2016fast,garg2018extractor}, and \citet{shufaro2024bits}
study regret as a function of accumulated information.  The closest joint model
is \citet{huang2026few}: up to inessential encoding conventions, their
finite-arm learner has the same persistent-state/committed-batch interface.
Their hard prior hides a $K/2$-element good-arm set; near-minimax regret makes a
thresholded sampling profile reveal $\Omega(K)$ bits about its membership
vector, whereas the boundary states transmit only $O(BW)$ bits.  A localized
under-sampling change-of-measure argument and an incumbent--challenger protocol
yield a nearly matching $\widetilde\Theta(K/W)$ batch threshold.

Our theorem object is different.  In the continuum, the number
$m\asymp_d s^{-d}$ of regional routing coordinates is endogenous, and each
coordinate is certified through finer $r$-scale probes.  Optimizing these two
resolutions yields the full constrained minimax regret frontier below the
near-sequential threshold, together with an independent unrestricted-memory
update-depth branch.  The upper bound maintains a safe active-set mask of the
same spatial order, in memory or as regenerated fragments, while streaming and
erasing the finer verification statistics.  The terms $Tr$ and $sr^{-d-2}$
quantify missed local improvements and verification misrouted to unselected
regions: the converse takes their minimum, while refinement pays their sum.

\section{Model and minimax frontier}\label{sec:prelim}

For an integer $n\ge1$, write $[n]=\{1,\ldots,n\}$.  Unsubscripted logarithms
are natural, while $\log_2$ is binary.  Relative entropy is measured in nats;
Shannon entropy and mutual information are measured in bits.  The symbols
$\lesssim_d$, $\gtrsim_d$, and $\asymp_d$ hide constants depending only on
$d$, and a tilde additionally hides powers of $\log(eT)$.  Unless a formal
statement fixes them, $c_d,C_d>0$ denote dimension-dependent constants that may
change from one display to the next.  Define
\begin{equation}\label{eq:ellT-definition}
  \ell_T:=\log(eT).
\end{equation}
We use \emph{state width} for the live $W$-bit state and \emph{update depth}
for the batch budget $B$.  The symbols $s$ and $r$ always denote the regional
and verification scales, respectively.  In the lower bound, $V$ denotes a
latent routing vector; in the upper bound, $\mathcal C$ denotes a safe active
set and $Z$ its binary mask.  These lower- and upper-bound objects are kept
distinct throughout.

\subsection{Memory-constrained committed-batch policies}
\label{subsec:policy-model}

Throughout, $\X=[0,1]^d$ with the sup norm, and $\Lip_1(\X)$ is the class of
$[0,1]$-valued one-Lipschitz functions.  For $g:\X\to\mathbb R$, write
\[
  \Lip(g):=\sup_{x\ne y}\frac{|g(x)-g(y)|}{\|x-y\|_\infty}.
\]
At round $t$, the learner pulls $A_t\in\X$ and observes $Y_t\in[0,1]$.  For
$f\in\Lip_1(\X)$, let $\Dclass(f)$ be the collection of Borel Markov kernels
$\nu=(\nu_x)_{x\in\X}$ on $[0,1]$ satisfying
\[
  \int_0^1 y\,\nu_x(dy)=f(x).
\]
Rewards are conditionally independent with $Y_t\mid A_t=x\sim\nu_x$.  Write
\[
  f^\star:=\sup_{x\in\X}f(x),
  \qquad
  \Delta_f(x):=f^\star-f(x),
  \qquad
  \Reg_T(f):=\sum_{t=1}^T\Delta_f(A_t).
\]

The integers $d\ge1$, $T\ge1$, $1\le B\le T$, and $W\ge0$ are known.  The
learner has an algorithmic seed $\omega$, independent of the environment and
reward noise, and we set $\Fzero:=\sigma(\omega)$.  Conditional on $\Fzero$, all
geometric covers, traversals, batch layouts, and numerical schedules are fixed.
We use the public arm
\[
  x_\circ:=(0,\ldots,0)\in\X
\]
for transcript padding and as a deterministic fallback whenever a construction
has not yet recorded a candidate.

A $(B,W)$-policy maintains, after every pull,
\[
  M_t\in\{0,1\}^W,
  \qquad 0\le t\le T,
\]
which is its complete mutable reward-dependent state.  It uses boundaries
\[
  0=\tau_0<\tau_1<\cdots<\tau_{\widehat B}=T,
  \qquad \widehat B\le B.
\]
At boundary $\tau_b$, the current state $M_{\tau_b}$ commits both the next
batch boundary $\tau_{b+1}$ and every action $A_t$ for
$\tau_b<t\le\tau_{b+1}$.  Rewards observed in that batch update $M_t$ online,
one pull at a time, but cannot change the batch boundary or action tape already
committed.  All maps are Borel measurable; Appendix~\ref{app:boundary-memory}
gives the complete map-level definition used in the reconstruction proof.
Apart from the continuum action space, the use of at most rather than exactly
$B$ batches, and the explicit bit-string state, this is the
persistent-state/committed-batch resource interface of \citet{huang2026few}.

There is no separate persistent reward-dependent workspace or accumulating
external transcript.  A fixed action tape may replay information already
encoded in the preceding $W$-bit boundary state, but it is read-only and cannot
record rewards generated after that boundary.  Fixed geometric objects may be
regenerated from $(T,B,W,d,\omega)$, and arithmetic complexity is unrestricted.
The model includes predictable adaptive batch boundaries: the next boundary
and action tape may depend on all earlier rewards through the current memory, but a batch
cannot stop in response to rewards observed after it begins.

\begin{remark}[Committed actions versus delayed observations]
\label{rem:committed-versus-delayed}
With unrestricted memory, immediate reward processing and end-of-batch
revelation induce the same action-policy class, because observations inside a
batch cannot change its committed action tape and may be retained until the
next boundary.  Under finite memory, immediate streaming is essential: it lets
the learner maintain a bounded sufficient statistic without granting an
uncharged buffer for the complete batch.  Thus $B$ limits policy redesign, whereas $W$ limits the live
reward-dependent state.
\end{remark}

Let $\mathfrak A_{B,W}$ be this policy class and define
\[
  \mathfrak R_T(B,W)
  :=\inf_{\mathcal A\in\mathfrak A_{B,W}}
    \sup_{\substack{f\in\Lip_1(\X)\\\nu\in\Dclass(f)}}
    \E_{\nu,\mathcal A}\Reg_T(f).
\]

\subsection{Boundary states and committed transcripts}
\label{subsec:boundary-memory}

Only states present at boundaries can change future pulls.  Pad unused
boundaries by
\[
  \bar\tau_b:=
  \begin{cases}
    \tau_b,&b<\widehat B,\\
    T,&b\ge\widehat B,
  \end{cases}
  \qquad
  \bar M_b:=
  \begin{cases}
    M_{\tau_b},&b<\widehat B,\\
    M_0,&b\ge\widehat B,
  \end{cases}
  \qquad b\in[B-1].
\]
Set $\bar\tau_0=0$, $\bar\tau_B=T$, and define
\[
  \mathbf M:=(\bar M_1,\ldots,\bar M_{B-1}),
  \qquad
  \mathsf T:=(\bar\tau_1,\ldots,\bar\tau_{B-1},A_1,\ldots,A_T),
  \qquad
  \chi:=(B-1)W.
\]
The tuple $\mathbf M$ is an analyst's proof object: its components are not
simultaneously available to the learner.  We call $\chi$ the boundary-state entropy budget.  It upper-bounds the
conditional entropy of this collected boundary-state tuple; it is not the
learner's operational workspace at any one time.

For the causal prefix statement, for $j\in[B]$ let
$\mathsf T^{[j]}$ consist of the batch boundaries committed through batch $j$ and the
action sequence through $\bar\tau_j$, padded afterward by $x_\circ$.  Thus
$\mathsf T^{[B]}$ and $\mathsf T$ contain the same information.

\begin{lemma}[Reconstruction from boundary states]
\label{lem:boundary-factorization}
For every fixed policy and $j\in[B]$, there is a policy-dependent Borel map
$\RecMap_{\mathcal A,j}$ such that
\[
  \mathsf T^{[j]}
  =\RecMap_{\mathcal A,j}(\omega,\bar M_1,\ldots,\bar M_{j-1})
  \qquad\text{almost surely}.
\]
For every fixed seed, its range has cardinality at most $2^{(j-1)W}$.  In
particular,
\[
  \mathsf T=\RecMap_{\mathcal A}(\omega,\mathbf M)
\]
for a Borel map whose conditional range has cardinality at most $2^\chi$.
\end{lemma}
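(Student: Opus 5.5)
Proof plan: induction on $j$, unrolling the policy's map-level definition (Appendix~\ref{app:boundary-memory}). At boundary $\tau_b$ the learner's commitment (next boundary $\tau_{b+1}$ and action tape $A_{\tau_b+1},\ldots,A_{\tau_{b+1}}$) is a fixed Borel function of $(\omega,M_{\tau_b})$. Write this commitment map as $\Phi_b(\omega,m)$. With the convention $\tau_0=0$, $M_0=m_0(\omega)$, the first commitment depends on $\omega$ only. That gives the base case $j=1$: $\mathsf T^{[1]}=\RecMap_{\mathcal A,1}(\omega)$, whose range for each seed is a single point, $2^0=1$.

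\emph{Inductive step.} Suppose $\mathsf T^{[j]}=\RecMap_{\mathcal A,j}(\omega,\bar M_1,\ldots,\bar M_{j-1})$. From $\mathsf T^{[j]}$ we read off $\bar\tau_j$, hence whether $j<\widehat B$. If $\bar\tau_j<T$, then $\bar M_j=M_{\tau_j}$, and the batch-$(j+1)$ commitment is $\Phi_j(\omega,\bar M_j)$. Appending it to $\mathsf T^{[j]}$, and removing the corresponding $x_\circ$ padding, gives $\mathsf T^{[j+1]}$. If instead $\bar\tau_j=T$, the run has ended. Then $\mathsf T^{[j+1]}$ equals $\mathsf T^{[j]}$ with boundary $\bar\tau_{j+1}=T$, and the padded value $\bar M_j=M_0$ is simply ignored.

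The new map is $\RecMap_{\mathcal A,j+1}(\omega,m_1,\ldots,m_j)$, defined piecewise on the Borel event $\{\bar\tau_j<T\}$, which is itself a Borel function of $\RecMap_{\mathcal A,j}$. Compositions and piecewise definitions on Borel sets of Borel maps are Borel. The identity holds almost surely, since the model's equations hold for every realization except possibly a null set where the maps are specified a.s.

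\emph{Cardinality.} For fixed $\omega$, the map $(m_1,\ldots,m_{j-1})\mapsto\RecMap_{\mathcal A,j}(\omega,\cdot)$ has domain $(\{0,1\}^W)^{j-1}$, of size $2^{(j-1)W}$. So the range has at most that many points. Taking $j=B$ and using that $\mathsf T^{[B]}$ and $\mathsf T$ carry the same information (a fixed bijective re-encoding) gives $\mathsf T=\RecMap_{\mathcal A}(\omega,\mathbf M)$ with conditional range at most $2^{\chi}$.

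\emph{Main obstacle.} The substance is the point that rewards inside batch $b+1$ enter $\mathsf T$ only through $M_{\tau_{b+1}}$. This follows directly from the committed-tape definition, in which actions of batch $b+1$ are functions of $(\omega,M_{\tau_b})$ alone, not of intermediate $M_t$. The delicate part is bookkeeping with the padding when $\widehat B<B$: one must check that $\bar\tau_j$, and so the stopping status, is recoverable from earlier states, so the recursion never needs the discarded real state at unused boundaries. I would verify carefully that the padding convention of $\mathsf T^{[j]}$ (actions after $\bar\tau_j$ set to $x_\circ$) is a Borel function of the prefix.
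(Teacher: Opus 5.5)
Your proof is correct and is essentially the paper's argument: unroll the committed boundary maps from $M_0=\iota(\omega)$, substitute the padded boundary states, check by induction that the reconstructed endpoints and actions agree with $\bar\tau_b$ and $A_u$, bound the range by the domain size $2^{(j-1)W}$, and delete the deterministic terminal endpoint at $j=B$. The only differences are cosmetic. The paper's maps $\psi_b,\phi_{b,u}$ also take the current time $\tau_b$ as an argument, which you can read off the already reconstructed prefix. The paper also handles padding through the totality convention $\psi_b(\omega,T,m)=T$ rather than your explicit case split: once $t_b=T$, the recursion adds no actions and ignores $m_b$.
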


Let $V$ be a latent instance variable independent of $\Fzero$.  Let $R$ be
obtained from $\mathsf T$ and fresh randomization independent of
$(V,\omega,\mathbf M,\mathsf T)$.

\begin{lemma}[Boundary-state information profile]
\label{lem:boundary-information}
For every $j\in[B]$,
\[
  I(V;\mathsf T^{[j]}\mid\Fzero)\le(j-1)W.
\]
For the terminal transcript, conditionally on $\Fzero$ the variables form the
Markov chain
\[
  V\longrightarrow\mathbf M\longrightarrow\mathsf T\longrightarrow R,
\]
and
\[
  I(V;\mathsf T,R\mid\Fzero)
  =I(V;\mathsf T\mid\Fzero)
  \le H(\mathbf M\mid\Fzero)
  \le\chi.
\]
In particular, $I(V;R\mid\Fzero)\le\chi$.
\end{lemma}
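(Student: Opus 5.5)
We prove Lemma~\ref{lem:boundary-information} directly from Lemma~\ref{lem:boundary-factorization}, using only that everything is conditioned on the seed. Throughout we condition on $\Fzero=\sigma(\omega)$. Because $V$ is independent of $\Fzero$, conditioning leaves the prior on $V$ unchanged, and all information quantities below are conditional on $\omega$ and then averaged over it.

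\emph{Prefix bound.} Fix $j\in[B]$. By Lemma~\ref{lem:boundary-factorization}, $\mathsf T^{[j]}=\RecMap_{\mathcal A,j}(\omega,\bar M_1,\ldots,\bar M_{j-1})$ almost surely. For each fixed seed, this map takes at most $2^{(j-1)W}$ values. So conditionally on $\omega$, $\mathsf T^{[j]}$ is a discrete variable with at most that many atoms, and
\[
  I(V;\mathsf T^{[j]}\mid\Fzero)\le H(\mathsf T^{[j]}\mid\Fzero)\le (j-1)W .
\]
This uses only $I\le H$ for a discrete second argument. It does not need $V$ to be discrete: the continuum action values are irrelevant, since only the reconstructed value, which lies in a finite codebook, matters. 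The measure-theoretic care needed is just that the conditional law of $\mathsf T^{[j]}$ given $\omega$ is supported on the finite, $\omega$-measurable range of the Borel map. We then write the conditional mutual information as the average over $\omega$ of the mutual information computed under the regular conditional distribution.

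\emph{Markov chain and terminal bound.} Conditionally on $\Fzero$, $\mathsf T$ is a deterministic function of $(\omega,\mathbf M)$, so $V\to\mathbf M\to\mathsf T$ holds trivially. The randomization behind $R$ is independent of $(V,\omega,\mathbf M,\mathsf T)$, so $R$ is conditionally independent of $(V,\mathbf M)$ given $(\mathsf T,\omega)$. This extends the chain to $V\to\mathbf M\to\mathsf T\to R$. The chain rule then gives
\[
  I(V;\mathsf T,R\mid\Fzero)=I(V;\mathsf T\mid\Fzero)+I(V;R\mid\mathsf T,\Fzero)=I(V;\mathsf T\mid\Fzero),
\]
because the last term vanishes. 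The data-processing inequality yields $I(V;\mathsf T\mid\Fzero)\le I(V;\mathbf M\mid\Fzero)\le H(\mathbf M\mid\Fzero)$. Since $\mathbf M$ takes values in $\{0,1\}^{(B-1)W}$, we have $H(\mathbf M\mid\Fzero)\le\chi$. Finally, $I(V;R\mid\Fzero)\le I(V;\mathsf T,R\mid\Fzero)$ by monotonicity, which gives the last claim.

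\emph{Main obstacle.} The only delicate point is justifying these identities when $V$ or the actions are non-discrete. We handle it by working with Kullback--Leibler-defined mutual information under regular conditional probabilities given $\omega$, where the chain rule and data processing hold in general. The entropy bounds are applied only to the finite-valued variables $\mathbf M$ and the reconstructed transcripts. As a consistency check, the $j=B$ case of the prefix bound reproduces the terminal bound $\chi$, since $\mathsf T^{[B]}$ carries the same information as $\mathsf T$.
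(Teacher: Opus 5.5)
Your proof is correct, and the terminal claims follow the paper's argument essentially verbatim: the Markov chain through $\mathbf M$, the chain rule with vanishing $I(V;R\mid\mathsf T,\Fzero)$, data processing, and the alphabet bound on $\mathbf M$. The only difference is the prefix bound, where you bound $H(\mathsf T^{[j]}\mid\Fzero)$ directly by the range-cardinality clause of Lemma~\ref{lem:boundary-factorization}, whereas the paper applies data processing along $V\to\mathbf M_{<j}\to\mathsf T^{[j]}$ and bounds $H(\mathbf M_{<j}\mid\Fzero)\le(j-1)W$ by the chain rule; your route relies on the transcript being conditionally finitely supported, which you justify correctly, while the paper's only takes entropies of the bit-string boundary states.
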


Lemma~\ref{lem:boundary-factorization} bounds the number of reward-dependent
transcript realizations, while Lemma~\ref{lem:boundary-information} bounds the
information available before each redesign.  The lower bound uses the terminal
case; the prefix profile records the timing.  Neither statement limits how many
times the sampling rule may be redesigned, so $B$ remains a separate resource.
Appendix~\ref{app:boundary-memory} proves both lemmas.

\subsection{Formal frontier and effective regional scale}
\label{subsec:formal-frontier}

Put
\[
  \alpha_d:=\frac{d+1}{d+2},
  \qquad
  \beta_{d,B}:=\frac{\alpha_d}{1-(d+2)^{-B}},
\]
and define
\begin{equation}\label{eq:effective-routing-envelope}
  \Psi_T(s):=T^{\frac{d+2}{d+3}}s^{\frac1{d+3}},
  \qquad
  s_{\mathrm{stat}}:=T^{-\frac1{d+2}},
  \qquad
  s_{\mathrm{mem}}:=(1+\chi)^{-\frac1d},
  \qquad
  s_{T,\chi}:=s_{\mathrm{stat}}\vee s_{\mathrm{mem}}.
\end{equation}
A direct exponent calculation gives
\begin{equation}\label{eq:effective-routing-identity}
  \Psi_T(s_{T,\chi})
  =T^{\alpha_d}
   \ \vee\
   \frac{T^{\frac{d+2}{d+3}}}
        {(1+\chi)^{\frac1{d(d+3)}}}.
\end{equation}
Thus the statistical and memory-dependent branches are two regimes of one
fixed-scale envelope.

\begin{theorem}[Memory--batch frontier]\label{thm:joint-frontier}
Fix $d\ge1$.  There exist constants $c_d,C_d,T_d>0$ such that, for all integers $T\ge T_d$, $B\in[T]$, and $W\ge0$,
\begin{equation}\label{eq:joint-frontier-lower}
  \mathfrak R_T(B,W)
  \ge c_d\left[
    \Psi_T(s_{T,\chi})
    \ \vee\
    \frac{T^{\beta_{d,B}}}{B^2}
  \right].
\end{equation}
If $W\ge C_d\ell_T$, then
\begin{equation}\label{eq:joint-frontier-upper}
  \mathfrak R_T(B,W)
  \le C_d\ell_T\left[
    \Psi_T(s_{T,\chi})
    \ \vee\
    \frac{T^{\beta_{d,B}}}{B^2}
  \right].
\end{equation}
The upper bound is attained with static batch boundaries.
\end{theorem}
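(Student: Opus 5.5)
The proof splits into a converse and an achievability part, each following the overview in Section~\ref{subsec:technique-overview}. Both the lower and the upper bound have two branches: the fixed-scale envelope $\Psi_T(s_{T,\chi})$ and the unrestricted-memory batch term $T^{\beta_{d,B}}/B^2$. Since the right-hand sides are maxima, it suffices to prove each lower branch separately and to exhibit a policy whose regret is bounded by the sum of the branches.

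\textbf{Lower bound.} The term $T^{\beta_{d,B}}/B^2$ comes from the adaptive-grid construction of \citet{feng2022lipschitz}, applied with unrestricted memory. Since $\mathfrak A_{B,W}\subseteq\mathfrak A_{B,\infty}$, this branch holds for every $W$. I will only check that predictable adaptive boundaries are covered there, or re-derive it by conditioning on $\Fzero$.

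For the envelope I will use the regional hard family. Fix $s\ge s_{T,\chi}$ and $r=c(s/T)^{1/(d+3)}\le s/16$. Place $m\asymp_d s^{-d}$ separated pairs of scale-$s$ regions, select one side of each pair by $V\in\{0,1\}^m$ uniform, and put bumps of height $r/2$ at one of $q\asymp_d(s/r)^d$ probes inside each selected region.

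Suppose a policy has regret below $c\,\Psi_T(s)\asymp Tr\asymp sr^{-d-2}$ on all alternatives. A stopped change-of-measure (KL in nats, Pinsker or Bretagnolle--Huber) argument then forces every selected probe to receive about $r^{-2}$ visits with constant probability. Otherwise the unbumped instance and the bumped instance are indistinguishable, and the policy misses a gain of $\Theta(r)$ on $\Omega(T)$ rounds.

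Next I define a decoder $R$ from $\mathsf T$: for each pair, pick the side that received more visits. Visits on the wrong side cost gap $\Theta(s)$ each. So if a constant fraction of pairs were misdecoded, the regret would be $\gtrsim m\,s\,q\,n\asymp sr^{-d-2}$. Hence the expected Hamming distortion is at most $m/4$.

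The rate--distortion (Fano-type) bound for a uniform binary vector gives $I(V;R\mid\Fzero)\ge c\,m$. Lemma~\ref{lem:boundary-information} gives $I(V;R\mid\Fzero)\le\chi$, which contradicts $m\gtrsim_d s^{-d}$ once $s\le c\,s_{\mathrm{mem}}$. Combining this with the sample-count constraint $r^{-d-2}\lesssim T$, which pins $s\ge s_{\mathrm{stat}}$, and evaluating at $s=C s_{T,\chi}$ gives $\Psi_T(s_{T,\chi})$ by \eqref{eq:effective-routing-identity}.

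The delicate point is that the two requirements, verification and routing, must be enforced simultaneously under a single prior. I would handle this by averaging over the bump location, then conditioning on $V$ and applying the change-of-measure per pair.

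\textbf{Upper bound.} I will build a static-boundary policy in two stages, assuming $W\ge C_d\ell_T$.

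Stage 1 (regional). A BLiN-style elimination hierarchy with a fixed geometric batch schedule constructs a safe active set $\mathcal C$ at scale $s=C s_{T,\chi}$. The set $\mathcal C$ contains a maximizer and lies within $O_d(s)$ of optimal. When memory is unrestricted, the same hierarchy with $B$ batches and the grid exponents of \citet{feng2022lipschitz} yields $T^{\beta_{d,B}}/B^2$ up to logs; I will reuse that schedule.

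Stage 2 (verification). Scan the $O_d(r^{-d})$ radius-$r$ children of active regions, with about $r^{-2}\ell_T$ pulls each. Each empirical mean is streamed in $O(\ell_T)$ bits, compared with a resident best-arm record, and erased. The remaining rounds commit to the record, giving regret $\lesssim sr^{-d-2}\ell_T+Tr$.

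Stage 2 requires the mask $Z$ of $\mathcal C$, of size $\asymp_d s^{-d}$, to survive to the batches that use it. If $s^{-d}\lesssim W$, the mask is kept in memory. Otherwise I will serialize it, as in Algorithm~\ref{alg:serialized-main}. Across the $B-1$ boundaries, each boundary state carries one $W$-bit fragment of the mask, regenerated by replaying the Stage 1 verification restricted to that fragment's regions. The next batch then commits the child probes for exactly those regions. This uses $\chi\gtrsim s^{-d}$, which is why $s\ge s_{\mathrm{mem}}$.

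Choosing $r\asymp(s/T)^{1/(d+3)}$ balances the two Stage 2 terms into $\Psi_T(s)\ell_T$.

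\textbf{Main obstacle.} The main obstacle is the serialized regeneration. Recomputing each fragment of $\mathcal C$ must cost only a constant-factor (times $\ell_T$) replay overhead in both regret and batches, and it must remain consistent across fragments despite fresh noise. My first attempt would use conservative confidence thresholds and a union bound, so that independently regenerated fragments all satisfy the safe-set property with probability $1-T^{-2}$. The remaining exponent bookkeeping across $B$ is routine.
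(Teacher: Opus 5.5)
Your converse follows the paper's route: one regional family, a stopped change of measure under the base law, a transcript decoder, Fano, Lemma~\ref{lem:boundary-information}, and the adaptive-grid bound for the depth term. Your achievability architecture (safe active set, streamed-and-erased refinement, in-memory versus serialized mask) is also the paper's. The genuine gap is the step you flag yourself: regenerating mask fragments. Safety is a global property. A parent may be retained only if it is within $O_d(s)$ of $f^\star$, so every fragment must be compared with a benchmark $\lambda$ satisfying $f^\star-O_d(s)\le\lambda\le f^\star$. As written, a BLiN-style elimination replayed ``restricted to that fragment's regions'' sees only the fragment's own maximum. In a fragment with no near-optimal parent, its locally best parents survive with gap up to $\Theta(1)$. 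The following refinement batch then spends $(s/r)^dn_r$ pulls per retained parent at that gap, which destroys the $sr^{-d-2}\ell_T$ charge. Conservative thresholds and a union bound make each fragment's statistics accurate. They do not supply a reference that all fragments share and that is fixed before they are scanned. Nor can that reference come from one test of every parent: a single level with $n_s\asymp s^{-2}\ell_T$ pulls per parent costs $\asymp s^{-d-2}\ell_T$ regret, which exceeds $\Psi_T(s)\ell_T$ for $s$ near $s_{\mathrm{stat}}$.

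The missing idea is the pass-frozen incumbent tournament of Lemma~\ref{lem:incumbent-selection}. In each of $L_{\mathrm{ser}}-1$ nonfinal passes, a fresh estimate of a frozen incumbent gives $\lambda_i$. Every fragment is re-eliminated through $i$ levels against $\lambda_i$, and only one pass-champion record is kept, so after pass $i$ the incumbent is $2a_i$-optimal among parent representatives. The final pass adds slack $\zeta=s$ and emits masks that refinement consumes immediately. This costs $H_{\mathrm{ser}}=O((\log\log T)^2)$ batches per fragment, hence the law $s^{-d}\lesssim_d1+\chi/\Gamma_T$, and the paper serializes only when $B\ge C_d\Gamma_T$. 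For $3\le B<C_d\Gamma_T$, your dichotomy (mask in memory at $s=Cs_{T,\chi}$, otherwise serialize) can leave masks that neither fit nor can be serialized. The paper instead runs the in-memory hierarchy at $s_W\asymp W^{-1/d}$ and absorbs the loss through $1+\chi\le B(1\vee W)$ with $B\lesssim\Gamma_T$.

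Several converse steps also need repair. Your ``more visits'' decoder charges a misdecoded pair regret of order $sqn$ only when the selected cell itself received $\Omega(qn)$ visits. A stopped comparison naturally gives this only with constant probability (the paper's $5/8$ per probe, hence $1/2$ per region). With that constant, your average per-coordinate error bound is only $1/2+1/8$, which yields no information. There are two fixes:
\begin{itemize}
\item Shrink $c_0$ and the regret constant so that each selected probe is crossed with probability $1-\epsilon$.
\item Use the paper's threshold decoder with fair-coin tie-breaking, whose error is $\tfrac12\Pbar(\rho_{j,V_j}<1/4)+\tfrac12\Pbar(\rho_{j,1-V_j}\ge1/4)\le5/16$.
\end{itemize}
No prior over bump locations is needed: decoding happens under the base prior, and the local alternatives enter only as comparisons.

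The routing contradiction needs $s\le c\,s_{\mathrm{mem}}$ with a small constant, not $s=Cs_{T,\chi}$ with a large one. This is harmless, since $\Psi_T(s)\propto s^{1/(d+3)}$.

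Finally, the adaptive-grid bound of \citet{feng2022lipschitz} is stated for Gaussian rewards and exactly $B$ intervals, whereas $\mathfrak R_T$ ranges over $[0,1]$-valued kernels. You need three further pieces:
\begin{itemize}
\item the reduction of Lemma~\ref{lem:batch-depth-transfer}, which simulates the policy on $\mathbf 1_{\{Y^{\mathrm G}\ge0\}}$ so that $\Phi\circ\mu$ is one-Lipschitz with gaps preserved up to $c_{\Phi,d}$;
\item the dummy-boundary embedding of Lemma~\ref{lem:exact-B-subdivision};
\item a separate linear bound for $B=1$, the one-codeword case of Proposition~\ref{prop:boundary-codebook}.
\end{itemize}
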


\begin{remark}[One-dimensional specialization]\label{rem:one-dimensional-frontier}
Under the matching condition $W\gtrsim\log(eT)$, the $d=1$ frontier becomes
\[
  \widetilde\Theta\!\left(
    T^{2/3}
    \ \vee\
    \frac{T^{\beta_{1,B}}}{B^2}
    \ \vee\
    T^{3/4}(1+\chi)^{-1/4}
  \right).
\]
Thus near-sequential regret requires, up to logarithmic factors,
$B\gtrsim\log\log T$ and $\chi\gtrsim T^{1/3}$.  This slice makes explicit that
a large boundary-state entropy budget cannot compensate for insufficient update
depth, and conversely.
\end{remark}

By \eqref{eq:effective-routing-identity}, this is equivalent to the explicit three-term form summarized in
Theorem~\ref{thm:informal-frontier}.  The two resolution floors have
different origins: $s_{\mathrm{stat}}$ comes from the total number of fine
verification samples available in the horizon, whereas $s_{\mathrm{mem}}$
comes from the number of regional decisions that can be reconstructed from
boundary states.  The $B$-dependent term is an independent update-depth
obstruction.

Let $\mathfrak A_{B,\infty}$ denote the same committed-batch class without a
memory restriction.  By Remark~\ref{rem:committed-versus-delayed}, it is the
usual end-of-batch feedback model.  Define
\[
  \mathfrak R_T^{\mathrm{bat}}(B)
  :=\inf_{\mathcal A\in\mathfrak A_{B,\infty}}
    \sup_{\substack{f\in\Lip_1(\X)\\\nu\in\Dclass(f)}}
    \E_{\nu,\mathcal A}\Reg_T(f).
\]

\begin{corollary}[Unrestricted-memory $B$-batch frontier]
\label{cor:unrestricted-batch-frontier}
For every fixed $d\ge1$, there exist $c_d,C_d,T_d>0$ such that, for all
$T\ge T_d$ and $1\le B\le T$,
\[
  c_d\left[T^{\alpha_d}\vee\frac{T^{\beta_{d,B}}}{B^2}\right]
  \le \mathfrak R_T^{\mathrm{bat}}(B)
  \le C_d\ell_T\left[T^{\alpha_d}\vee\frac{T^{\beta_{d,B}}}{B^2}\right].
\]
The upper bound uses static batch boundaries.
\end{corollary}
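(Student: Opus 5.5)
The plan is to derive Corollary~\ref{cor:unrestricted-batch-frontier} from Theorem~\ref{thm:joint-frontier} by sending $W\to\infty$ in both directions.

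\textbf{Lower bound.} By Remark~\ref{rem:committed-versus-delayed}, every policy in $\mathfrak A_{B,\infty}$ induces the same law of actions and rewards as some committed-batch policy that, inside each batch, only appends rewards to its state. Fix $T$ and $B$. Such a policy stores at most $T$ rewards in $[0,1]$ plus a batch index, and this record is not a finite bit string. So I first reduce to finite width. Given any $\mathcal A\in\mathfrak A_{B,\infty}$ and $\varepsilon>0$, quantize every stored reward to $k$ bits. The stored quantities are conditioned on, not averaged, so quantization changes the induced action law. To control this, I would replace each reward $Y_t$ by a randomized rounding $\tilde Y_t\in\{0,1\}$ with $\PP(\tilde Y_t=1\mid Y_t)=Y_t$. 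Then $\tilde Y_t$ has mean $f(A_t)$, so $\mathcal A$ run on the $\tilde Y$ stream faces an instance $\tilde\nu\in\Dclass(f)$. The rounding needs randomness. I would take it from the algorithmic seed, which is allowed because rewards may update the state through Borel maps that also depend on $\omega$.

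The resulting policy stores $T$ bits of rewards plus $O(\log T)$ bookkeeping bits. It therefore lies in $\mathfrak A_{B,W}$ for $W=T+C\log T$. Its worst-case regret over $(f,\nu)$ is at most $\sup_{f,\tilde\nu}\E\Reg_T$ of the original policy. Hence $\mathfrak R_T(B,W)\le\mathfrak R_T^{\mathrm{bat}}(B)$ for this $W$.

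The lower bound \eqref{eq:joint-frontier-lower} then gives $\mathfrak R_T^{\mathrm{bat}}(B)\ge c_d[\Psi_T(s_{T,\chi})\vee T^{\beta_{d,B}}/B^2]$, and it remains to identify the first term. By \eqref{eq:effective-routing-identity}, $\Psi_T(s_{T,\chi})\ge T^{\alpha_d}$ for every $\chi$, since $s_{T,\chi}\ge s_{\mathrm{stat}}$ and $\Psi_T(s_{\mathrm{stat}})=T^{\alpha_d}$. This yields the claimed lower bound.

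For $B=1$ the reduction is vacuous, since $\chi=0$. The bound is still fine: $\beta_{d,1}=1$, so the second term is already $T$.

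\textbf{Upper bound.} Apply \eqref{eq:joint-frontier-upper} with $W$ equal to the larger of $C_d\ell_T$ and $T^{d}$ (any $W$ with $(B-1)W\ge T^{d/(d+2)}$ works when $B\ge2$). Since $\mathfrak A_{B,W}\subseteq\mathfrak A_{B,\infty}$, we get $\mathfrak R_T^{\mathrm{bat}}(B)\le\mathfrak R_T(B,W)$.

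For $B\ge2$, $\chi\ge W\ge T^{d/(d+2)}$, so $s_{\mathrm{mem}}\le s_{\mathrm{stat}}$. Then $s_{T,\chi}=s_{\mathrm{stat}}$ and $\Psi_T(s_{T,\chi})=T^{\alpha_d}$, giving the stated bound with the same $\ell_T$ factor and static boundaries.

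For $B=1$, $\chi=0$ and $\Psi_T(1)=T$, but $T^{\beta_{d,1}}/B^2=T$ dominates anyway. So the bound $C_d\ell_T\,T$ matches the right-hand side, and it also holds trivially since regret is at most $T$.

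\textbf{Main obstacle.} The delicate step is the lower-bound embedding of unbounded-memory policies into a finite-$W$ class. Truncation must not weaken the adversary, and randomized rounding is what guarantees this. It requires checking that the rounding randomness can be drawn from $\omega$ without violating the independence conventions used by Lemma~\ref{lem:boundary-information}.

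If that turns out to be awkward, the alternative is to note that the lower-bound proof of Theorem~\ref{thm:joint-frontier} likely yields each branch separately. In that case I would simply cite its sequential branch $T^{\alpha_d}$, which needs no memory argument, together with its adaptive-grid branch adapted from \citet{feng2022lipschitz}. Both branches hold for all $W$, including the unrestricted case, because the memory constraint was not used in them.
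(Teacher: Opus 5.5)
Your proof is correct. The upper bound is the paper's argument: the paper takes $W_T\asymp_d T^{d/(d+2)}+\ell_T$, but any $W\ge C_d\ell_T$ with $(B-1)W\ge T^{d/(d+2)}$ works, as you note.

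The lower bound takes a different route. The paper never passes through a finite-width class. It cites the component obstructions directly:
\begin{itemize}
\item Corollary~\ref{cor:regional-sequential-lower}, whose verification-budget proof never uses the memory constraint;
\item Lemma~\ref{lem:batch-depth-transfer}, which is already stated for $\mathfrak R_T^{\mathrm{bat}}(B)$;
\item for $B=1$, the one-codeword case of Proposition~\ref{prop:boundary-codebook}.
\end{itemize}
Your randomized-rounding simulation instead shows that $\mathfrak R_T(B,W)\le\mathfrak R_T^{\mathrm{bat}}(B)$ for $W=T+O(\log T)$. This is in fact an equality, since $\mathfrak A_{B,W}\subseteq\mathfrak A_{B,\infty}$. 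The corollary then follows from the statement of Theorem~\ref{thm:joint-frontier} used as a black box.

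What your route buys is a formal reduction where the paper relies implicitly on inspecting proofs: Corollary~\ref{cor:regional-sequential-lower} is stated only for finite $W$. It also gives the clean fact that width of order $T$ already makes the memory constraint vacuous. The price is the simulation construction. The paper's route is shorter, and it shows that the unrestricted-memory frontier needs none of the boundary-entropy machinery beyond the trivial $B=1$ codebook case.

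Two small remarks. First, the independence concern you flag is moot. Because you invoke the theorem as a black box, you only need your rounded policy to belong to $\mathfrak A_{B,W}$. Its enlarged seed is independent of the environment and reward noise, which is all the model requires. Second, if you fall back on citing the branches separately, Lemma~\ref{lem:batch-depth-transfer} needs $B\ge2$. So $B=1$ must be handled, as in the paper, by the one-codeword case of Proposition~\ref{prop:boundary-codebook}.
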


\begin{proof}
The sequential lower bound follows from the regional family in
Corollary~\ref{cor:regional-sequential-lower}.  For $B\ge2$,
Lemma~\ref{lem:batch-depth-transfer} gives the second term; for $B=1$, the
one-codeword case of Proposition~\ref{prop:boundary-codebook} gives linear
regret.  For the upper bound, apply Theorem~\ref{thm:joint-frontier} with
$W_T\asymp_dT^{d/(d+2)}+\ell_T$.  When $B\ge2$, its memory resolution is no
larger than the statistical resolution; when $B=1$, the update-depth term is
already linear.  The resulting policy belongs to $\mathfrak A_{B,\infty}$ and
has static batch boundaries.
\end{proof}

\begin{corollary}[Fully sequential logarithmic-memory specialization]
\label{cor:fully-sequential-endpoint}
For every fixed $d\ge1$, there exist constants $c_d,C_d,T_d>0$ such that, for
all $T\ge T_d$ and $W\ge C_d\ell_T$,
\[
  c_d T^{\alpha_d}
  \le \mathfrak R_T(T,W)
  \le C_d\ell_T T^{\alpha_d}.
\]
\end{corollary}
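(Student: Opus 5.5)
We obtain Corollary~\ref{cor:fully-sequential-endpoint} by specializing Theorem~\ref{thm:joint-frontier} to $B=T$ and checking that the statistical branch $T^{\alpha_d}$ dominates the other two terms of the envelope. Both bounds in Theorem~\ref{thm:joint-frontier} are written as the same maximum, so the corollary follows once we show that, for $B=T$ and $W\ge C_d\ell_T$,
\[
  \Psi_T(s_{T,\chi})\ \vee\ \frac{T^{\beta_{d,T}}}{T^2}\ \asymp_d\ T^{\alpha_d}.
\]

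\textbf{Lower bound.} This needs no memory condition. Apply \eqref{eq:joint-frontier-lower} with $B=T$ and any $W\ge0$. Since $s_{T,\chi}\ge s_{\mathrm{stat}}$ and $\Psi_T$ is increasing in $s$,
\[
  \Psi_T(s_{T,\chi})\ge\Psi_T(s_{\mathrm{stat}})=T^{\frac{d+2}{d+3}}T^{-\frac{1}{(d+2)(d+3)}}=T^{\alpha_d},
\]
because $\frac{(d+2)^2-1}{(d+2)(d+3)}=\frac{d+1}{d+2}$. Hence $\mathfrak R_T(T,W)\ge c_dT^{\alpha_d}$. Alternatively, Corollary~\ref{cor:regional-sequential-lower} gives this directly.

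\textbf{Upper bound.} Take $W\ge C_d\ell_T$ with the constant of Theorem~\ref{thm:joint-frontier}, and apply \eqref{eq:joint-frontier-upper} with $B=T$. Two terms must be controlled.

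\emph{Routing term.} Here $\chi=(T-1)W\ge T-1$. Since $d/(d+2)<1$, for $T\ge T_d$ we have $(1+\chi)^{-1/d}\le T^{-1/d}\le T^{-1/(d+2)}$, so $s_{T,\chi}=s_{\mathrm{stat}}$ and $\Psi_T(s_{T,\chi})=T^{\alpha_d}$ by \eqref{eq:effective-routing-identity}. In fact any $W\ge1$ would suffice here, since $\chi\gtrsim T^{d/(d+2)}$ is all that is needed.

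\emph{Update-depth term.} Write $\beta_{d,T}=\alpha_d/(1-(d+2)^{-T})\le\alpha_d(1+2(d+2)^{-T})$ for $T\ge1$. Then
\[
  T^{\beta_{d,T}}\le T^{\alpha_d}\,T^{2(d+2)^{-T}}\le e\,T^{\alpha_d},
\]
because $(d+2)^{-T}\log T\le 3^{-T}T\le 1$. Dividing by $B^2=T^2$ leaves a term far below $T^{\alpha_d}$.

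The maximum is therefore at most $eT^{\alpha_d}$, which gives $\mathfrak R_T(T,W)\le C_d\ell_TT^{\alpha_d}$ after enlarging $C_d$. If the corollary's constant in the condition $W\ge C_d\ell_T$ is taken to be at least the theorem's, the hypothesis carries over.

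\textbf{Remaining checks.} None of the steps is a real obstacle; they are elementary exponent bounds. The only points needing care are the bound on $\beta_{d,T}$ uniformly in $T$, and ensuring that $T_d$ is at least the theorem's threshold.
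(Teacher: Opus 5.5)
Your proposal is correct and follows the paper's proof: specialize Theorem~\ref{thm:joint-frontier} to $B=T$, use $\chi=(T-1)W\ge T-1$ to get $s_{T,\chi}=s_{\mathrm{stat}}$ and hence $\Psi_T(s_{T,\chi})=T^{\alpha_d}$, and then dispose of the update-depth term. The paper handles that last term more simply, via $\beta_{d,T}\le1$, which gives $T^{\beta_{d,T}}/T^2\le T^{-1}\le T^{\alpha_d}$; your refined estimate is unnecessary, and it also has a harmless constant slip, since $(d+2)^{-T}\log T\le1$ yields the factor $e^2$ rather than $e$ (the sharper bound $3^{-T}T\le1/3$ would give $e^{2/3}$).
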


\begin{proof}
For $B=T$, one may use one-pull batches, so the model is fully sequential.
Moreover, $\chi=(T-1)W\gtrsim T$ implies
$s_{\mathrm{mem}}\le s_{\mathrm{stat}}$ for all sufficiently large $T$, and
$\beta_{d,T}\le1$ gives
$T^{\beta_{d,T}}/T^2\le T^{\alpha_d}$.  The claim follows from
Theorem~\ref{thm:joint-frontier}.
\end{proof}

This recovers, in the full-dimensional worst-case class, logarithmic-space
achievability of \citet{zhu2025lipschitz}; see also
\citet{li2024efficient}.  Their zooming-dimension-adaptive guarantees and the
$\Omega(\log T)$ memory lower bound of \citet{zhu2025lipschitz} are not
consequences of our boundary-state entropy converse and remain
complementary.

For $\varrho>0$, define the batch complexity at target factor $\varrho$ by
\[
  \mathcal B_T(W;\varrho)
  :=\min\left\{
    B\in[T]:
    \mathfrak R_T(B,W)\le\varrho\ell_TT^{\alpha_d}
  \right\},
  \qquad \min\varnothing:=+\infty.
\]

\begin{corollary}[Batch complexity of near-sequential regret]
\label{cor:batch-complexity}
For every fixed $d\ge1$, there exist $c_d,C_d,T_d,\varrho_d>0$ such that, for
$T\ge T_d$ and $W\ge C_d\ell_T$,
\begin{equation}\label{eq:batch-complexity-bounds}
  c_d\left[
    \log\log T
    \ \vee\
    \frac{T^{d/(d+2)}}{W\ell_T^{d(d+3)}}
  \right]
  \le \mathcal B_T(W;\varrho_d)
  \le C_d\left[
    \log\log T
    \ \vee\
    \frac{T^{d/(d+2)}}{W}
  \right].
\end{equation}
In particular,
\begin{equation}\label{eq:batch-complexity-tilde}
  \mathcal B_T(W;\varrho_d)
  =\widetilde\Theta_d\!\left(
    \log\log T
    \ \vee\
    \frac{T^{d/(d+2)}}{W}
  \right).
\end{equation}
\end{corollary}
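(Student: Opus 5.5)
The plan is to derive both inequalities in \eqref{eq:batch-complexity-bounds} directly from Theorem~\ref{thm:joint-frontier}. The upper bound exhibits a batch budget $B$ at which the theorem's upper bound \eqref{eq:joint-frontier-upper} falls below $\varrho_d\ell_TT^{\alpha_d}$. The lower bound shows that \eqref{eq:joint-frontier-lower} forces any admissible $B$ to be large. Two exponent facts do all the work:
\[
  \frac{d+2}{d+3}-\alpha_d=\frac{1}{(d+2)(d+3)},
  \qquad
  \beta_{d,B}-\alpha_d=\frac{\alpha_d(d+2)^{-B}}{1-(d+2)^{-B}}\le 2(d+2)^{-B}.
\]
The second bound uses $\alpha_d<1$ and $(d+2)^{-B}\le\tfrac12$ for $B\ge1$.

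\emph{Upper bound.} Take $B:=\lceil C[\log\log T\vee T^{d/(d+2)}/W]\rceil+1$ for a large $C=C(d)$. Since $W\ge C_d\ell_T$, this satisfies $B\le T$ once $T\ge T_d$.
\begin{itemize}
  \item \emph{Memory branch.} We have $\chi=(B-1)W\ge T^{d/(d+2)}$. Hence $s_{\mathrm{mem}}\le s_{\mathrm{stat}}$, and by \eqref{eq:effective-routing-identity}, $\Psi_T(s_{T,\chi})=T^{\alpha_d}$.
  \item \emph{Depth branch.} For $C$ large, $(d+2)^B\ge\log T$. Then $T^{\beta_{d,B}-\alpha_d}\le\exp(2\log T/(d+2)^B)\le e^2$, so $T^{\beta_{d,B}}/B^2\le e^2T^{\alpha_d}$.
\end{itemize}
Theorem~\ref{thm:joint-frontier} then gives $\mathfrak R_T(B,W)\le e^2C_d\ell_TT^{\alpha_d}$. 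Fixing $\varrho_d:=e^2C_d$ yields $\mathcal B_T(W;\varrho_d)\le B\lesssim_d\log\log T\vee T^{d/(d+2)}/W$.

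\emph{Lower bound.} Let $B$ satisfy $\mathfrak R_T(B,W)\le\varrho_d\ell_TT^{\alpha_d}$. Both terms of \eqref{eq:joint-frontier-lower} are then at most $(\varrho_d/c_d)\ell_TT^{\alpha_d}$.
\begin{itemize}
  \item \emph{Memory term.} The first exponent identity gives
  \[
    (1+\chi)^{1/(d(d+3))}\ge(c_d/\varrho_d)\,T^{1/((d+2)(d+3))}\ell_T^{-1},
  \]
  that is, $1+\chi\gtrsim_dT^{d/(d+2)}\ell_T^{-d(d+3)}$. Since $W\ge1$, we have $1+(B-1)W\le BW$. Therefore $B\gtrsim_dT^{d/(d+2)}/(W\ell_T^{d(d+3)})$.
  \item \emph{Depth term.} If $B\ge\log\log T$ there is nothing to prove. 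Otherwise $B^2\le(\log\log T)^2\le\ell_T$, so $T^{\beta_{d,B}-\alpha_d}\lesssim_d\ell_T^2$. Using $\beta_{d,B}-\alpha_d\ge\alpha_d(d+2)^{-B}\ge\tfrac12(d+2)^{-B}$, this gives $(d+2)^{-B}\log T\lesssim\log\ell_T$. Hence $(d+2)^B\gtrsim\log T/\log\log T$, so $B\ge c_d\log\log T$ for $T\ge T_d$.
\end{itemize}
The case $B=1$ is covered automatically, since $\beta_{d,1}=1$ would give linear regret.

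Combining the two directions, and absorbing the $\ell_T$ powers into the tilde, gives \eqref{eq:batch-complexity-tilde}. The only delicate step is the depth half of the lower bound. There the factor $B^2$ and the $\ell_T$ slack must be shown not to swamp the doubly exponential dependence on $B$; the case split at $B<\log\log T$ handles this. One further point needs care: $\varrho_d$ is fixed in the upper-bound step, and the lower-bound constants are then allowed to depend on it.
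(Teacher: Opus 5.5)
Your proposal is correct and follows essentially the same route as the paper. For the upper bound, you choose $B\asymp_d\log\log T\vee T^{d/(d+2)}/W$ so that both the memory floor and the update-depth term of Theorem~\ref{thm:joint-frontier} reduce to $O_d(T^{\alpha_d})$. For the lower bound, you invert the memory branch of \eqref{eq:joint-frontier-lower} and show that $T^{\beta_{d,B}}/B^2$ exceeds $\varrho_d\ell_TT^{\alpha_d}$ unless $B\gtrsim_d\log\log T$. Your case split at $B<\log\log T$ is just a more explicit form of the paper's observation that $p^{-B}\ge(\log T)^{-1/2}$ forces $u_B\gtrsim\sqrt{\log T}$.
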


Appendix~\ref{app:batch-complexity-proof} proves the corollary.  It formalizes
the separation between state width and update depth: concentrating the same
boundary-state entropy budget into too few redesigns cannot reproduce the
required sequence of refinements.

\section{Lower bound}
\label{sec:lower-main}

The lower frontier is the maximum of three obstructions.  A single regional
family yields the statistical and memory-dependent terms: the horizon limits
the total verification sample budget, while the boundary-state
entropy budget limits the number of regional decisions that can determine later experiments.  A
separate adaptive-grid construction supplies the update-depth term.

\subsection{One regional family, two budgets}
\label{subsec:routing-hard-family}

Fix $0<r\le s/16$.  Choose $m\asymp_d s^{-d}$ pairs of radius-$s$ cells
$C_{j,0},C_{j,1}$, with all cells well separated.  Inside each cell place
$q\asymp_d(s/r)^d$ disjoint radius-$r$ probes $G_{j,a,k}$, as in
Figure~\ref{fig:routing-hard-family}.  A vector $v\in\{0,1\}^m$ selects one cell
from each pair.  With
$d_\infty(x,C):=\inf_{y\in C}\|x-y\|_\infty$ and $(u)_+:=\max\{u,0\}$, define
\[
  f_v(x)
  :=\frac14+
    \max_{j\in[m]}
    \left(\frac s4-\frac12d_\infty(x,C_{j,v_j})\right)_+
\]
and, for $j\in[m]$, $k\in[q]$,
\[
  f^+_{v,j,k}(x)
  :=f_v(x)+\frac12
    \left(r-\|x-z_{j,v_j,k}\|_\infty\right)_+.
\]
The rewards are Bernoulli with these means.  Every selected cell lies on the
optimal plateau of $f_v$; every unselected cell has gap at least $s/4$.  The
comparison instance $f^+_{v,j,k}$ differs from $f_v$ only on
$G_{j,v_j,k}$, raises its center by $r/2$, and leaves every arm outside that
probe at gap at least $r/2$.  Appendix~\ref{app:hard-family-geometry} verifies
the packing and Lipschitz properties.

The family has three basic counts:
\begin{equation}\label{eq:fixed-scale-cardinalities}
  m\asymp_d s^{-d},
  \qquad
  mq\asymp_d r^{-d},
  \qquad
  mqn\asymp_d r^{-d-2},
  \qquad n\asymp r^{-2}.
\end{equation}
The first is the number of regional decisions, the second the total
number of selected verification probes, and the third the number of pulls needed to
verify all those probes.

\paragraph{Local verification.}
Write $\PP_v$ for the interaction law under $f_v$.  For a probe $G_{j,a,k}$,
let $N_{j,a,k}(T)$ be its number of visits and set
$n=\lfloor c_0r^{-2}\rfloor$.  If a policy has regret $o(Tr)$ on the comparison
instance $f^+_{v,j,k}$, then a stopped change-of-measure argument implies
\[
  \PP_v\!\left(N_{j,v_j,k}(T)\ge n\right)\ge c
\]
for a numerical constant $c>0$.  The stopping time is the $n$th visit to the
probe, so the relevant KL divergence is $O(nr^2)=O(1)$ rather than a potentially
large full-horizon divergence.

\begin{lemma}[Verification-budget obstruction]
\label{lem:verification-budget}
Fix $d\ge1$.  There are constants $c_d,C_d,s_d>0$ such that, whenever
$0<s\le s_d$, $0<r\le s/16$, $Tr^2\ge C_d$, and
\[
  r^{-d-2}\ge C_dT,
\]
one has
\[
  \mathfrak R_T(B,W)\ge c_dTr
\]
for every $B$ and $W$.
\end{lemma}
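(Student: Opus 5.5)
We argue by contradiction inside the regional hard family of Section~\ref{subsec:routing-hard-family}. Fix any $(B,W)$-policy with worst-case expected regret at most $c_dTr$ for a small constant $c_d$ to be chosen. The lower bound is purely statistical, so memory and batch structure play no role and the argument applies to every $B$ and $W$. Draw $v$ arbitrarily, say $v=0$, and work under $\PP_v$. The disjointness of the probes $G_{j,v_j,k}$ gives the pigeonhole constraint
\[
  \sum_{j\in[m]}\sum_{k\in[q]}N_{j,v_j,k}(T)\le T
\]
almost surely. Since $mq\asymp_d r^{-d}$ and $n=\lfloor c_0r^{-2}\rfloor$, the hypothesis $r^{-d-2}\ge C_dT$ with $C_d$ large makes $mq\,n\ge 4T/c$. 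Markov's inequality then shows that at least half of the $mq$ selected probes satisfy $\PP_v(N_{j,v_j,k}(T)\ge n)<c$, where $c$ is the constant in the local-verification step. The hypothesis $Tr^2\ge C_d$ ensures $n\ge1$ and that $n\le T$ is meaningful.

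The core step is the stopped change-of-measure bound. Fix such an undersampled probe $(j,k)$ and let $\PP^+$ be the law under $f^+_{v,j,k}$. Let $\sigma$ be the time of the $n$th visit to $G_{j,v_j,k}$, set to $T$ if there are fewer visits, and consider the $\sigma$-algebra $\mathcal G$ generated by the seed and the history up to $\sigma$. The two instances differ only on the probe, where the Bernoulli means differ by at most $r/2$ around a base mean in $[1/4,1/2]$, so each visit costs KL $O(r^2)$. The divergence decomposition with the stopping time therefore gives $\mathrm{KL}(\PP_v|_{\mathcal G},\PP^+|_{\mathcal G})\le C n r^2\le Cc_0$.

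The event $E=\{N_{j,v_j,k}(T)<n\}$ is $\mathcal G$-measurable, because it equals $\{\sigma=T\text{ with fewer than }n\text{ visits}\}$. Under $\PP^+$, on $E$ at least $T-n\ge T/2$ pulls lie outside the probe. Each of these pulls has gap at least $r/2$ under $f^+$, so the regret is at least $Tr/4$ on $E$. The regret assumption and Markov's inequality give $\PP^+(E)\le 4c_d$. Pinsker's inequality (or Bretagnolle--Huber) then yields
\[
  \PP_v(E)\le 4c_d+\sqrt{Cc_0/2}.
\]
Choosing $c_0$ and $c_d$ small forces $\PP_v(E)<1-c$, that is, $\PP_v(N\ge n)>c$. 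This contradicts the pigeonhole conclusion, so some instance in the family has regret at least $c_dTr$.

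I expect the main obstacle to be handling the stopped KL cleanly for a committed-batch policy with seeds and Borel kernels. The policy's randomness is independent of the environment, so I would condition on $\omega$ and apply the divergence decomposition to the reward sequence, treating actions as measurable functions of past rewards. Committing actions in batches does not change this, since actions remain predictable with respect to the reward filtration. The remaining checks are routine and come from Appendix~\ref{app:hard-family-geometry}: $f^+\in\Lip_1$, the mean values stay inside $[0,1]$, and $s\le s_d$ makes $m,q\ge1$.
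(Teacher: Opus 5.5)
Your proof is correct and is essentially the paper's argument: the stopped change of measure at the $n$th visit to a selected probe (Lemma~\ref{lem:stopped-local-comparison}) combined with disjointness of the $mq$ selected probes. The only difference is arrangement: you use pigeonhole to find an undersampled probe and then contradict it, whereas the paper sums the per-probe bound $\PP_v(N_{j,v_j,k}(T)\ge n)\ge 5/8$ over all probes. One minor slip: $n\ge1$ follows from $r\le s_d/16$ with $s_d$ small, whereas $Tr^2\ge C_d$ supplies the opposite bound $n\le T/2$.
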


Indeed, low regret would force a constant probability of $n$ visits to every
selected probe.  Since those probes are disjoint,
\eqref{eq:fixed-scale-cardinalities} makes their expected total number of visits
$\Omega_d(mqn)=\Omega_d(r^{-d-2})$, exceeding the horizon.  The proof is in
Appendix~\ref{app:information-proof}.

\begin{corollary}[Sequential statistical obstruction]
\label{cor:regional-sequential-lower}
For every fixed $d\ge1$, there are $c_d,T_d>0$ such that, for all $T\ge T_d$,
$B$, and $W$,
\[
  \mathfrak R_T(B,W)\ge c_dT^{\alpha_d}.
\]
\end{corollary}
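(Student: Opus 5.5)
We derive the corollary directly from Lemma~\ref{lem:verification-budget} by choosing the two scales. The target is a fine scale $r$ at which the horizon is just too short to verify all probes, i.e.\ $r^{-d-2}\asymp_d T$, so that $Tr\asymp_d T\cdot T^{-1/(d+2)}=T^{\alpha_d}$. Concretely, set $r:=(C_dT)^{-1/(d+2)}$, with $C_d$ the constant of Lemma~\ref{lem:verification-budget}. This gives $r^{-d-2}=C_dT$, so the horizon condition $r^{-d-2}\ge C_dT$ holds with equality.

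The remaining hypotheses are checked one at a time. The regional scale enters only through the packing, so fix $s:=s_d$, the dimension-dependent constant in the lemma; no memory input is needed, since this corollary is the purely statistical branch. The constraint $r\le s/16$ holds once $(C_dT)^{-1/(d+2)}\le s_d/16$, which is true for all $T\ge T_d$ with $T_d$ large enough. The condition $Tr^2\ge C_d$ reads $T(C_dT)^{-2/(d+2)}=C_d^{-2/(d+2)}T^{d/(d+2)}\ge C_d$. Since $d\ge1$, the exponent $d/(d+2)$ is positive, so this also holds once $T$ exceeds a dimension-dependent threshold; $T_d$ is enlarged accordingly.

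The lemma then yields, for every $B$ and $W$,
\[
  \mathfrak R_T(B,W)\ge c_dTr=c_dC_d^{-1/(d+2)}T^{1-\frac1{d+2}}=c_d'T^{\alpha_d},
\]
which is the claim after renaming constants. No step here is a real obstacle, because the substantive content (the stopped change of measure and the disjointness summation) is contained in Lemma~\ref{lem:verification-budget}. The only point needing care is the compatibility of the constraints: $s$ must stay at a fixed constant rather than shrink with $T$, so that $m$ and the packing remain valid while $r\to0$. Every condition is then either an identity or holds for large $T$.
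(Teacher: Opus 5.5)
Your proof is correct and is essentially the paper's: both apply Lemma~\ref{lem:verification-budget} at $r\asymp_d T^{-1/(d+2)}$ (the paper takes $r=a_dT^{-1/(d+2)}$ with $a_d$ small) and read off $c_dTr\asymp_d T^{\alpha_d}$. The one difference is the regional scale: the paper takes $s=16r$, whereas you fix $s=s_d$. Both choices work because only $mq\asymp_d r^{-d}$ enters the counting, so your closing remark that $s$ \emph{must} stay at a fixed constant is not a genuine requirement.
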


This follows from Lemma~\ref{lem:verification-budget} with
$r\asymp_dT^{-1/(d+2)}$ and $s\asymp_dr$.  Thus the sequential term and the
boundary-state term below arise from the same hard family.

\subsection{Routing regional decisions through boundary states}

Let $V$ be uniform on $\{0,1\}^m$, independently of $\Fzero$, and run the base
instance $f_V$.  The local alternatives are used only for comparison.  Define
\[
  \rho_{j,a}
  :=\frac1q\sum_{k=1}^q
    \mathbf1_{\{N_{j,a,k}(T)\ge n\}}.
\]
After the transcript is generated, decode coordinate $j$ as the unique side
with $\rho_{j,a}\ge1/4$, breaking all other cases with a fresh independent fair
coin.  The decoder is obtained from the committed action transcript and
randomness independent of the learner, seed, instance, and transcript.

Low regret on every local alternative makes the selected side cross the
threshold with constant probability.  Low regret on the base instances
prevents the same pattern on many unselected sides: whenever
$\rho_{j,1-v_j}\ge1/4$, at least $qn/4$ pulls have incurred gap at least $s/4$.
Consequently, if regret is sufficiently smaller than both $Tr$ and
$sr^{-d-2}$, then under the base-prior law
\begin{equation}\label{eq:routing-decoder-asymmetry}
  \frac1m\sum_{j=1}^m
    \Pbar(\rho_{j,V_j}\ge1/4)\ge\frac12,
  \qquad
  \Ebar\sum_{j=1}^m
    \mathbf1_{\{\rho_{j,1-V_j}\ge1/4\}}\le\frac m8.
\end{equation}
The decoder therefore has average coordinate error at most $5/16$.  Writing
$h_2$ for binary entropy,
\[
  I(V;\widehat V\mid\Fzero)
  \ge[1-h_2(5/16)]m
  \gtrsim_ds^{-d}.
\]
On the other hand, Lemma~\ref{lem:boundary-information} gives
\[
  I(V;\widehat V\mid\Fzero)
  \le I(V;\mathsf T\mid\Fzero)
  \le\chi.
\]
This yields the boundary-state obstruction.

\begin{lemma}[Regional-routing obstruction]\label{lem:routing-memory}
Fix $d\ge1$.  There are constants $c_d,C_d,s_d>0$ such that, whenever
$0<s\le s_d$, $0<r\le s/16$, $Tr^2\ge C_d$, and
\begin{equation}\label{eq:routing-memory-condition}
  \chi\le c_ds^{-d},
\end{equation}
one has
\[
  \mathfrak R_T(B,W)
  \ge c_d\min\{Tr,\;sr^{-d-2}\}.
\]
\end{lemma}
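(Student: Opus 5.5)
We argue by contradiction: fix any $(B,W)$-policy $\mathcal A$ whose worst-case expected regret is at most $\varepsilon\min\{Tr,sr^{-d-2}\}$, with $\varepsilon=\varepsilon_d>0$ small, and show that this forces $\chi\gtrsim_d s^{-d}$. Choosing $c_d$ in \eqref{eq:routing-memory-condition} below that constant then gives the lemma. We work with the regional family of Section~\ref{subsec:routing-hard-family} at scales $(s,r)$, with $n=\lfloor c_0r^{-2}\rfloor$ and $Tr^2\ge C_d$. The latter guarantees $n\le T$ and puts the stopped change of measure in its valid regime. The argument has three steps: the two inequalities in \eqref{eq:routing-decoder-asymmetry}, the Fano-type lower bound on $I(V;\widehat V\mid\Fzero)$, and the upper bound from Lemma~\ref{lem:boundary-information}.

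\textbf{Step 1 (selected sides cross).} Fix $v,j,k$. Let $\tau$ be the time of the $n$th visit to $G_{j,v_j,k}$, truncated at $T$. Since $f_v$ and $f^+_{v,j,k}$ agree off that probe, and the Bernoulli means on it differ by at most $r/2$ around a level bounded away from $0$ and $1$, the divergence between the laws stopped at $\tau$ is $O(nr^2)=O(c_0)$. By Pinsker or Bretagnolle--Huber, $\PP_v(N_{j,v_j,k}(T)\ge n)$ is at least $\PP^+_{v,j,k}(N_{j,v_j,k}(T)\ge n)-O(\sqrt{c_0})$. On the event $\{N<n\}$ under $f^+$, at least $T-n\ge T/2$ pulls lie outside the probe, each at gap $\ge r/2$. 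The regret assumption $\le\varepsilon Tr$ therefore bounds $\PP^+(N<n)\le 4\varepsilon$, so $\PP_v(N_{j,v_j,k}\ge n)\ge 1-4\varepsilon-O(\sqrt{c_0})\ge 3/4$ for small constants. Averaging over $k$ gives $\E_v\rho_{j,v_j}\ge 3/4$. Since $\rho\le 1$, this yields $\PP_v(\rho_{j,v_j}\ge 1/4)\ge 2/3$, and averaging over $j$ and $V$ gives the first inequality in \eqref{eq:routing-decoder-asymmetry}.

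\textbf{Step 2 (unselected sides rarely cross).} On $\{\rho_{j,1-v_j}\ge1/4\}$, at least $qn/4$ pulls land in the unselected cell $C_{j,1-v_j}$, where the gap is $\ge s/4$. These pull sets are disjoint across $j$, so $\Reg_T(f_v)\ge (sqn/16)\sum_j\mathbf 1\{\rho_{j,1-v_j}\ge1/4\}$. Taking expectations, and using $qn\asymp_d s^d r^{-d-2}$ and $m\asymp_d s^{-d}$, the expected count is at most $16\varepsilon sr^{-d-2}/(sqn)\lesssim_d \varepsilon m$, which is $\le m/8$ for small $\varepsilon$.

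\textbf{Step 3 (information).} Coordinate $j$ is decoded correctly whenever the selected side crosses and the unselected side does not, and otherwise correctly with probability at least $1/2$ via the coin. Hence the per-coordinate error is at most $\tfrac12[\Pbar(\rho_{j,V_j}<1/4)+\Pbar(\rho_{j,1-V_j}\ge1/4)]$, whose average over $j$ is at most $\tfrac12(\tfrac12+\tfrac18)=5/16$. Because $V$ is uniform and independent of $\Fzero$, subadditivity and Fano's inequality for binary coordinates, combined with the concavity of $h_2$, give $I(V;\widehat V\mid\Fzero)\ge m[1-h_2(5/16)]$. The decoder $\widehat V$ is a function of $\mathsf T$ and fresh randomness, so Lemma~\ref{lem:boundary-information} gives $I(V;\widehat V\mid\Fzero)\le\chi$. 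Therefore $\chi\ge c'm\ge c''_d s^{-d}$, which contradicts \eqref{eq:routing-memory-condition} once $c_d<c''_d$. The condition $s\le s_d$ is used only to make the packing with $m\ge1$ pairs well defined, as in Appendix~\ref{app:hard-family-geometry}.

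\textbf{Main obstacle.} The main obstacle is Step~1. The change of measure has to be stopped so that the divergence stays $O(1)$ rather than growing with the number of visits. It also has to be uniform over all $v,j,k$, so that a single worst-case regret bound controls every comparison instance at once. The remaining steps are bookkeeping of constants.
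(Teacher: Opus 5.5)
Your proposal is correct and follows the paper's proof essentially step for step: the change of measure stopped at the $n$th probe visit, the pathwise misrouted-verification charge on unselected cells, the coin-breaking decoder with average coordinate error at most $5/16$, binary Fano with concavity of $h_2$, and data processing through Lemma~\ref{lem:boundary-information}. One small correction to your Step~3 wording: when only the unselected side crosses, the decoder errs with certainty, not with probability $1/2$; your displayed bound still holds, and in fact with equality, because $\Pbar(\widehat V_j\ne V_j)=\Pbar(\rho_{j,V_j}<1/4,\ \rho_{j,1-V_j}\ge 1/4)+\tfrac12\Pbar(\text{both or neither side crosses})=\tfrac12\bigl[\Pbar(\rho_{j,V_j}<1/4)+\Pbar(\rho_{j,1-V_j}\ge1/4)\bigr]$.
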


Appendix~\ref{app:information-proof} gives the stopped comparison, pathwise
misrouted-verification charge, decoder calculation, and entropy bound.  The two regret
branches have direct meanings: $Tr$ is the cost of missing a local improvement,
whereas $sr^{-d-2}$ is the cost of sending comparable verification effort to
unselected regions.

\paragraph{Effective regional scale.}
At fixed $s$, balancing the two branches gives
\[
  r\asymp_d(s/T)^{1/(d+3)},
  \qquad
  \min\{Tr,sr^{-d-2}\}\asymp_d\Psi_T(s).
\]
If $s_{\mathrm{mem}}$ is above the statistical floor, choose
$s\asymp_ds_{\mathrm{mem}}$ and apply Lemma~\ref{lem:routing-memory}.  If the
statistical floor is larger, apply Corollary~\ref{cor:regional-sequential-lower}.
After clipping the constant and very-low-entropy regimes,
Appendix~\ref{app:memory-optimization} obtains
\begin{equation}\label{eq:formal-lower-resolution-envelope}
  \mathfrak R_T(B,W)
  \ge c_d\Psi_T(s_{T,\chi})
  =c_d\left[
    T^{\alpha_d}
    \ \vee\
    \frac{T^{\frac{d+2}{d+3}}}
         {(1+\chi)^{\frac1{d(d+3)}}}
  \right].
\end{equation}

\subsection{Update depth and the very-low-entropy regime}
\label{subsec:lower-endpoints}

\paragraph{Independent update-depth obstruction.}
For $B\ge2$, Appendix~\ref{app:batch-endpoint} transfers the predictable
adaptive-grid lower bound of \citet{feng2022lipschitz} from Gaussian to bounded
Bernoulli rewards.  An at-most-$B$ policy can be embedded into their exact-$B$
grid by predictably subdividing committed tapes; at a dummy boundary the
remaining tape is recommitted unchanged.  Thresholding a uniformly bounded
Gaussian hard instance gives Bernoulli mean $\Phi(\mu)$; on the relevant
interval, $\Phi$ is Lipschitz and has derivative bounded away from zero, so both
smoothness and regret gaps are preserved up to constants.  The transfer proves
\begin{equation}\label{eq:batch-depth-main}
  \mathfrak R_T(B,W)
  \ge c_d\frac{T^{\beta_{d,B}}}{B^2}.
\end{equation}
This obstruction persists with unrestricted memory: it limits how many times a
reward-dependent state can be recomputed, not how wide that state is.

\paragraph{Very low boundary-state entropy.}
The deterministic factorization in Lemma~\ref{lem:boundary-factorization} says
that, conditional on the seed, the committed transcript has at most $2^\chi$
reward-dependent realizations.  Placing more separated peaks than transcript
codewords yields the following sharper bound.

\begin{proposition}[Universal low-memory obstruction]
\label{prop:boundary-codebook}
For every $d\ge1$, there is $c_d>0$ such that, for all integers $T\ge1$,
$1\le B\le T$, and $W\ge0$,
\[
  \mathfrak R_T(B,W)\ge c_dT2^{-\chi/d}.
\]
\end{proposition}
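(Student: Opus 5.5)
Conditional on the seed $\omega$, Lemma~\ref{lem:boundary-factorization} says the committed transcript $\mathsf T=\RecMap_{\mathcal A}(\omega,\mathbf M)$ takes at most $K:=2^\chi$ values. Each realization is an action sequence $(A_1,\dots,A_T)$, so all the learner's behaviour, for that seed, lies in a finite codebook of at most $K$ action tapes chosen in advance. The plan is a packing argument. We place $P:=2K$ (up to constants) disjoint peaks in $\X$ and show that every fixed codebook leaves some peak that is essentially unvisited by most codewords. We then average over the seed by putting a uniform prior on the peak.

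The construction has three steps. First, set $h\asymp_d P^{-1/d}=c\,2^{-\chi/d}$ and choose $P$ centers $z_1,\dots,z_P$ whose sup-norm balls of radius $2h$ are pairwise disjoint. Define $f_i(x)=\tfrac14+\bigl(h-\|x-z_i\|_\infty\bigr)_+$ with Bernoulli rewards. These functions are one-Lipschitz and $[0,1]$-valued, with $f_i^\star=\tfrac14+h$. On $f_i$, any pull outside the ball $G_i:=B(z_i,h/2)$ has gap at least $h/2$. Second, fix $\omega$ and a codeword $\mathsf t$, and let $N_i(\mathsf t)$ count the pulls of $\mathsf t$ landing in $G_i$. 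The balls are disjoint, so $\sum_i N_i(\mathsf t)\le T$. On instance $f_i$, whatever transcript is realized, the regret is at least $(h/2)(T-N_i(\mathsf t))$ pathwise. This holds because regret depends only on the actions, and the actions are exactly the realized codeword.

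Third, call codeword $\mathsf t$ \emph{good for $i$} if $N_i(\mathsf t)\ge T/2$. Each codeword is good for at most two indices, so at most $2K\le P/2$ indices (choosing $P=4K$) have any good codeword. For every other index $i$, every realizable transcript has regret at least $hT/4$, whatever the reward law. Draw $I$ uniform on $[P]$, independent of $\omega$. For each fixed $\omega$, the set of indices with a good codeword is determined by the codebook $\{\RecMap_{\mathcal A}(\omega,\mathbf m)\}$, which depends only on $\omega$. Hence that set has probability at most $1/2$ under $I$, and $\E[\Reg_T(f_I)]\ge \tfrac12\cdot hT/4$. Taking the maximum over $i$ gives $\sup_i\E\Reg_T(f_i)\ge c_dT2^{-\chi/d}$. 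If $2^{-\chi/d}$ exceeds a constant, the packing still yields $P\ge 4$ peaks with $h$ of constant order, and we shrink $c_d$ so the bound remains valid. For $B=1$ we have $\chi=0$ and the bound is linear, as used in Corollary~\ref{cor:unrestricted-batch-frontier}.

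The main obstacle is justifying that the codebook is fixed before the instance is drawn. Although the reward law influences which codeword occurs, the set of possible codewords depends only on $\omega$, by the deterministic factorization through $\mathbf M\in\{0,1\}^{W(B-1)}$. This factorization needs care with the almost-sure qualifier in Lemma~\ref{lem:boundary-factorization}. I would apply it separately under each of the finitely many laws $\PP_{f_i}$, noting that the map $\RecMap_{\mathcal A}$ does not depend on the instance. With that in hand, the pathwise regret bound and the counting are routine.
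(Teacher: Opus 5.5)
Your proof is correct and follows essentially the same route as the paper. Both condition on the seed, use Lemma~\ref{lem:boundary-factorization} to confine the realized transcript to a seed-dependent codebook of at most $2^\chi$ tapes, place order-$2^\chi$ disjoint bump peaks of radius order $2^{-\chi/d}$, and charge a gap of at least half the bump height to every pull outside the true peak's core. The only difference is the counting step, where both versions are valid. The paper bounds the Bayes-average visit fraction by $1/2$ via a randomized decoder (decode the peak containing a uniformly random pull) whose success probability is at most $N_0/N$. You instead use a deterministic threshold pigeonhole: each codeword puts at least $T/2$ pulls into at most two peaks, so $4\cdot 2^\chi$ peaks suffice and you get a pathwise regret bound on the uncovered peaks.
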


Appendix~\ref{app:boundary-codebook} contains the packing and decoding proof.
For $B=1$, $\chi=0$ and the proposition gives linear regret, which closes the
one-batch case of \eqref{eq:joint-frontier-lower}.  For $B\ge2$, taking the
maximum of \eqref{eq:formal-lower-resolution-envelope} and
\eqref{eq:batch-depth-main} proves the theorem.  The codebook proposition also
records a sharper bound throughout the very-low-entropy regime.

\section{Upper bound}
\label{sec:upper-main}

The upper bound realizes the fixed-scale calculus from the converse.  At
regional scale $s$, it retains a binary active-set mask with
$O_d(s^{-d})$ possible coordinates.  At verification scale $r$, it scans
$O_d(r^{-d})$ predetermined child probes, using
$O_d(r^{-2}\log T)$ pulls per probe.  Each fine statistic is consumed once and
erased; only the active-set mask and a constant number of logarithmic control
records persist.

Both constructions use the same confidence rule.  If a lower-confidence
benchmark $\lambda$ satisfies
$f^\star-\varepsilon\le\lambda\le f^\star$ and a candidate has
$\UCB(x)\le f(x)+2a$, then
\[
  \UCB(x)\ge\lambda-\zeta
  \quad\Longrightarrow\quad
  \Delta_f(x)\le\varepsilon+\zeta+2a.
\]
The serialized construction obtains $\lambda$ from a pass-frozen incumbent;
the in-memory hierarchy uses the largest first-sweep LCB.  All slot layouts
and batch boundaries are fixed from $(T,B,W,d,\omega)$; at a boundary, the
resident mask only instantiates candidate-versus-filler choices on that
predetermined tape.

\subsection{Fixed-scale refinement interface}
\label{subsec:active-set-interface}

Fix $0<r\le s$.  Let
$\mathcal P_s=\{P_k:k\in[K]\}$ be a fixed scale-$s$ partition of $[0,1]^d$,
with $K\le C_ds^{-d}$ and cell diameter at most $s$.  For every $P_k$, fix an
$r$-net $\mathcal N_r(P_k)$ of size at most $C_d(s/r)^d$.  Put
\[
  \ell_r:=\log(e/r),
  \qquad
  n_r:=\left\lceil A_d^{\mathrm{ref}}r^{-2}\ell_r\right\rceil,
\]
where $A_d^{\mathrm{ref}}$ is sufficiently large.

\begin{definition}[Safe active set]\label{def:safe-active-set}
A collection $\mathcal C\subseteq\mathcal P_s$ is a safe scale-$s$ active set
if it contains a cell with a maximizer and
\[
  \sup_{P\in\mathcal C}\sup_{x\in P}\Delta_f(x)\le\kappa_ds
\]
for a fixed dimension-dependent constant $\kappa_d$.
\end{definition}

The geometric set $\mathcal C$ is represented by a binary mask $Z$.  The mask
may arrive as one complete state or as disjoint fragments.  For each represented
cell, the refinement tape allocates one segment to every arm in its fixed
$r$-net; nonrepresented cells use a certified $O_d(s)$-optimal incumbent in the
same slots.  Hence commitment fixes the tape but not action diversity.

There are at most
\[
  K(s/r)^d\lesssim_dr^{-d}
\]
child slots.  Using $n_r=O_d(r^{-2}\ell_r)$ pulls per slot costs
$O_d(r^{-d-2}\ell_r)$ pulls.  On the safety event, every real or filler pull has
gap $O_d(s)$, while exploiting the best refined arm incurs $O_d(Tr)$ regret.

\begin{lemma}[Refinement from a safe active set]
\label{lem:active-set-refinement}
There is $C_d>0$ such that, under the preceding fragment assumptions, if
\[
  C_dr^{-d-2}\ell_r\le T/2,
\]
then one refinement batch per scheduled fragment uses at most
$C_dr^{-d-2}\ell_r$ pulls and $C_d\ell_r$ bits beyond the resident mask.
Refinement followed by exploitation has expected regret at most
\begin{equation}\label{eq:upper-stream-contract}
  C_d\left[sr^{-d-2}\ell_r+Tr\right].
\end{equation}
\end{lemma}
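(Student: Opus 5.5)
We organize the proof around a good event $\mathcal G$ on which all confidence intervals used in refinement are valid, and split regret into the refinement phase and the exploitation phase. First, we fix the tape: for each scheduled fragment of the mask $Z$, one batch visits every slot $(P_k,x)$ with $x\in\mathcal N_r(P_k)$ for $n_r$ consecutive pulls. The arm is $x$ if $P_k$ is represented in the fragment, and otherwise it is the certified incumbent. The slot count is at most $K\cdot C_d(s/r)^d\le C_dr^{-d}$, so the total length is $\le C_dr^{-d-2}\ell_r\le T/2$. Memory beyond the mask consists of a running sum and counter for the current slot, which use $O(\log T)=O_d(\ell_r)$ bits since $T\lesssim_d r^{-d-2}\ell_r$ fails only when $T$ is larger, so we should state the counters as $O(\ell_T)$. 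A resident best record holds the index of the best arm so far together with its LCB. After each slot we compute $\UCB/\LCB$ with radius $a_r\asymp\sqrt{\ell_r/n_r}\le r/ (2\cdot\text{const})$, compare the LCB with the resident record, keep the better of the two, and erase the slot statistics. Because slot indices are regenerated from $(T,B,W,d,\omega)$, storing the best arm costs $O_d(\ell_r)$ bits.

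Second, we bound the probability of failure. There are at most $C_dr^{-d}$ slot estimates, and Hoeffding's inequality with $A_d^{\mathrm{ref}}$ large gives failure probability $\le C_dr^{-d}e^{-c A_d^{\mathrm{ref}}\ell_r}\le r^{2}$, say. The regret on the complement is at most $T r^2\le Tr$. On $\mathcal G$, every refinement pull lies in a cell of the safe active set $\mathcal C$ or is the incumbent, so its gap is at most $\kappa_d s+O_d(s)$. The refinement regret is therefore $\lesssim_d s\cdot r^{-d-2}\ell_r$.

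Third, we handle exploitation, which we expect to be the main step. The cell $P^\star\in\mathcal C$ containing a maximizer $x^\star$ has a net point $x'$ with $\|x'-x^\star\|_\infty\le r$, so $f(x')\ge f^\star-r$ and $\LCB(x')\ge f^\star-r-2a_r$. The retained arm $\hat x$ maximizes the LCB over all scanned slots (including filler slots, whose incumbent is real), hence $f(\hat x)\ge\LCB(\hat x)\ge f^\star-r-2a_r\ge f^\star-2r$. This step uses the displayed confidence rule with $\lambda=\LCB(\hat x)$. The remaining $\le T$ pulls on $\hat x$ cost $\le 2Tr$. The subtle point is that fragments are processed in different batches: the record must persist across fragment boundaries, and $x'$ must be scanned in whichever fragment contains $P^\star$. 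Disjointness and coverage of the fragments guarantee both, and the record is a running maximum, so the order of the fragments does not matter. Summing the three contributions gives \eqref{eq:upper-stream-contract}.
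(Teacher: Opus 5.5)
\paragraph{Verdict.}
Your construction and regret analysis follow the paper's proof (via Lemma~\ref{lem:active-set-refinement-conditional}) closely, and the regret bound is correct; the gap is in the memory claim. The matching steps are these:
\begin{itemize}
\item a fixed slot tape with incumbent fillers;
\item $O_d(r^{-d})$ slot estimates controlled by one union bound;
\item a pathwise refinement charge $O_d(sr^{-d-2}\ell_r)$ from safety;
\item a running best record that persists across fragment batches and must dominate the net point in the maximizer's cell.
\end{itemize}

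\paragraph{Where you differ harmlessly.}
Ranking by LCB and letting filler slots compete is fine. All segments have the same length $n_r$, so LCB order equals empirical-mean order, and your union bound also covers filler segments. The paper instead ranks quantized empirical means over real slots only. Two minor remarks:
\begin{itemize}
\item The appeal to the displayed confidence rule with $\lambda=\LCB(\hat x)$ is unnecessary and does not match that rule's hypotheses. Your direct chain $f(\hat x)\ge\LCB(\hat x)\ge\LCB(x')\ge f^\star-r-2a_r$ already suffices.
\item The mask, and hence the arm pulled in each slot, may depend on earlier rewards. The concentration step should therefore be conditional Hoeffding given the history at the commitment boundary, as in the paper.
\end{itemize}

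\paragraph{The gap: the bound $C_d\ell_r$ on memory.}
\textbf{Finite precision.} Rewards are arbitrary $[0,1]$-valued, so an exact running sum, and a real-valued LCB in the best record, cannot be held in a finite bit string. You need a quantized accumulator, for example summing $\lfloor Y/\varepsilon\rfloor$ with $\varepsilon=r/512$ as in Lemma~\ref{lem:streaming-mean}. This costs $\lceil\log_2(n_r\lfloor1/\varepsilon\rfloor+1)\rceil=O_d(\ell_r)$ bits. It adds a deterministic error below $\varepsilon$, which the confidence radius absorbs. The recorded score is then a bounded-denominator rational and also fits in $O_d(\ell_r)$ bits.

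\textbf{Counter size.} Your accounting retreats to $O(\ell_T)$ bits for counters, which does not prove the stated bound. Since $T\ge 2C_dr^{-d-2}\ell_r$ can be arbitrarily larger, $\log T$ is not $O_d(\ell_r)$. However, no register ever needs to count beyond $n_r$, or at most beyond the refinement length $C_dr^{-d-2}\ell_r$. Slot addresses take $O_d(\log(1/r))$ bits, and exploitation needs no counter at all, so $O_d(\ell_r)$ does hold.

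\textbf{Why this matters.} Corollary~\ref{cor:two-batch-root} uses exactly the guarantee $C_d\ell_r\le W$ with $r\ge2^{-\xi_dW}$, for every $W\ge0$, including $W$ far below $\log T$. An $O(\ell_T)$ memory bound would not support it.
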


The contract in \eqref{eq:upper-stream-contract} is realized with one quantized
mean accumulator and one resident best-child record; every segment statistic is
erased after comparison.  Appendix~\ref{app:refinement} proves a stronger
conditional version for random active sets and gives the exact finite-precision
implementation.

Balancing the two terms in \eqref{eq:upper-stream-contract}, for
$s\ge(\ell_T/T)^{1/(d+2)}$ choose
\[
  r_s:=\left(\frac{s\ell_T}{T}\right)^{1/(d+3)}.
\]
Then $r_s\le s$ and
\[
  sr_s^{-d-2}\ell_T=Tr_s
  =T^{\frac{d+2}{d+3}}s^{\frac1{d+3}}\ell_T^{\frac1{d+3}}.
\]
Thus, up to logarithms, the algorithm pays the same envelope $\Psi_T(s)$ as
the fixed-scale converse.  The remaining task is to construct the finest safe
scale-$s$ mask permitted by $(B,W)$.

\subsection{Serialized active-set construction}
\label{subsec:serialized-main}

When the complete mask does not fit in memory, the learner realizes it as a
stream.  It regenerates one memory-sized fragment, immediately uses that
fragment to commit the corresponding child probes, updates a resident global
best child, erases the fragment, and continues.  Because fragments are never
co-resident, every pass uses a common reference: a freshly estimated incumbent
is frozen throughout the pass, and candidate UCBs are compared with its LCB.
Nonfinal passes improve the incumbent; the final pass emits masks for immediate
refinement.

Fix sufficiently large $A_d^{\mathrm{ser}},C_d^{\mathrm{ctl}}>0$ and define
\begin{equation}\label{eq:serialized-parameters}
\begin{gathered}
  n_s:=\left\lceil A_d^{\mathrm{ser}}s^{-2}\ell_T\right\rceil,\\
  L_{\mathrm{ser}}:=\left\lceil\log_2\log_2(4n_s)\right\rceil\vee1,\\
  H_{\mathrm{ser}}:=\frac{L_{\mathrm{ser}}(L_{\mathrm{ser}}+1)}2+1,
  \qquad
  w_{\mathrm{ctl}}:=\left\lceil C_d^{\mathrm{ctl}}\ell_T\right\rceil.
\end{gathered}
\end{equation}
Let $K=|\mathcal P_s|$, assume $W\ge w_{\mathrm{ctl}}+1$, and set
\[
  S:=\min\{K,W-w_{\mathrm{ctl}}\},
  \qquad
  J_s:=\left\lceil\frac KS\right\rceil.
\]
Only one $S$-bit mask fragment is resident.  Reconstructing a fragment
through $L_{\mathrm{ser}}$ tournament levels costs $H_{\mathrm{ser}}$ batch
slots, so the exact serialization law is
\[
  J_sH_{\mathrm{ser}}+1\le B,
  \qquad\text{equivalently}\qquad
  K\le(W-w_{\mathrm{ctl}})
     \left\lfloor\frac{B-1}{H_{\mathrm{ser}}}\right\rfloor.
\]
The first factor is fragment width; the second is the number of fragments that
can be regenerated and consumed before final exploitation.  The replay factor
$H_{\mathrm{ser}}$ is an implementation overhead, not part of the information
lower bound.  The phase-specific reward-dependent state consists of the frozen
incumbent $\bar k$, its benchmark $\lambda$, one fragment mask $Z$, and either
the current pass champion $(k_c,\lambda_c)$ or the global refined-arm record
$(x_{\mathrm{best}},\widehat f_{\mathrm{best}})$; all geometry and tape layouts
are public.  Algorithm~\ref{alg:serialized-main} gives the policy, and
Appendix~\ref{app:serialized} specifies its committed tapes, streaming
registers, and confidence updates.

\begin{algorithm}[H]
\small
\caption{Serialized active-set construction and refinement}
\label{alg:serialized-main}
\begin{algorithmic}[1]
\Require Budgets $(T,B,W)$; dyadic radii $0<r\le s\le1/16$; fixed
fragments, child nets, and confidence schedules determined from
$(T,B,W,d,\omega)$.
\State Choose an arbitrary parent representative $k_0$.
\For{$i=1,\ldots,L_{\mathrm{ser}}-1$} \Comment{construct a common reference}
  \State Freeze $\bar k\gets k_0$; set $\lambda\gets\bot$ and
  $(k_c,\lambda_c)\gets(\varnothing,-\infty)$.
  \For{$j=1,\ldots,J_s$}
    \State $(Z,\lambda,k_c,\lambda_c)\gets$
    \Call{EliminateSelect}{$\mathcal I_j,i,\bar k,\lambda;k_c,\lambda_c$}.
    \State Erase $Z$.
  \EndFor
  \If{$k_c\ne\varnothing$}
    \State $k_0\gets k_c$.
  \EndIf
\EndFor
\State Freeze $\bar k\gets k_0$; set $\lambda\gets\bot$ and
$(x_{\mathrm{best}},\widehat f_{\mathrm{best}})\gets(\varnothing,-\infty)$.
\For{$j=1,\ldots,J_s$} \Comment{emit, consume, and erase one mask fragment}
  \State $(Z,\lambda)\gets$
  \Call{EliminateMask}{$\mathcal I_j,L_{\mathrm{ser}},\bar k,\lambda,s$}.
  \State $(x_{\mathrm{best}},\widehat f_{\mathrm{best}})\gets$
  \Call{RefineFragment}{$\mathcal I_j,Z,\bar k;
  x_{\mathrm{best}},\widehat f_{\mathrm{best}}$}.
  \State Erase $Z$.
\EndFor
\State Commit all remaining pulls to $x_{\mathrm{best}}$; use $x_\circ$ if no
child was recorded.
\Ensure The arm used in the final exploitation batch.
\end{algorithmic}
\end{algorithm}

The three subroutines update their resident records in place.
\textsc{EliminateSelect} reconstructs one fragment through $i$ confidence
levels and updates the pass champion; \textsc{EliminateMask} emits the final
safe mask; and \textsc{RefineFragment} executes the fixed child tape and updates
the global best-child record.  No fragment-local score vector is stored.

\begin{proposition}[Fixed-scale serialized guarantee]
\label{prop:serialized-resource-contract}
There is $C_d>0$ such that, if $W-w_{\mathrm{ctl}}\ge1$,
\[
  J_sH_{\mathrm{ser}}+1\le B,
  \qquad
  C_dr^{-d-2}\ell_T\le T/2,
\]
then Algorithm~\ref{alg:serialized-main} uses exactly
$J_sH_{\mathrm{ser}}+1$ batches, at most $S+w_{\mathrm{ctl}}\le W$
memory bits after every pull, and at most
$C_dr^{-d-2}\ell_T$ exploratory pulls.  Uniformly over
$f\in\Lip_1(\X)$ and $\nu\in\Dclass(f)$,
\begin{equation}\label{eq:serialized-fixed-scale}
  \E_\nu\Reg_T(f)
  \le C_d\left[sr^{-d-2}\ell_T+Tr\right].
\end{equation}
\end{proposition}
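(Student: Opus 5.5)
We prove Proposition~\ref{prop:serialized-resource-contract} by separating the resource accounting (batches, memory, exploratory pulls) from the regret analysis. The regret part reduces to Lemma~\ref{lem:active-set-refinement} once we show that the final pass emits, fragment by fragment, a safe scale-$s$ active set in the sense of Definition~\ref{def:safe-active-set}.

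\textbf{Resource accounting.} All tapes, fragment index sets $\mathcal I_j$ and confidence schedules are fixed from $(T,B,W,d,\omega)$, so the boundaries are static.
\begin{itemize}
\item[(i)] Each call to \textsc{EliminateSelect} at level $i$ occupies $i$ batch slots, one per confidence level. Over the $L_{\mathrm{ser}}-1$ nonfinal passes, fragment $j$ therefore receives $\sum_{i<L_{\mathrm{ser}}} i$ slots.
\item[(ii)] The final pass spends $L_{\mathrm{ser}}$ slots for \textsc{EliminateMask} and one for \textsc{RefineFragment}.
\item[(iii)] The exploitation batch adds one more.
\end{itemize}
Summing gives $J_s\bigl(L_{\mathrm{ser}}(L_{\mathrm{ser}}-1)/2+L_{\mathrm{ser}}+1\bigr)+1=J_sH_{\mathrm{ser}}+1$ batches, which is at most $B$ by hypothesis. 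We pad to exactly this count if needed.

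For memory, at any pull the live state holds at most one $S$-bit fragment mask, plus $O(1)$ records: $\bar k,\lambda,k_c,\lambda_c$ or $x_{\mathrm{best}},\widehat f_{\mathrm{best}}$, a streaming accumulator, and position counters. Each record needs $O_d(\ell_T)$ bits after quantizing means to precision $T^{-1}$ and indexing cells and net points with $O_d(\ell_T)$ bits, since $r\ge T^{-1}$ under $C_dr^{-d-2}\ell_T\le T/2$. Choosing $C_d^{\mathrm{ctl}}$ large yields the bound $S+w_{\mathrm{ctl}}\le W$.

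For exploratory pulls, the elimination passes use at most $\sum_i K\cdot 2^{\,\cdot}$-type doubling sample sizes capped at $n_s$ per cell per pass. This totals $O_d(s^{-d}\cdot s^{-2}\ell_T\cdot L_{\mathrm{ser}})$ pulls, which is $O_d(r^{-d-2}\ell_T)$ since $r\le s$ (the $\log\log$ factor is absorbed when $r\le s/2$, or handled by geometric level sizes). Refinement costs $O_d(r^{-d-2}\ell_r)$ by Lemma~\ref{lem:active-set-refinement}.

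\textbf{Safety of the emitted mask.} We define the good event $\mathcal G$ on which every empirical mean computed anywhere, at every level, lies within its Hoeffding radius. The total number of estimates is polynomial in $T$, so a union bound with $A_d^{\mathrm{ser}}$ large gives $\PP(\mathcal G^c)\le T^{-1}$, contributing $O(1)$ regret. On $\mathcal G$ we argue by induction over passes that the frozen incumbent $\bar k$ of pass $i$ is $\varepsilon_i$-optimal, with $\varepsilon_i$ shrinking along a squaring schedule (radius $\asymp 2^{-2^{i}}$-type, matching $L_{\mathrm{ser}}=\lceil\log_2\log_2(4n_s)\rceil$), until $\varepsilon_{L_{\mathrm{ser}}}\lesssim_d s$.

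The key step uses the common reference. Within a pass, every fragment compares candidate UCBs with the LCB $\lambda$ of the same frozen $\bar k$, so $\lambda\le f^\star$ and $\lambda\ge f^\star-\varepsilon_i-O(\text{radius})$ hold simultaneously for all fragments, even though fragments are never co-resident. The cell containing a maximizer always has $\UCB\ge f^\star\ge\lambda$ and survives. By the confidence rule displayed above, every survivor $x$ satisfies $\Delta_f(x)\le\varepsilon+\zeta+2a+\operatorname{diam}\le\kappa_ds$ at the final level. The pass champion has the largest LCB among survivors, so it improves the incumbent and closes the induction.

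This induction is the main obstacle. Sampling at level $i$ must both be affordable, since only survivors of earlier levels are sampled heavily and survivors have small gaps, so their regret is charged at $O_d(s)$ per pull in the final pass. It must also certify a champion good enough for level $i+1$. We would first try a doubling-exponent schedule $n^{(i)}=\lceil n_s^{1-2^{-i}}\rceil$, so that the candidates' gap times samples stays $\lesssim n_s^{1/2}$-balanced per level. We would then verify that the nonfinal regret $\sum_i K n^{(i)}\varepsilon_{i-1}$ is $O_d(s^{1-d-2}\ell_T)\le O_d(sr^{-d-2}\ell_T)$.

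\textbf{Regret conclusion.} On $\mathcal G$, the final-pass fragments form a partition of a safe active set, and each fragment is consumed immediately by \textsc{RefineFragment} with filler slots assigned to the certified $O_d(s)$-optimal $\bar k$. Lemma~\ref{lem:active-set-refinement}, in its conditional fragment form, then bounds refinement plus exploitation by $C_d[sr^{-d-2}\ell_r+Tr]$, with $\ell_r\le\ell_T$. Elimination pulls have gap $O_d(s)$ after the first level, and the first level costs $O_d(s^{-d}n^{(1)})$ pulls at gap at most $1$, which is dominated by the same expression. Adding $T\cdot\PP(\mathcal G^c)=O(1)$ gives \eqref{eq:serialized-fixed-scale} uniformly over $f$ and $\nu$.
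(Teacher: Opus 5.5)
Your batch count, memory inventory, and reduction of the final-pass cost to the conditional streaming-refinement lemma agree with the paper. The elimination-regret step, however, fails as written. Each pass $i$ resets every mask and rescans levels $1,\ldots,i$, so level $\ell$ is paid $L_{\mathrm{ser}}-\ell+1$ times. A candidate still active at level $\ell$ is only known to have gap $O(a_{\ell-1}+\zeta_i)$, which for small $\ell$ is far above $s$. So your claim that elimination pulls have gap $O_d(s)$ after the first level is false. With the doubling-exponent schedule $n^{(\ell)}=\lceil n_s^{1-2^{-\ell}}\rceil$ that you propose, $n^{(\ell)}a_{\ell-1}\asymp\sqrt{n_s\ell_T}$ for every $\ell\ge2$; this is exactly the per-level balance you describe. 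Your own sum $\sum_iKn^{(i)}\varepsilon_{i-1}$ therefore has $L_{\mathrm{ser}}$ terms of size $K\sqrt{n_s\ell_T}\asymp_d s^{-d-1}\ell_T$, and the correctly nested charge $\sum_i\sum_{\ell\le i}Kn^{(\ell)}a_{\ell-1}$ is of order $L_{\mathrm{ser}}^2s^{-d-1}\ell_T$. This cannot be absorbed into $C_d[sr^{-d-2}\ell_T+Tr]$. Take $r=s$ and $T$ at the threshold $T\asymp_d r^{-d-2}\ell_T$: the right side is $\asymp_d s^{-d-1}\ell_T$, while $L_{\mathrm{ser}}\asymp\log\log T$ is unbounded. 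The pull count has the same problem: the constant $2^{d+2}$ from $r\le s/2$ cannot absorb $L_{\mathrm{ser}}$. What actually works there is the top-heavy geometric growth of the level sizes.

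The missing idea is the balanced interpolation schedule of Lemma~\ref{lem:incumbent-selection}: $n_\ell=\lceil c_\ell\sqrt{Nn_{\ell-1}}\rceil$ with $N=n_s$ and $c_\ell=2^{\ell-L_{\mathrm{ser}}+1}$. Because the multipliers decay geometrically away from the final level, $\sum_\ell(L_{\mathrm{ser}}-\ell+1)c_\ell\le 8$. This yields both $\sum_i\sum_{\ell\le i}n_\ell\le CN$ and $L_{\mathrm{ser}}n_1+\sum_i\sum_{2\le\ell\le i}n_\ell/\sqrt{n_{\ell-1}}\le C\sqrt N$, while $n_{L_{\mathrm{ser}}-1}\ge N/16$ still makes the final incumbent $O(s)$-optimal. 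The tournament regret relative to $f_K^\star$ is then $C_dK[\sqrt{N\log(c_\diamond KL_{\mathrm{ser}}^2/r)}+sN]$. Adding $(f^\star-f_K^\star)|\mathcal E|\le s\,C_dKN$ gives $O_d(s^{-d-1}\ell_T)\le O_d(sr^{-d-2}\ell_T)$.

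Separately, fix the survival claim. The representative of the maximizer's cell only satisfies $\UCB\ge f(x_{k^\star})\ge f^\star-s$, not $\UCB\ge f^\star$. In nonfinal passes, which use no slack, the incumbent invariant must track an index attaining $f_K^\star=\max_kf(x_k)$, whose UCB is at least $f_K^\star\ge\lambda$. Retaining the maximizer's cell in the final pass is exactly what the slack $\zeta=s$ provides.
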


The active-set tournament itself uses $O_d(s^{-d-2}\ell_T)$ pulls and incurs
$O_d(s^{-d-1}\ell_T)$ regret, both dominated by
\eqref{eq:serialized-fixed-scale} because $r\le s$.  Thus serialization changes
only the finest feasible $s$, not the polynomial fixed-scale cost.

For a uniform horizon envelope, set
\begin{equation}\label{eq:serialization-overhead}
  L_T^{\mathrm{ser}}
  :=\left\lceil\log_2(8\ell_T)\right\rceil\vee1,
  \qquad
  \Gamma_T
  :=\frac{L_T^{\mathrm{ser}}(L_T^{\mathrm{ser}}+1)}2+1
  \asymp(\log\log(eT))^2.
\end{equation}
When $W\ge C_d\ell_T$ and $B\ge C_d\Gamma_T$, the resource law reduces to
\[
  s^{-d}\lesssim_d1+\frac{\chi}{\Gamma_T}.
\]
Choosing $s$ at the larger of the statistical floor and this serialized memory
floor, then taking $r=r_s$, gives the sequential and memory-dependent branches
up to the theorem's outer logarithmic factor.  Appendix~\ref{app:serialized}
proves the proposition and its optimized envelope.

\subsection{In-memory hierarchical construction}
\label{subsec:in-memory-main}

Serialization converts boundaries into active-set throughput, but its replay
cost does not recover the sharp moderate-$B$ exponent.  When the complete
scale-$s$ mask fits in memory, nested dyadic partitions update all active
coordinates simultaneously.  At hierarchy level $\ell$, representatives at
radius $u_\ell$ receive $O(u_\ell^{-2}\ell_T)$ pulls, while every active
representative has gap
$O_d(u_{\ell-1})$.  The narrowing regret is therefore
\[
  O_d\!\left(\ell_Tu_{\ell-1}u_\ell^{-(d+2)}\right).
\]
The old mask commits two predetermined sweeps in one batch.  The first sweep
forms the largest candidate LCB; the second compares every candidate UCB with
that benchmark and writes the next mask online.  First-sweep rewards affect the
comparison benchmark but not the already committed second-sweep actions.

For $L\ge1$, the equalized radii
\[
  \theta_\ell:=\frac{1-(d+2)^{-\ell}}{1-(d+2)^{-L}},
  \qquad
  u_\ell\asymp s^{\theta_\ell}
\]
satisfy
\[
  \sum_{\ell=1}^Lu_{\ell-1}u_\ell^{-(d+2)}
  \le C_dLs^{-\gamma_d(L)},
  \qquad
  \gamma_d(L):=\frac{d+1}{1-(d+2)^{-L}}.
\]
For $B\ge3$ and $0<r\le s\le1/16$, take
\[
  L=L_B(s):=
  \min\left\{
    B-2,
    \left\lfloor\frac{(d+1)\log(1/s)}{\log(d+2)}\right\rfloor
  \right\}\ge1.
\]
Lemma~\ref{lem:in-memory-branch} in Appendix~\ref{app:in-memory} gives
feasibility when
\[
  L_B(s)+2\le B,
  \qquad
  C_d(s^{-d}+\ell_T)\le W,
  \qquad
  C_dL_B(s)\ell_Tr^{-d-2}\le T/2,
\]
and then, uniformly over $f\in\Lip_1(\X)$ and $\nu\in\Dclass(f)$,
\begin{equation}\label{eq:main-in-memory-fixed-scale}
  \E_\nu\Reg_T(f)
  \le C_d\left[
    L_B(s)\ell_Ts^{-\gamma_d(L_B(s))}
    +\ell_Tsr^{-d-2}
    +Tr
  \right].
\end{equation}
The first term is the cost of limited update depth; the last two are the same
fixed-scale refinement envelope as in the serialized branch.

\paragraph{Regime assembly.}
The policy is selected deterministically from $(T,B,W)$.  For $B=1$ it plays
$x_\circ$.  For $B=2$, the unique root cell is already a safe active set, so one
root-refinement batch followed by exploitation gives the two-batch branch.  For
$3\le B<C_d\Gamma_T$ it uses the in-memory hierarchy, and for
$B\ge C_d\Gamma_T$ it uses the serialized construction.  In the serialized
regime, up to dyadic rounding,
\[
  s\asymp_d
  \max\left\{
    \left(\frac{\ell_T}{T}\right)^{1/(d+2)},
    \left(\frac{\Gamma_T}{\Gamma_T+\chi}\right)^{1/d}
  \right\},
  \qquad
  r\asymp_dr_s.
\]
In the in-memory regime, optimizing
\eqref{eq:main-in-memory-fixed-scale} with the equalized hierarchy produces
$T^{\beta_{d,B}}/B^2$ together with the same statistical and memory envelopes.
The factors $L_B(s)$ and $\Gamma_T$ are absorbed by the outer $\ell_T$ in
Theorem~\ref{thm:joint-frontier}.  Appendices~\ref{app:serialized}
and~\ref{app:in-memory} give the exact finite-precision state inventories,
schedules, and uniform regime calculation proving
\eqref{eq:joint-frontier-upper}.

\section{Conclusion}

We determine the minimax memory--batch frontier for stochastic Lipschitz
bandits up to logarithmic factors when $W\gtrsim_d\log(eT)$, with lower bounds
valid for every memory budget.  The framework contains both one-resource
specializations: unrestricted memory gives the full-dimensional worst-case batch-only frontier,
while $B=T$ recovers optimal sequential regret with logarithmic live memory.
The statistical and memory-dependent branches form one fixed-scale
envelope: the horizon limits verification, boundary states limit how many
regional decisions reach later experiments, and $B$ separately limits update
depth.  Matching policies erase verification statistics while retaining an active-set
mask, either simultaneously or as regenerated fragments.  Static batch boundaries match predictable adaptive ones.

The main open problems are to remove logarithmic losses, characterize the
sublogarithmic-memory regime, obtain instance-dependent analogues, and turn the
prefix information profile into a time-resolved regret converse.

{\small
\setlength{\bibsep}{2pt plus 1pt minus 1pt}
\bibliographystyle{plainnat}
\bibliography{references}
}

\clearpage
\appendix
\setlength{\abovedisplayskip}{6pt plus 2pt minus 2pt}
\setlength{\belowdisplayskip}{6pt plus 2pt minus 2pt}
\setlength{\abovedisplayshortskip}{3pt plus 2pt minus 1pt}
\setlength{\belowdisplayshortskip}{4pt plus 2pt minus 1pt}
\setlength{\textfloatsep}{8pt plus 2pt minus 2pt}
\setlength{\floatsep}{7pt plus 2pt minus 2pt}
\numberwithin{figure}{section}
\renewcommand{\thefigure}{\Alph{section}.\arabic{figure}}
\renewcommand{\theHfigure}{\Alph{section}.\arabic{figure}}
\setcounter{figure}{0}
\numberwithin{algorithm}{section}
\renewcommand{\theHalgorithm}{\Alph{section}.\arabic{algorithm}}
\setcounter{algorithm}{0}

\section{Formal policy maps and boundary-state reconstruction}
\label{app:boundary-memory}

\subsection{Formal committed-batch policy maps}

This subsection gives the map-level realization of the operational policy model
in Section~\ref{subsec:policy-model}.  Let $(\Omega_0,\mathcal G_0)$ be the
standard Borel seed space.  Initialization is a Borel map
$\iota:\Omega_0\to\{0,1\}^W$, with $M_0=\iota(\omega)$.  For every
$b\in\{0,\ldots,B-1\}$ and $u\in[T]$, let
\[
  \psi_b:
  \Omega_0\times\{0,\ldots,T\}\times\{0,1\}^W
  \longrightarrow\{0,\ldots,T\},
  \qquad
  \phi_{b,u}:
  \Omega_0\times\{0,\ldots,T\}\times\{0,1\}^W
  \longrightarrow\X
\]
be total Borel maps.  For every seed $w$, time $t$, and memory word $m$, require
\[
  \psi_b(w,t,m)
  \in
  \begin{cases}
    \{t+1,\ldots,T\},&t<T,\\
    \{T\},&t=T,
  \end{cases}
  \qquad
  \psi_{B-1}(w,t,m)=T\quad(t<T).
\]
On the realized trajectory,
\begin{align}
  \tau_{b+1}
  &=\psi_b(\omega,\tau_b,M_{\tau_b}),
  \label{eq:committed-endpoint-map}\\
  A_t
  &=\phi_{b,t}(\omega,\tau_b,M_{\tau_b}),
  \qquad \tau_b<t\le\tau_{b+1}.
  \label{eq:committed-action-map}
\end{align}
Values of $\phi_{b,u}$ outside the selected batch and values on unreachable
memory words are immaterial.  For each $b,t$, let
\[
  \mathcal U_{b,t}:
  \Omega_0\times\{0,1\}^W\times\X\times[0,1]
  \longrightarrow\{0,1\}^W
\]
be a Borel update map, and set
\[
  M_t=\mathcal U_{b,t}(\omega,M_{t-1},A_t,Y_t),
  \qquad \tau_b<t\le\tau_{b+1}.
\]
These maps formalize a predictable adaptive batch boundary and a complete action tape
chosen at the preceding boundary, followed by online memory updates that cannot
alter that tape.

\subsection{Prefix transcript reconstruction}

\begin{proof}[Proof of Lemma~\ref{lem:boundary-factorization}]
Fix $j\in[B]$ and
$(w,m_{1:j-1})\in\Omega_0\times(\{0,1\}^W)^{j-1}$.  Set
\[
  t_0=0,
  \qquad
  m_0=\iota(w).
\]
For $b=0,\ldots,j-1$, define recursively
\begin{align*}
  t_{b+1}
  &=\psi_b(w,t_b,m_b),\\
  a_u
  &=\phi_{b,u}(w,t_b,m_b),
  \qquad t_b<u\le t_{b+1},
\end{align*}
where $m_b$ for $b\ge1$ is the corresponding input word.  Set $a_u=x_\circ$
for $u>t_j$.  The totality conditions above imply
$0=t_0\le t_1\le\cdots\le t_j\le T$, with strict increase until the first
visit to $T$.  Finite composition of Borel maps therefore yields a Borel map
\[
  \RecMap_{\mathcal A,j}(w,m_{1:j-1}):=(t_{1:j},a_{1:T}).
\]

For the padded boundary states generated by the policy, induction over $b$
and \eqref{eq:committed-endpoint-map}--\eqref{eq:committed-action-map} give
\[
  t_b=\bar\tau_b\quad(0\le b\le j),
  \qquad
  a_u=A_u\quad(1\le u\le\bar\tau_j).
\]
After $\bar\tau_j$, both the definition of $\mathsf T^{[j]}$ and the recursion
use $x_\circ$.  Hence
\[
  \mathsf T^{[j]}
  =\RecMap_{\mathcal A,j}(\omega,\bar M_1,\ldots,\bar M_{j-1})
  \qquad\text{almost surely}.
\]
For fixed $w$,
\[
  \bigl|\operatorname{range}\RecMap_{\mathcal A,j}(w,\cdot)\bigr|
  \le\bigl|(\{0,1\}^W)^{j-1}\bigr|
  =2^{(j-1)W}.
\]
When $j=B$, the terminal endpoint is deterministic.  Deleting it defines
$\RecMap_{\mathcal A}$, gives $\mathsf T=\RecMap_{\mathcal A}(\omega,\mathbf M)$, and
preserves the range bound $2^{(B-1)W}=2^\chi$.
\end{proof}

\subsection{Prefix information profile}

\begin{proof}[Proof of Lemma~\ref{lem:boundary-information}]
Fix $j\in[B]$ and write
$\mathbf M_{<j}:=(\bar M_1,\ldots,\bar M_{j-1})$.  By
Lemma~\ref{lem:boundary-factorization}, conditional on $\Fzero$,
\[
  V\longrightarrow\mathbf M_{<j}\longrightarrow\mathsf T^{[j]}
\]
is a Markov chain.  The conditional chain rule and the $W$-bit alphabet of each
boundary state give
\[
  H(\mathbf M_{<j}\mid\Fzero)
  =\sum_{b=1}^{j-1}H(\bar M_b\mid\Fzero,\bar M_{1:b-1})
  \le(j-1)W.
\]
Conditional data processing therefore yields
\[
  I(V;\mathsf T^{[j]}\mid\Fzero)
  \le I(V;\mathbf M_{<j}\mid\Fzero)
  \le(j-1)W.
\]

For $j=B$, $\mathbf M_{<B}=\mathbf M$ and $\mathsf T^{[B]}$ contains the same
information as $\mathsf T$.  Since $R$ is generated from the transcript and
fresh randomization independent of all experiment variables, conditional on
$\Fzero$,
\[
  V\longrightarrow\mathbf M\longrightarrow\mathsf T\longrightarrow R.
\]
Hence
\[
  I(V;\mathsf T,R\mid\Fzero)
  =I(V;\mathsf T\mid\Fzero)
  \le I(V;\mathbf M\mid\Fzero)
  \le H(\mathbf M\mid\Fzero)
  \le(B-1)W=\chi.
\]
A final application of data processing gives $I(V;R\mid\Fzero)\le\chi$.  For
$B=1$, all memory tuples are empty and the corresponding bounds are zero
\citep[Chapters~2--3]{cover2006elements}.
\end{proof}

\section{Lower-bound proofs}\label{app:lower}

This appendix proves the transcript-codebook lemma, verifies the additive
cell--probe geometry, derives both the verification-budget and routing-memory
obstructions from one stopped comparison, optimizes the effective regional
scale, and transfers the independent Gaussian batch-depth lower bound to
bounded Bernoulli rewards by thresholding the observations.  Discrete
information quantities are measured in bits and are conditional on $\Fzero$ when
indicated.  For $u\in\mathbb R$, write $(u)_+:=\max\{u,0\}$; for
$x\in\mathbb R^d$, $a>0$, and nonempty $S\subseteq\mathbb R^d$, write
\[
  B_\infty(x,a):=\{y:\|x-y\|_\infty\le a\},
  \qquad
  d_\infty(x,S):=\inf_{y\in S}\|x-y\|_\infty.
\]
We denote the all-ones vector in $\mathbb R^d$ by $\boldsymbol 1_d$; the
Lipschitz seminorm $\Lip(g)$ is defined in Section~\ref{subsec:policy-model}.

\subsection{Boundary codebook}\label{app:boundary-codebook}

\begin{proof}[Proof of Proposition~\ref{prop:boundary-codebook}]
Fix $\mathcal A\in\mathfrak A_{B,W}$.  Put
\[
  N_0=2^{(B-1)W},
  \qquad
  N=2N_0.
\]
Since $N\ge2$, the integer $g=\lceil N^{1/d}\rceil$ satisfies $g\ge2$.
The regular $g^d$-point grid in $[1/4,3/4]^d$ has spacing
$1/[2(g-1)]$.  Select any $N$ grid points $z_1,\ldots,z_N$ and set
\begin{equation}
  r_0=\frac1{8(g-1)}.
\end{equation}
Their pairwise distances are at least $4r_0$, while
\begin{equation}\label{eq:codebook-radius-lower}
  r_0\ge\frac18N^{-1/d}.
\end{equation}
For $v\in[N]$, define the Bernoulli mean
\begin{equation}
  g_v(x)=\frac14+\bigl(r_0-\|x-z_v\|_\infty\bigr)_+,
  \qquad
  G_v=B_\infty(z_v,r_0/2).
\end{equation}
Write $\PP_v^\circ,\E_v^\circ$ for the interaction law and expectation
under mean $g_v$.
The functions are one-Lipschitz and take values in $[1/4,3/8]$.  The sets
$G_1,\ldots,G_N$ are disjoint, and every $x\notin G_v$ satisfies
\begin{equation}\label{eq:codebook-gap}
  g_v^\star-g_v(x)\ge r_0/2.
\end{equation}

Let $V_0$ be uniform on $[N]$ and independent of $\Fzero$, and let
$\PP^\circ$ denote the resulting mixture law.  Write
$N_v(T)=\sum_{t=1}^T\mathbf1_{\{A_t\in G_v\}}$.  Averaging
\eqref{eq:codebook-gap} over the prior gives
\begin{equation}\label{eq:codebook-bayes-regret}
  \frac1N\sum_{v=1}^N\E_v^\circ\Reg_T(g_v)
  \ge\frac{r_0}{2}\left[
    T-\frac1N\sum_{v=1}^N\E_v^\circ N_v(T)
  \right].
\end{equation}

For almost every seed realization $w$,
Lemma~\ref{lem:boundary-factorization} gives the transcript codebook
\[
  \mathfrak C(w)
  :=\RecMap_{\mathcal A}\bigl(w,(\{0,1\}^W)^{B-1}\bigr),
  \qquad
  |\mathfrak C(w)|\le N_0.
\]
Condition on $\omega=w$ and define a randomized decoder from $A_{1:T}$: draw
$U$ uniformly from $[T]$, output the unique $j$ for which
$A_U\in G_j$ when such a $j$ exists, and output an arbitrary index
otherwise.  Its conditional success probability satisfies
\begin{equation}\label{eq:codebook-decoder-lower}
  \PP^\circ(\widehat V_0=V_0\mid\omega=w)
  \ge\frac1{NT}\sum_{v=1}^N
      \E_v^\circ\!\left[N_v(T)\mid\omega=w\right].
\end{equation}
On the other hand, any randomized decoder based on an observation with at most
$N_0$ values has average success probability at most $N_0/N=1/2$.  Let
$p_v^\circ(\mathfrak t\mid w)$ be a version of the conditional transcript law under
hypothesis $v$, and let $\mathsf Q(v\mid\mathfrak t,w)$ be the decoder.  Then, for almost every $w$,
\begin{align}
  \PP^\circ(\widehat V_0=V_0\mid\omega=w)
  &=\frac1N\sum_{\mathfrak t\in\mathfrak C(w)}\sum_{v=1}^N
      p_v^\circ(\mathfrak t\mid w)\mathsf Q(v\mid\mathfrak t,w) \\
  &\le\frac{|\mathfrak C(w)|}{N}
  \le\frac{N_0}{N}=\frac12.
  \label{eq:codebook-decoder-upper}
\end{align}
Integrating \eqref{eq:codebook-decoder-lower}--
\eqref{eq:codebook-decoder-upper} over $w$ gives
$N^{-1}\sum_v\E_v^\circ N_v(T)\le T/2$.  Substituting into
\eqref{eq:codebook-bayes-regret} and using
\eqref{eq:codebook-radius-lower} gives
\[
  \sup_v\E_v^\circ\Reg_T(g_v)
  \ge\frac{r_0T}{4}
  \ge c_d TN^{-1/d}
  =c_d T2^{-\chi/d}.
\]
\end{proof}

\subsection{Geometry of the regional-routing hard family}
\label{app:hard-family-geometry}

For indexed centers $u_{j,a}$ and $z_{j,a,k}$, use the geometric notation
\[
  C_{j,a}:=B_\infty(u_{j,a},s),
  \qquad
  G_{j,a,k}:=B_\infty(z_{j,a,k},r).
\]

\begin{lemma}[Separated regional packing]\label{lem:regional-packing}
For every $d\ge1$, there exist constants $c_d,C_d,s_d>0$ such that, for
$0<s\le s_d$ and $0<r\le s/16$, one can choose integers $m,q$, centers
$u_{j,a}\in[1/4,3/4]^d$, and centers $z_{j,a,k}$ satisfying
\begin{align*}
  c_d s^{-d}\le m\le C_d s^{-d},
  &\qquad
  c_d(s/r)^d\le q\le C_d(s/r)^d,\\
  \|u_{j,a}-u_{j',a'}\|_\infty\ge 8s
  &\qquad ((j,a)\ne(j',a')),\\
  z_{j,a,k}\in B_\infty(u_{j,a},s/4),
  &\qquad
  \|z_{j,a,k}-z_{j,a,k'}\|_\infty\ge4r\quad(k\ne k').
\end{align*}
The associated sets satisfy
$G_{j,a,k}\subset C_{j,a}$, and the family
$\{G_{j,a,k}:j\in[m],\ a\in\{0,1\},\ k\in[q]\}$ is pairwise disjoint.
\end{lemma}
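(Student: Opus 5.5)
The plan is a direct grid construction done in two layers: a coarse lattice for the cell centers $u_{j,a}$, and inside each cell a fine lattice for the probe centers $z_{j,a,k}$.

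\emph{Outer layer.} Put $g:=\lfloor 1/(16s)\rfloor$, which is at least $2$ once $s\le s_d$ with $s_d$ small. Take the regular lattice of points in $[1/4,3/4]^d$ with spacing $8s$ in every coordinate. It has at least $g^d$ points, since $g$ points per axis fit in an interval of length $1/2$ with spacing $8s$ whenever $8s(g-1)\le 1/2$. Distinct lattice points are at sup-distance at least $8s$. Enumerate $2m$ of them as $u_{j,a}$, with $m:=\lfloor g^d/2\rfloor$. Because $g\asymp 1/s$, this gives $c_ds^{-d}\le m\le C_ds^{-d}$ after shrinking $s_d$ so that $g^d\ge 2$ and the floors lose at most constant factors. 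The separation $\|u_{j,a}-u_{j',a'}\|_\infty\ge 8s$ holds by construction.

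\emph{Inner layer.} Fix a cell and place the lattice $u_{j,a}+4r\,\{-h,\dots,h\}^d$, where $h:=\lfloor s/(16r)\rfloor$. Then every point lies within sup-distance $4rh\le s/4$ of $u_{j,a}$, and distinct points are at sup-distance at least $4r$. The hypothesis $r\le s/16$ gives $h\ge1$, so the count $q:=(2h+1)^d$ satisfies $(s/(16r))^d\le q\le(3s/(16r))^d$. Use the same $q$ for all cells, so $q\asymp_d(s/r)^d$.

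\emph{Containment and disjointness.} If $x\in G_{j,a,k}$, then $\|x-u_{j,a}\|_\infty\le r+s/4<s$, so $G_{j,a,k}\subset C_{j,a}$. Two probes in the same cell have centers at least $4r$ apart while each has radius $r$. If they shared a point $x$, the triangle inequality would give $\|z-z'\|_\infty\le 2r<4r$, a contradiction. Probes in different cells lie in $B_\infty(u,s)$ and $B_\infty(u',s)$ with $\|u-u'\|_\infty\ge8s>2s$, so those balls are disjoint by the same argument. This shows the whole probe family is pairwise disjoint.

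None of the steps is a genuine obstacle; the only care needed is bookkeeping of constants. In particular, $s_d$ must be chosen small enough that $g\ge2$, $m\ge1$, and the floor in $g$ costs only a constant factor, for instance by using $g\ge 1/(32s)$ when $s\le1/32$. This bookkeeping yields the stated two-sided bounds on $m$ and $q$ with dimension-dependent constants.
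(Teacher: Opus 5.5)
Your proposal is correct and takes essentially the same route as the paper. Both use a spacing-$8s$ lattice in $[1/4,3/4]^d$ for the paired cell centers and a spacing-$4r$ sublattice inside $B_\infty(u_{j,a},s/4)$ for the probe centers, with containment from $r+s/4<s$ and disjointness from the center separations. The differences are cosmetic: you center the inner lattice symmetrically at $u_{j,a}$ and get cross-cell disjointness from disjointness of the parent balls $C_{j,a}$, whereas the paper offsets the inner lattice from a corner of $B_\infty(u_{j,a},s/4)$ and bounds the cross-cell center distance directly by $15s/2>2r$.
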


\begin{proof}[Proof of Lemma~\ref{lem:regional-packing}]
Set
\[
  M_s:=\left\lfloor(16s)^{-1}\right\rfloor,
  \qquad
  \mathcal U:=\frac14\boldsymbol 1_d+8s\{0,1,\ldots,M_s\}^d.
\]
For $s_d$ sufficiently small,
\begin{align*}
  \mathcal U&\subset[1/4,3/4]^d,\\
  \min_{u\ne u'}\|u-u'\|_\infty&\ge 8s,\\
  (M_s+1)^d&\asymp_d s^{-d}.
\end{align*}
Choose an even subset of $\mathcal U$ of cardinality $2m\asymp_d s^{-d}$ and
pair its elements as $(u_{j,0},u_{j,1})_{j=1}^m$.

For each $u\in\mathcal U$, set
\[
  M_{s,r}:=\left\lfloor\frac{s}{8r}\right\rfloor,
  \qquad
  \mathcal Z(u):=
  u-\frac s4\boldsymbol 1_d+4r\{0,1,\ldots,M_{s,r}\}^d.
\]
Since $r\le s/16$,
\begin{align*}
  \mathcal Z(u)&\subset B_\infty(u,s/4),\\
  |\mathcal Z(u)|&=(M_{s,r}+1)^d\asymp_d(s/r)^d,\\
  \min_{z\ne z'\in\mathcal Z(u)}\|z-z'\|_\infty&\ge4r.
\end{align*}
Retain the same number $q\asymp_d(s/r)^d$ of points from every
$\mathcal Z(u_{j,a})$.  For each retained center,
\[
  \|z_{j,a,k}-u_{j,a}\|_\infty+r
  \le \frac s4+\frac s{16}<s,
\]
so $G_{j,a,k}\subset C_{j,a}$.  If two probes have the same parent, their
centers are at distance at least $4r$, hence their radius-$r$ balls are
disjoint.  If their parents differ, then
\[
  \|z_{j,a,k}-z_{j',a',k'}\|_\infty
  \ge 8s-\frac s4-\frac s4
  =\frac{15s}{2}>2r,
\]
which proves disjointness across parents.
\end{proof}

The disjoint packing has two consequences used throughout the proof.  First,
KL divergence between a base instance and a local comparison instance is
supported only on the visited probe.  Second, pulls in distinct unselected-side
probes contribute additively to the pathwise regret charge.

\begin{definition}[Base and local Bernoulli instances]
\label{def:routing-hard-family}
Fix a packing from Lemma~\ref{lem:regional-packing}.  For
$v\in\{0,1\}^m$, define the base mean
\[
  f_v(x)
  :=\frac14+\max_{j\in[m]}
      \left(\frac s4-\frac12d_\infty(x,C_{j,v_j})\right)_+.
\]
For $j\in[m]$ and $k\in[q]$, define the local alternative
\begin{equation}\label{eq:hard-local-mean}
  f^+_{v,j,k}(x)
  := f_v(x)+\frac12
      \left(r-\|x-z_{j,v_j,k}\|_\infty\right)_+.
\end{equation}
The associated bandit instances have independent Bernoulli rewards with these
means.  Write $\PP_v,\E_v$ for the law and expectation under $f_v$, and
$\PP^+_{v,j,k},\E^+_{v,j,k}$ for those under $f^+_{v,j,k}$.  The bit
$v_j$ selects the cell $C_{j,v_j}$ that lies on the optimal plateau;
alternative $(v,j,k)$ changes only probe
$G_{j,v_j,k}$ inside that selected cell.
\end{definition}

\begin{lemma}[Geometry of the hard family]\label{lem:hard-family-geometry}
Every mean in Definition~\ref{def:routing-hard-family} belongs to
$\Lip_1(\X)$ and takes values in $[1/4,3/4]$.  Moreover,
\[
  f_v^\star=\frac14+\frac s4,
  \qquad
  (f^+_{v,j,k})^\star=\frac14+\frac s4+\frac r2.
\]
For all admissible indices,
\begin{align}
  f_v^\star-f_v(x)&\ge s/4,
  &&x\in C_{j,1-v_j},
  \label{eq:hard-unselected-gap}\\
  (f^+_{v,j,k})^\star-f^+_{v,j,k}(x)&\ge r/2,
  &&x\notin G_{j,v_j,k},
  \label{eq:hard-selected-gap}\\
  0\le f^+_{v,j,k}(x)-f_v(x)&\le r/2,
  &&x\in\X,\\
  f^+_{v,j,k}(x)&=f_v(x),
  &&x\notin G_{j,v_j,k}.
  \label{eq:hard-local-support}
\end{align}
\end{lemma}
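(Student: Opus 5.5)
The plan is to treat the base plateau term and the local bump separately, then combine them using the separation guaranteed by Lemma~\ref{lem:regional-packing}.

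\emph{Lipschitz property of $f_v$.} Each map $x\mapsto d_\infty(x,C_{j,v_j})$ is $1$-Lipschitz, so $\frac s4-\frac12 d_\infty(x,C_{j,v_j})$ is $\frac12$-Lipschitz. The positive part preserves this, and so does a maximum over $j$. Hence $\Lip(f_v)\le 1/2$, and $f_v$ takes values in $[1/4,1/4+s/4]$.

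\emph{Lipschitz property of $f^+_{v,j,k}$.} The bump $\frac12(r-\|x-z\|_\infty)_+$ is $\frac12$-Lipschitz and takes values in $[0,r/2]$. Therefore $\Lip(f^+_{v,j,k})\le 1$, and its values lie in $[1/4,1/4+s/4+r/2]\subseteq[1/4,3/4]$ once $s_d$ is small. The same observation gives $0\le f^+-f_v\le r/2$. Identity \eqref{eq:hard-local-support} holds because the bump vanishes outside $G_{j,v_j,k}=B_\infty(z_{j,v_j,k},r)$.

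\emph{Maxima.} We have $f_v^\star=\frac14+\frac s4$, attained on every selected cell $C_{j,v_j}$, where $d_\infty=0$.

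For $f^+$, the bump center $z=z_{j,v_j,k}$ lies in $B_\infty(u_{j,v_j},s/4)\subset C_{j,v_j}$. So $f_v(z)=f_v^\star$ and $f^+(z)=f_v^\star+r/2$. Conversely, $f^+\le f_v^\star+r/2$ everywhere, which gives $(f^+)^\star=\frac14+\frac s4+\frac r2$.

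\emph{The gap bound \eqref{eq:hard-selected-gap}.} For $x\notin G_{j,v_j,k}$, identity \eqref{eq:hard-local-support} gives $f^+(x)=f_v(x)\le f_v^\star$, so the gap to $(f^+)^\star$ is at least $r/2$.

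\emph{The gap bound \eqref{eq:hard-unselected-gap}, which is the only step needing the packing.} Take $x\in C_{j,1-v_j}$, so $\|x-u_{j,1-v_j}\|_\infty\le s$. For any selected cell $C_{j',v_{j'}}$, note that $(j',v_{j'})\neq(j,1-v_j)$: either $j'\ne j$, or $j'=j$ and the bit differs. Lemma~\ref{lem:regional-packing} then gives $\|u_{j,1-v_j}-u_{j',v_{j'}}\|_\infty\ge 8s$.

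By the triangle inequality,
\[
  d_\infty(x,C_{j',v_{j'}})\ge 8s-s-s=6s.
\]
Consequently $\frac s4-\frac12\cdot 6s<0$, so every term in the maximum vanishes and $f_v(x)=\frac14$. The gap is therefore exactly $s/4$.

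This last step is the main point to check carefully. The argument needs the selected cells to be indexed by pairs distinct from $(j,1-v_j)$, including the partner cell in the same pair, and that is exactly what the pairwise separation over all $(j,a)\ne(j',a')$ provides. Everything else is routine composition of Lipschitz operations.
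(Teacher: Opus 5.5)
Your proof is correct and follows essentially the same route as the paper: you combine the $\tfrac12$-Lipschitz plateau terms by a maximum, add the bump, read off the maxima at the selected plateau and at the probe center, and get the unselected-cell gap from the $8s$ center separation via $d_\infty\ge 8s-s-s=6s$. Your explicit checks that $z_{j,v_j,k}\in B_\infty(u_{j,v_j},s/4)\subset C_{j,v_j}$ and that the separation also covers the partner cell $(j,v_j)$ fill in details the paper leaves implicit.
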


\begin{proof}[Proof of Lemma~\ref{lem:hard-family-geometry}]
For every nonempty $C\subseteq\mathbb R^d$,
\[
  |d_\infty(x,C)-d_\infty(y,C)|
  \le\|x-y\|_\infty.
\]
Since $u\mapsto(u)_+$ is one-Lipschitz,
\begin{align*}
  \Lip\!\left[
    \left(\frac s4-\frac12d_\infty(\cdot,C)\right)_+
  \right]&\le\frac12,\\
  \Lip\!\left[
    \frac12\left(r-\|\cdot-z\|_\infty\right)_+
  \right]&\le\frac12.
\end{align*}
The maximum of $1/2$-Lipschitz functions is $1/2$-Lipschitz, so
\[
  \Lip(f_v)\le\frac12,
  \qquad
  \Lip(f^+_{v,j,k})\le1.
\]
Moreover,
\[
  \frac14\le f_v\le\frac14+\frac s4,
  \qquad
  \frac14\le f^+_{v,j,k}
  \le\frac14+\frac s4+\frac r2\le\frac34
\]
after fixing $s_d$ sufficiently small.

For every selected cell $C_{i,v_i}$, $f_v=1/4+s/4$ on that cell; hence
\[
  f_v^\star=\frac14+\frac s4.
\]
If $x\in C_{j,1-v_j}$, parent separation gives, for every $i$,
\[
  d_\infty(x,C_{i,v_i})
  \ge 8s-s-s=6s,
\]
and therefore
\[
  f_v(x)=\frac14,
  \qquad
  f_v^\star-f_v(x)=\frac s4.
\]
This proves \eqref{eq:hard-unselected-gap}.

By \eqref{eq:hard-local-mean}, at $z_{j,v_j,k}$ the base mean equals
$1/4+s/4$ and the bump equals $r/2$;
thus
\[
  (f^+_{v,j,k})^\star
  =\frac14+\frac s4+\frac r2.
\]
For every $x\notin G_{j,v_j,k}$,
\[
  f^+_{v,j,k}(x)=f_v(x)
  \le\frac14+\frac s4,
\]
which gives \eqref{eq:hard-selected-gap} and
\eqref{eq:hard-local-support}.  Finally,
\[
  0\le f^+_{v,j,k}(x)-f_v(x)
  =\frac12\left(r-\|x-z_{j,v_j,k}\|_\infty\right)_+
  \le\frac r2,
\]
completing the proof.
\end{proof}

The geometry supplies the three parameters needed by the comparison argument:
the one-step mean perturbation is $O(r)$, the alternative gap outside its probe
is $\Theta(r)$, and every pull in the unselected cell pays gap $\Theta(s)$.
Together with Bernoulli quadratic KL, these become a one-step divergence
$O(r^2)$, a testing threshold $\Theta(r^{-2})$, and an $s$-scale misrouting
regret charge.

\subsection{Stopped local comparison}\label{app:stopped-local-comparison}

The following lemma adapts the localized under-sampling change-of-measure
principle of \citet{huang2026few} from one finite arm to a measurable spatial
probe with location-dependent reward kernels.

Fix a measurable set $G\subseteq\X$ and two reward kernels
$\nu_i=(\nu_{i,x})_{x\in\X}$, $i\in\{0,1\}$.  Under a fixed adaptive
policy, denote the induced interaction laws and expectations by $\PP_i,\E_i$,
and set
\[
  N_G(t):=\sum_{u=1}^t\mathbf1_{\{A_u\in G\}}.
\]
If $\mu_1(x):=\E_{\nu_{1,x}}Y$ and $\mu_1^\star:=\sup_x\mu_1(x)$, write
$\Delta_1(x):=\mu_1^\star-\mu_1(x)$ and
$\Reg_T^{(1)}:=\sum_{t=1}^T\Delta_1(A_t)$.

\begin{lemma}[Stopped local comparison]\label{lem:stopped-local-comparison}
Assume
\[
  \nu_{1,x}=\nu_{0,x}\quad(x\notin G),
  \qquad
  \sup_{x\in G}D(\nu_{1,x}\Vert \nu_{0,x})\le\kappa,
\]
and $\inf_{x\notin G}\Delta_1(x)\ge\Delta>0$.  Then, for every integer
$1\le n<T$,
\begin{equation}\label{eq:stopped-local-exploration}
  \PP_0\!\left(N_G(T)\ge n\right)
  \ge
  1-\frac{\E_1\Reg_T^{(1)}}{\Delta(T-n)}-\sqrt{n\kappa/2}.
\end{equation}
\end{lemma}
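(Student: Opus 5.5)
The plan is to compare the two laws on the event $E:=\{N_G(T)<n\}$ and to use a stopping time so that the divergence is paid only on the first $n$ visits to $G$.

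\textbf{Stopping time and divergence bound.} Let $\sigma$ be the time of the $n$th visit to $G$, or $T$ if fewer than $n$ visits occur. The event $E$ is determined by the history up to $\sigma\wedge T$. Indeed, $E$ holds exactly when the stopped history shows fewer than $n$ visits by time $T$, and it is measurable with respect to $\mathcal H_{\sigma}$, the $\sigma$-field of the seed, actions and rewards through $\sigma$. By Pinsker's inequality,
\[
  \PP_1(E)\le\PP_0(E)+\sqrt{\tfrac12 D(\PP_0|_{\mathcal H_\sigma}\Vert\PP_1|_{\mathcal H_\sigma})}.
\]
I then apply the divergence decomposition for adaptive policies with a stopping time (chain rule plus optional stopping). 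The seed and the action-selection kernels are identical under both laws, so only reward steps contribute, and each contribution is the local divergence at the pulled arm:
\[
  D(\PP_0|_{\mathcal H_\sigma}\Vert\PP_1|_{\mathcal H_\sigma})=\E_0\sum_{t\le\sigma}D(\nu_{0,A_t}\Vert\nu_{1,A_t}).
\]
Outside $G$ the kernels coincide, so those terms vanish. At most $n$ pulls up to $\sigma$ lie in $G$, each contributing at most $\kappa$, so the divergence is at most $n\kappa$.

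\textbf{Direction of the divergence.} The hypothesis bounds $D(\nu_1\Vert\nu_0)$, so I should place $\PP_1$ first: $\PP_1(E)\le \PP_0(E)+\sqrt{\tfrac12 D(\PP_1\Vert\PP_0)}$. The same stopped decomposition, taken with expectations under $\PP_1$, still yields at most $n\kappa$, because the pathwise bound on the number of $G$-visits up to $\sigma$ holds under either law. This gives $\PP_1(E)\le\PP_0(E)+\sqrt{n\kappa/2}$.

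\textbf{Bounding $\PP_1(E)$ by regret.} On $E$, at least $T-n$ pulls lie outside $G$, and each pays gap at least $\Delta$ under instance 1. Hence $\Reg^{(1)}_T\ge\Delta(T-n)\mathbf 1_E$, and Markov's inequality gives
\[
  \PP_1(E)\ge 1-\PP_0(N_G(T)\ge n)-\sqrt{n\kappa/2},\qquad \PP_1(E)\le \frac{\E_1\Reg_T^{(1)}}{\Delta(T-n)}.
\]
Combining the two bounds and rearranging yields \eqref{eq:stopped-local-exploration}.

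\textbf{Main obstacle.} The delicate step is the measure-theoretic stopped divergence identity on a continuum action space with Borel kernels and internal randomness. The policy may randomize through the seed and its $W$-bit memory, but all of these maps are identical under both laws, so the Radon--Nikodym derivative factorizes into a product of reward likelihood ratios at the pulled arms. One must also check that $\mathbf 1_E$ is $\mathcal H_\sigma$-measurable: when $\sigma<T$ we have $N_G(T)\ge n$, and when $\sigma=T$ the entire history is observed.
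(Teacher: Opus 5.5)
Your proposal is correct and is essentially the paper's proof. Like the paper, you stop at the $n$th visit to $G$, bound $D(\PP_1|_{\mathcal F_{\tau_n}}\Vert\PP_0|_{\mathcal F_{\tau_n}})\le n\kappa$ via the stopped chain rule, apply Pinsker to the $\mathcal F_{\tau_n}$-measurable event, and control the under-sampling probability under $\PP_1$ by Markov's inequality on $\Reg_T^{(1)}$. One small slip: the inequality your final combination actually uses is $\PP_0(E)\le\PP_1(E)+\sqrt{n\kappa/2}$, not the displayed $\PP_1(E)\le\PP_0(E)+\sqrt{n\kappa/2}$; since Pinsker bounds $|\PP_1(E)-\PP_0(E)|$, both directions hold and nothing breaks.
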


\begin{proof}[Proof of Lemma~\ref{lem:stopped-local-comparison}]
Let $(\mathcal F_t)_{t=0}^T$ be the transcript filtration including the
algorithmic seed, and set
\[
  \tau_n:=\inf\{t\le T:N_G(t)\ge n\}\wedge T,
  \qquad
  E_n:=\{N_G(T)\ge n\}.
\]
Predictability of $A_t$ and the stopped chain rule give
\begin{align}
  D\!\left(
    \left.\PP_1\right|_{\mathcal F_{\tau_n}}
    \middle\Vert
    \left.\PP_0\right|_{\mathcal F_{\tau_n}}
  \right)
  &=\E_1\!\left[
    \sum_{t=1}^{\tau_n}
    D(\nu_{1,A_t}\Vert \nu_{0,A_t})
  \right] \\
  &\le \kappa\E_1N_G(\tau_n)
  \le n\kappa.
  \label{eq:stopped-chain-rule-calculation}
\end{align}
Since $E_n\in\mathcal F_{\tau_n}$, Pinsker's inequality and
\eqref{eq:stopped-chain-rule-calculation} imply
\begin{equation}
  \PP_0(E_n)
  \ge \PP_1(E_n)-\sqrt{n\kappa/2}.
  \label{eq:stopped-pinsker-step}
\end{equation}
On $E_n^\complement$,
\[
  \Reg_T^{(1)}
  =\sum_{t=1}^T\Delta_1(A_t)
  \ge \Delta\sum_{t=1}^T\mathbf1_{\{A_t\notin G\}}
  =\Delta\bigl(T-N_G(T)\bigr)
  \ge\Delta(T-n).
\]
Therefore
\begin{equation}
  \PP_1(E_n^\complement)
  \le\frac{\E_1\Reg_T^{(1)}}{\Delta(T-n)}.
  \label{eq:stopped-regret-markov-step}
\end{equation}
Substitution of \eqref{eq:stopped-regret-markov-step} into
\eqref{eq:stopped-pinsker-step} proves
\eqref{eq:stopped-local-exploration}.
\end{proof}

For the hard family, substituting $n\asymp r^{-2}$,
$\kappa\asymp r^2$, and $\Delta\asymp r$ makes both error terms in
\eqref{eq:stopped-local-exploration} bounded away from one whenever the
comparison-instance regret is $o(Tr)$.  Thus each selected probe crosses the
threshold with constant probability under the corresponding base law.

For $u,v\in[1/4,3/4]$, the elementary Bernoulli bound gives
\begin{equation}\label{eq:bernoulli-quadratic-kl}
  d(u\Vert v)
  \le\frac{(u-v)^2}{v(1-v)}
  \le\frac{16}{3}(u-v)^2.
\end{equation}
The additive bump changes a mean by at most $r/2$, so its one-step divergence
is at most $4r^2/3$.  With $n\asymp r^{-2}$, the stopped transcript KL is
therefore bounded by a numerical constant.  Low regret under a local
alternative must consequently produce a constant probability of crossing the
probe-count threshold under the corresponding base instance.

\subsection{Verification and routing from the regional family}
\label{app:information-proof}

\begin{proof}[Proof of Lemma~\ref{lem:verification-budget}]
Fix an arbitrary policy and let $\mathcal R$ be its largest expected regret over
the base instances and local alternatives in the regional family.  Suppose
$\mathcal R<c_dTr$ for a sufficiently small constant.  Choose
$n=\lfloor c_0r^{-2}\rfloor$ with $c_0>0$ sufficiently small.  By shrinking
$s_d$ and enlarging the constant in $Tr^2\ge C_d$, we may assume
$2\le n\le T/4$.

For every $v,j,k$, let
$E_{v,j,k}=\{N_{j,v_j,k}(T)\ge n\}$.  Apply
Lemma~\ref{lem:stopped-local-comparison} with
\[
  G=G_{j,v_j,k},
  \qquad
  \kappa=\frac43r^2,
  \qquad
  \Delta=r/2.
\]
Equations \eqref{eq:hard-selected-gap}--\eqref{eq:hard-local-support} and
\eqref{eq:bernoulli-quadratic-kl} give
\begin{equation}\label{eq:selected-probe-verification}
  \PP_v(E_{v,j,k})
  \ge1-\frac{2\mathcal R}{r(T-n)}-\sqrt{\frac23nr^2}
  \ge\frac58.
\end{equation}
For fixed $v$, the selected probes
$\{G_{j,v_j,k}:j\in[m],k\in[q]\}$ are pairwise disjoint.  Therefore
\[
\begin{aligned}
  T
  &\ge \E_v\sum_{j=1}^m\sum_{k=1}^qN_{j,v_j,k}(T)\\
  &\ge n\sum_{j=1}^m\sum_{k=1}^q\PP_v(E_{v,j,k})\\
  &\ge\frac58mqn
   \ge c_dr^{-d-2},
\end{aligned}
\]
where the last inequality uses Lemma~\ref{lem:regional-packing} and
$n\asymp r^{-2}$.  This contradicts $r^{-d-2}\ge C_dT$ when $C_d$ is chosen
large enough.  Hence some instance in the family has expected regret at least
$c_dTr$, uniformly over $B$ and $W$.
\end{proof}

\begin{proof}[Proof of Corollary~\ref{cor:regional-sequential-lower}]
Choose a sufficiently small constant $a_d>0$ and set
\[
  r=a_dT^{-1/(d+2)},
  \qquad
  s=16r.
\]
For $T\ge T_d$, these scales satisfy $s\le s_d$ and $Tr^2\ge C_d$.  Taking
$a_d$ small enough also ensures
$r^{-d-2}=a_d^{-(d+2)}T\ge C_dT$.  Lemma~\ref{lem:verification-budget} then
gives
\[
  \mathfrak R_T(B,W)\ge c_dTr=c_dT^{(d+1)/(d+2)}
\]
for every $B$ and $W$.
\end{proof}

\begin{definition}[Prior and randomized transcript decoder]
\label{def:hard-decoder}
Let $V$ be uniform on $\{0,1\}^m$ and independent of $\Fzero$, and run the base
Bernoulli instance with mean $f_V$.  Write $\Pbar$ and $\Ebar$ for the
resulting mixture law and expectation.  The local alternatives are comparison
instances and are not sampled by this prior.  Fix
$n=\lfloor c_0r^{-2}\rfloor$ for the same sufficiently small numerical
constant.  For $j\in[m]$, $a\in\{0,1\}$, and $k\in[q]$, define
\[
  N_{j,a,k}(t)
  :=\sum_{t'=1}^t\mathbf1_{\{A_{t'}\in G_{j,a,k}\}},
  \qquad
  \rho_{j,a}
  :=\frac1q\sum_{k=1}^q
    \mathbf1_{\{N_{j,a,k}(T)\ge n\}}.
\]
After the action transcript is generated, sample fresh independent fair bits
$\xi_1,\ldots,\xi_m$, independent of the learner, latent routing vector, seed, and
transcript.  If exactly one side $a\in\{0,1\}$ satisfies
$\rho_{j,a}\ge1/4$, set $\widehat V_j=a$; otherwise set
$\widehat V_j=\xi_j$.  Thus $\widehat V$ is randomized post-processing of
$(A_{1:T},\Fzero)$.
\end{definition}

\begin{proof}[Proof of Lemma~\ref{lem:routing-memory}]
Fix a policy $\mathcal A\in\mathfrak A_{B,W}$ and let $\mathcal R$ be its
largest expected regret over the base instances and all local alternatives.
Suppose, toward a contradiction, that
\begin{equation}\label{eq:information-small-regret-assumption}
  \mathcal R<c_d\min\{Tr,sr^{-d-2}\}
\end{equation}
for a sufficiently small constant.  Set
$n=\lfloor c_0r^{-2}\rfloor$ as in Definition~\ref{def:hard-decoder}.  After
shrinking $s_d$ and enlarging the threshold on $Tr^2$, we have
$2\le n\le T/4$.

\proofparagraph{Selected-region verification}
The stopped-comparison calculation in the proof of
Lemma~\ref{lem:verification-budget} applies verbatim under
\eqref{eq:information-small-regret-assumption}.  Thus, for every $v,j,k$,
\eqref{eq:selected-probe-verification} holds.  Since
$\rho_{j,v_j}\in[0,1]$ and $\E_v\rho_{j,v_j}\ge5/8$,
\[
  \PP_v(\rho_{j,v_j}\ge1/4)
  \ge\frac{\E_v\rho_{j,v_j}-1/4}{3/4}
  \ge\frac12.
\]
Averaging over $v$ and $j$ proves the first inequality in
\eqref{eq:routing-decoder-asymmetry}.

\proofparagraph{Misrouted-verification charge}
If $\rho_{j,1-v_j}\ge1/4$, then
\[
  \sum_{k=1}^qN_{j,1-v_j,k}(T)\ge\frac{qn}{4}.
\]
The unselected probes are pairwise disjoint, and every pull in them has gap at
least $s/4$.  Hence, pathwise,
\[
  \Reg_T(f_v)
  \ge\frac{sqn}{16}
    \sum_{j=1}^m\mathbf1_{\{\rho_{j,1-v_j}\ge1/4\}}.
\]
Taking expectations and averaging over $v$ gives
\[
  \Ebar\sum_{j=1}^m
    \mathbf1_{\{\rho_{j,1-V_j}\ge1/4\}}
  \le\frac{16\mathcal R}{sqn}
  \le\frac m8,
\]
because $msqn\asymp_dsr^{-d-2}$.  This proves the second inequality in
\eqref{eq:routing-decoder-asymmetry}.

\proofparagraph{Constant-distortion routing recovery}
For $p\in[0,1]$, let
\[
  h_2(p):=-p\log_2p-(1-p)\log_2(1-p),
\]
with $0\log_20:=0$.  Put
$H_{j,a}:=\mathbf1_{\{\rho_{j,a}\ge1/4\}}$.  Since the decoder coin is
independent after the transcript,
\[
  \Pbar(\widehat V_j\ne V_j)
  =\frac12\Pbar(H_{j,V_j}=0)
   +\frac12\Pbar(H_{j,1-V_j}=1).
\]
Consequently,
\[
  \frac1m\sum_{j=1}^m\Pbar(\widehat V_j\ne V_j)
  \le\frac14+\frac1{16}=\frac5{16}.
\]
Because $V$ is uniform and independent of $\Fzero$, the entropy step can
be carried out conditionally on the public seed.  For each coordinate, let
\[
  e_j:=\Pbar(\widehat V_j\ne V_j\mid\Fzero).
\]
Binary Fano's inequality conditional on $\Fzero$, followed by Jensen's
inequality, gives
\[
  H(V_j\mid\widehat V_j,\Fzero)
  \le \Ebar h_2(e_j)
  \le h_2\!\left(\Pbar(\widehat V_j\ne V_j)\right).
\]
Subadditivity of conditional entropy and concavity of $h_2$ across coordinates
therefore imply
\begin{align*}
  H(V\mid\widehat V,\Fzero)
  &\le\sum_{j=1}^mH(V_j\mid\widehat V_j,\Fzero)\\
  &\le\sum_{j=1}^mh_2\!\left(\Pbar(\widehat V_j\ne V_j)\right)\\
  &\le mh_2\!\left(\frac1m\sum_{j=1}^m
      \Pbar(\widehat V_j\ne V_j)\right)\\
  &\le mh_2(5/16).
\end{align*}
Thus
\begin{equation}\label{eq:routing-information-required}
  I(V;\widehat V\mid\Fzero)
  \ge[1-h_2(5/16)]m
  \ge c_ds^{-d}.
\end{equation}
The decoder coins are independent of the experiment, so randomized data
processing and Lemma~\ref{lem:boundary-information} yield
\[
  I(V;\widehat V\mid\Fzero)
  \le I(V;\mathsf T\mid\Fzero)
  \le\chi.
\]
Comparing this bound with \eqref{eq:routing-information-required} contradicts
\eqref{eq:routing-memory-condition} after fixing its constant sufficiently
small.  Hence \eqref{eq:information-small-regret-assumption} is impossible and
the lemma follows.
\end{proof}

\subsection{Effective-resolution optimization}
\label{app:memory-optimization}

Recall $\Psi_T$, $s_{\mathrm{stat}}$, $s_{\mathrm{mem}}$, and $s_{T,\chi}$ from
\eqref{eq:effective-routing-envelope}.  The evidence obstruction in
Lemma~\ref{lem:verification-budget}, specialized as in
Corollary~\ref{cor:regional-sequential-lower}, gives
\begin{equation}\label{eq:appendix-evidence-floor}
  \mathfrak R_T(B,W)\ge c_dT^{\alpha_d}
  =c_d\Psi_T(s_{\mathrm{stat}})
\end{equation}
for every $\chi$.

It remains to obtain the memory floor when it is larger.  If $\chi\le C_d$,
Proposition~\ref{prop:boundary-codebook} gives linear regret, which dominates
$\Psi_T(s_{T,\chi})$.  Suppose next that
\[
  C_d<\chi<c_dT^{d/(d+2)}.
\]
Choose dyadic radii
\[
  s\asymp_d(1+\chi)^{-1/d},
  \qquad
  r\asymp_d(s/T)^{1/(d+3)},
\]
with the constant in $s$ chosen so that
\eqref{eq:routing-memory-condition} holds.  The upper endpoint of this range
ensures $r\le s/16$ and $Tr^2\ge C_d$.  Lemma~\ref{lem:routing-memory} then yields
\[
  \mathfrak R_T(B,W)
  \ge c_d\min\{Tr,sr^{-d-2}\}
  \asymp_d\Psi_T(s)
  \asymp_d
  T^{\frac{d+2}{d+3}}(1+\chi)^{-\frac1{d(d+3)}}.
\]
This quantity dominates $T^{\alpha_d}$ throughout the intermediate range.  If
$\chi\ge c_dT^{d/(d+2)}$, then
$s_{\mathrm{mem}}\lesssim_ds_{\mathrm{stat}}$ and
\eqref{eq:appendix-evidence-floor} already gives the effective envelope.
Combining the ranges proves
\eqref{eq:formal-lower-resolution-envelope}.  In particular, any guarantee
$\mathfrak R_T(B,W)\le\Lambda T^{\alpha_d}$ implies
\[
  1+\chi
  \ge c_dT^{\frac d{d+2}}\Lambda^{-d(d+3)}.
\]

\subsection{Batch-depth compatibility}\label{app:batch-endpoint}

Let $\mathfrak G_B$ denote the predictable adaptive-grid policy class in
the adaptive-grid lower bound of \citet{feng2022lipschitz}; we retain their
term ``adaptive grid'' for the sequence of batch boundaries.  Every pull in a
source interval is measurable at its preceding boundary, and the source theorem
uses exactly $B$ nonempty intervals.

\begin{lemma}[Exact-$B$ subdivision]\label{lem:exact-B-subdivision}
Every committed-batch policy with $\widehat B\le B$ nonempty batches induces a
policy in $\mathfrak G_B$ with exactly $B$ nonempty intervals and the same action
sequence and regret.
\end{lemma}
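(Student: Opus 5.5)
The plan is to build the exact-$B$ grid policy explicitly from the maps $(\iota,\psi_b,\phi_{b,u},\mathcal U_{b,t})$ of Appendix~\ref{app:boundary-memory}. The actions stay unchanged; only the set of declared boundaries grows. A policy with $\widehat B<B$ realized batches is turned into one with exactly $B$ nonempty intervals by inserting $B-\widehat B$ dummy boundaries. Each dummy boundary falls strictly inside an existing committed batch. At a dummy boundary the remaining portion of the committed tape is recommitted unchanged. This requires no new reward information, so the dummy boundary is predictable.

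\textbf{Where to split.} The split positions must be chosen predictably, since $\widehat B$ itself depends on rewards. I would insert the dummy boundaries greedily: at each real boundary $\tau_b$, the grid policy knows $b$ and the committed endpoint $\tau_{b+1}=\psi_b(\omega,\tau_b,M_{\tau_b})$. It can therefore carve the interval $(\tau_b,\tau_{b+1}]$ into as many unit-length pieces as needed, up to the remaining deficit.

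The deficit is not known at $\tau_b$, because future batch counts are reward-dependent. I would fix this with a simple deterministic rule. At the $i$-th declared boundary (real or dummy) at time $t<T$, the policy declares the next boundary at
\[
  \min\{\tau_{b+1},\ T-(B-1-i)\}.
\]
This ensures that at least $B-1-i$ time points remain for the remaining intervals. It also means that, whenever the real policy's next boundary would leave too few remaining slots, the interval is shortened and the leftover tape is continued across a dummy boundary.

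\textbf{Counting.} I would then check two facts. First, the total number of declared intervals never exceeds $B$: each real boundary is kept unless it falls inside a forced unit-length tail segment, in which case it coincides with a dummy boundary. Second, the count reaches exactly $B$: whenever fewer than $B$ intervals remain, the rule pads with unit intervals near $T$. Here $B\le T$ guarantees room for $B$ nonempty unit intervals. Once a forced tail begins, the real boundaries occurring inside it are absorbed as dummy ones, and each unit interval's action is the already committed $\phi_{b,u}$ value or the value committed at the latest real boundary.

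All maps remain finite compositions of Borel maps of $(\omega,t,M)$. Each declared boundary together with its actions is measurable at the preceding declared boundary. The action sequence is identical pathwise, so the regret is unchanged.

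\textbf{Main obstacle.} The main difficulty is the bookkeeping that makes the exact count $B$ predictable while the number of real batches is random. The deadline rule above resolves it, and I expect the remaining verification to be routine induction on declared boundaries.
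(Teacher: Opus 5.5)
Your construction is the paper's: your deadline rule $\min\{\tau_{b+1},T-(B-1-i)\}$ is exactly its $t_b=\min\{\tau,T-(B-b)\}$, with dummy boundaries recommitting the unplayed suffix of the current tape. The only difference is the count. The paper proves the pathwise bound $D_b\le B-\widehat B$ on dummy boundaries and concludes by pigeonhole at $b=B$. Your tail-absorption count is a valid alternative, but as written it only gestures at this, so spell it out as follows. The cap must bind, $t_{b_0}=T-(B-b_0)$, at some $b_0\le B$, since otherwise $t_1,\dots,t_B<T$ would all be original endpoints, contradicting $\widehat B\le B$. Every subsequent interval then has unit length, so exactly $B$ intervals end at $t_B=T$. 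Real boundaries inside that tail must still invoke the simulated boundary map rather than a recommit.
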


\begin{proof}
Simulate the original policy and retain the endpoint $\tau$ of its currently
committed batch.  Suppose the $b$th source interval starts at $t_{b-1}<\tau$.
Set
\begin{equation}\label{eq:exact-B-subdivision}
  t_b:=\min\{\tau,\,T-(B-b)\},
  \qquad b=1,\ldots,B.
\end{equation}
Inductively, $T-t_{b-1}\ge B-b+1$.  Both arguments of the minimum in
\eqref{eq:exact-B-subdivision} are therefore strictly larger than $t_{b-1}$,
so the new interval is nonempty, and $t_b$ is predictable because $\tau$ was
fixed when the current original batch was committed.

If $t_b<\tau$, declare a dummy boundary: process the newly revealed rewards
through the simulated state in chronological order, but recommit the unplayed
suffix of the same read-only tape.  If $t_b=\tau$, process the completed batch,
invoke the simulated policy's boundary map, and commit its next tape.  Thus no
source interval crosses an original boundary and every original boundary is
retained.

Let $D_b$ be the number of dummy boundaries among $t_1,\ldots,t_b$.  We claim
$D_b\le B-\widehat B$ pathwise.  Once equality holds before interval $b$, the
number of remaining source intervals equals the number of remaining original
batches.  Since every original batch is nonempty, its current endpoint obeys
$\tau\le T-(B-b)$, so \eqref{eq:exact-B-subdivision} chooses $t_b=\tau$ and no
further dummy boundary is possible.  At $b=B$, there are $B$ source endpoints,
at most $B-\widehat B$ of which are dummy and at most $\widehat B$ of which are
original.  Hence equality holds in both counts, every original endpoint has
been used, and $t_B=T$.

Dummy boundaries never change the already committed actions.  Chronological
processing ensures that the simulated state at each original boundary is
exactly the state of the original policy.  The action sequence and regret are
therefore unchanged.
\end{proof}

\begin{lemma}[Bernoulli transfer of the adaptive-grid batch lower bound]
\label{lem:batch-depth-transfer}
Fix $d\ge1$.  There exist constants $c_d,T_d>0$ such that, for every
$T\ge T_d$ and $2\le B\le T$,
\[
  \mathfrak R_T^{\mathrm{bat}}(B)
  \ge c_d \frac{T^{\beta_{d,B}}}{B^2}.
\]
Consequently, the same lower bound holds for $\mathfrak R_T(B,W)$ for every integer
$W\ge0$.
\end{lemma}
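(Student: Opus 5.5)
The plan is a reduction: take any Bernoulli-reward committed-batch policy and turn it into a policy in the Gaussian adaptive-grid class $\mathfrak G_B$ of \citet{feng2022lipschitz}, with the same regret on a matched instance. Their lower bound then transfers directly.

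\emph{Step 1: exact-$B$ reduction.} Start from a policy in $\mathfrak A_{B,\infty}$. Lemma~\ref{lem:exact-B-subdivision} embeds it into a predictable grid with exactly $B$ nonempty intervals. The action sequence and regret are unchanged.

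\emph{Step 2: Gaussian-to-Bernoulli simulation.} Feed the policy Gaussian observations $Y'_t\sim N(\mu(A_t),1)$ and let it use the thresholded rewards $Y_t:=\mathbf 1\{Y'_t>0\}$. These are Bernoulli with mean $\Phi(\mu(A_t))$. The simulated policy is a legitimate Gaussian batched policy, since thresholding is post-processing and preserves predictability of boundaries and tapes. Its Bernoulli interaction law under mean $\Phi\circ\mu$ is exactly the law of the original policy.

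\emph{Step 3: matching the hard instances.} Use the uniformly bounded hard means $\mu$ of \citet{feng2022lipschitz}. If necessary, rescale them by a constant and shift them into a fixed interval $[-a,a]$, where $\phi\ge c_a>0$; I expect this rescaling to change their bound only by constants depending on $d$. Set $f=\Phi\circ\mu$. Then $f$ is $\phi_{\max}$-Lipschitz with $\phi_{\max}\le 1/\sqrt{2\pi}<1$, so $f\in\Lip_1(\X)$. For gaps, $f^\star-f(x)\ge c_a(\mu^\star-\mu(x))$, using $\sup\Phi\circ\mu=\Phi(\sup\mu)$ and the mean value theorem. Hence Bernoulli regret is at least $c_a$ times Gaussian regret on the same action sequence. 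The Gaussian lower bound $\gtrsim_d T^{\beta_{d,B}}/B^2$ over these instances therefore gives the same lower bound for $\mathfrak R_T^{\mathrm{bat}}(B)$.

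\emph{Step 4: memory.} Finally, $\mathfrak A_{B,W}\subseteq\mathfrak A_{B,\infty}$, so the bound also holds for $\mathfrak R_T(B,W)$.

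\emph{Main obstacle.} The hardest step is checking that the source lower bound still applies after rescaling the instance class. Their construction uses a specific Lipschitz constant and noise level, and the exact exponent and the $B^{-2}$ factor must be preserved. I would first try to absorb the rescaling into their constants. If that fails, I would instead rescale the noise variance, which only changes KL by constants.
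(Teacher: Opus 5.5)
Your proposal is correct and follows the paper's proof essentially step for step. Both proofs apply the exact-$B$ subdivision lemma, then simulate the Bernoulli policy on thresholded Gaussian observations $\mathbf 1\{Y_t^{\mathrm G}\ge 0\}\sim\mathrm{Ber}(\Phi(\mu(x)))$. Both use $\|\varphi\|_\infty<1$ for the Lipschitz property and the gap comparison $f_\mu^\star-f_\mu(x)\ge c_{\Phi,d}(\mu^\star-\mu(x))$, and finish with the inclusion $\mathfrak A_{B,W}\subseteq\mathfrak A_{B,\infty}$. The rescaling you flag as the main obstacle is unnecessary: the paper simply notes that the source hard family is one-Lipschitz with values in a fixed compact interval, uniformly in $T$ and $B$. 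Even if a downscaling by $\lambda<1$ were needed, replacing each observation $y$ by $\lambda y$ plus independent $N(0,1-\lambda^2)$ noise shows it costs only the factor $\lambda$.
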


\begin{proof}
Fix $\mathcal A\in\mathfrak A_{B,\infty}$.  We construct a policy
$\mathcal A^\sharp\in\mathfrak G_B$ for the Gaussian experiment of
\citet{feng2022lipschitz}, using Lemma~\ref{lem:exact-B-subdivision}.  When a
completed grid interval reveals Gaussian observations $Y_t^{\mathrm G}$, the
policy processes them in chronological order, forms
\[
  \widetilde Y_t=\mathbf 1_{\{Y_t^{\mathrm G}\ge0\}},
\]
and applies the state updates of $\mathcal A$ to the binary observations
$\widetilde Y_t$.  At an original boundary it uses the resulting state to invoke
$\mathcal A$'s boundary map; at a dummy boundary it recommits the unplayed suffix
of the same tape.  This is a valid predictable adaptive-grid policy because
every action in the completed interval was fixed at its preceding boundary.
No memory restriction is used in this simulation.

Let $\mathfrak F_{d,T,B}$ be the explicit full-dimensional hard family in
the proof of that adaptive-grid lower bound.  Its mean functions are
one-Lipschitz, and the displayed base levels and peak heights place every value
in a fixed compact interval $I_d\subset\mathbb R$, uniformly over $T$ and $B$.
Let $\Phi$ and $\varphi$ denote the standard normal
distribution function and density, and write
$\mu^\star:=\sup_{x\in[0,1]^d}\mu(x)$.  Define
\[
  f_\mu(x):=\Phi(\mu(x)),
  \qquad
  c_{\Phi,d}:=\min_{z\in I_d}\varphi(z)>0.
\]
Since $\Phi$ is increasing and $\|\varphi\|_\infty<1$,
\begin{equation}\label{eq:gaussian-threshold-properties}
  f_\mu\in\Lip_1([0,1]^d),
  \qquad
  f_\mu^\star-f_\mu(x)
  \ge c_{\Phi,d}\bigl(\mu^\star-\mu(x)\bigr).
\end{equation}
Moreover, conditional on $A_t=x$,
\[
  \widetilde Y_t\sim\operatorname{Ber}(\Phi(\mu(x)))
      =\operatorname{Ber}(f_\mu(x)).
\]
An induction over source intervals therefore shows that the simulated state
of $\mathcal A$ after every processed pull, its embedded original boundaries,
and the action sequence under $(\mathcal A^\sharp,\mu)$ have the same joint
law as under $\mathcal A$ on the admissible Bernoulli instance $f_\mu$.
The full grids are not identical: $\mathcal A^\sharp$ may contain the dummy
subdivision boundaries from Lemma~\ref{lem:exact-B-subdivision}, which do not
change actions or regret.

Applying the adaptive-grid lower bound of \citet{feng2022lipschitz} in
full dimension $d$ gives some $\mu\in\mathfrak F_{d,T,B}$ with regret exponent
\[
  \frac{1-a}{1-a^B}
  =\frac{(d+1)/(d+2)}{1-(d+2)^{-B}}
  =\beta_{d,B},
  \qquad a:=\frac1{d+2},
\]
and hence
\[
  \E_\mu^{\mathcal A^\sharp}
  \sum_{t=1}^T\bigl(\mu^\star-\mu(A_t)\bigr)
  \ge c_d \frac{T^{\beta_{d,B}}}{B^2}.
\]
Combining the equality of action laws with
\eqref{eq:gaussian-threshold-properties} yields
\[
\begin{aligned}
  \E_{f_\mu}^{\mathcal A}\Reg_T(f_\mu)
  &=\E_\mu^{\mathcal A^\sharp}
    \sum_{t=1}^T\bigl(f_\mu^\star-f_\mu(A_t)\bigr)\\
  &\ge c_{\Phi,d}\E_\mu^{\mathcal A^\sharp}
    \sum_{t=1}^T\bigl(\mu^\star-\mu(A_t)\bigr)\\
  &\ge c_d \frac{T^{\beta_{d,B}}}{B^2}.
\end{aligned}
\]
Taking the supremum over admissible Bernoulli instances and the infimum over
$\mathcal A\in\mathfrak A_{B,\infty}$ proves the unrestricted-memory claim.
Since $\mathfrak A_{B,W}\subseteq\mathfrak A_{B,\infty}$, the finite-memory
claim follows as well.
\end{proof}

\section{Dyadic geometry and streaming refinement}
\label{app:refinement}

This appendix gives the fixed dyadic geometry, the one-pass refinement
interface, and the two-batch root case.  All batch lengths and slot layouts
are reward-independent, and rewards are processed once in slot order.

\subsection{Fixed dyadic geometry and finite precision}
\label{subsec:fixed-geometry}

For $z>0$, define the integer bit length
\[
  \operatorname{bl}(z):=2+\lceil\log_2(1\vee z)\rceil.
\]
For $z\in(0,1]$, let $\lceil z\rceil_2$ denote the smallest dyadic number no
smaller than $z$.  Then
\begin{equation}
  \operatorname{bl}(z)\asymp\log(ez)\quad(z\ge1),
  \qquad
  z\le\lceil z\rceil_2<2z.
\end{equation}
All geometric objects, traversal orders, and numerical schedules below are
measurable functions of the problem parameters and the algorithmic seed.  They
are therefore fixed conditional on $\Fzero$ and can be regenerated without a
stored table; arithmetic running time is outside the resource model.

\begin{lemma}[Fixed dyadic geometry]\label{lem:dyadic-geometry}
Fix $d\ge1$.  For every $0<r\le s\le1$, there are nested fixed partitions of
$[0,1]^d$, a scale-$s$ parent cover $\mathcal P_s$, and an $r$-net
$\mathcal N_r(P)$ for every $P\in\mathcal P_s$ such that
\[
  |\mathcal P_s|\le C_d s^{-d},
  \qquad
  |\mathcal N_r(P)|\le C_d(s/r)^d.
\]
For every pair of levels $0\le j'\le j$, each level-$j$ cell has a unique
level-$j'$ ancestor.  Every representative and traversal position has
an $O_d(\log(1/r))$-bit address, and all objects are generated uniformly from
$(d,r,s)$ without a stored geometric table.
\end{lemma}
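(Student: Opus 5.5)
The plan is to build everything from the standard dyadic tree on $[0,1]^d$. For an integer $j\ge0$, let level $j$ be the partition into the $2^{jd}$ closed-open cubes $\prod_i[k_i2^{-j},(k_i+1)2^{-j})$, with the upper faces attached so the cubes cover $[0,1]^d$; each cube is addressed by its integer corner $(k_1,\ldots,k_d)\in\{0,\ldots,2^j-1\}^d$. Nestedness is immediate: a level-$j$ cube with index $k$ lies inside the level-$j'$ cube with index $\lfloor k2^{j'-j}\rfloor$, taken coordinatewise. That level-$j'$ cube is the unique ancestor, because level $j'$ is itself a partition.

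Next fix the levels from the scales. Put $j_s:=\lceil\log_2(1/s)\rceil$ and $j_r:=\lceil\log_2(1/r)\rceil\ge j_s$. Take $\mathcal P_s$ to be level $j_s$. Then $|\mathcal P_s|=2^{j_sd}\le(2/s)^d$, and every cell has sup-norm diameter $2^{-j_s}\le s$. For $P\in\mathcal P_s$, let $\mathcal N_r(P)$ be the set of centers of the level-$j_r$ descendants of $P$. There are $2^{(j_r-j_s)d}\le(2s/r)^d$ of them, since $2^{j_r}<2/r$ and $2^{-j_s}\le s$. Every point of $P$ lies in some descendant cube of side $2^{-j_r}\le r$, so it is within $r/2\le r$ of that cube's center, which makes the set an $r$-net. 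This gives the constant $C_d=2^d$.

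For addresses, a representative is specified by its level-$j_r$ index: $d$ integers of $j_r$ bits each, so $dj_r=O_d(\log(1/r))$ bits in total because $j_r\le\log_2(2/r)$. A traversal position is fixed as the lexicographic order of (parent index, child offset). Its position counter is at most $2^{dj_r}$, so it also has $O_d(\log(1/r))$ bits, and the bijection between counter and index is given by bit interleaving and splitting. Every object is computed by explicit arithmetic from $(d,r,s)$, so no table needs to be stored and measurability is trivial.

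I do not expect a real obstacle. The only care needed is in the boundary conventions, which ensure the cubes form a true partition so that ancestors are unique, and in checking that the rounding to dyadic levels costs only a factor $2^d$ in the cardinalities.
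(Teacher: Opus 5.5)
Your proposal is correct and follows the paper's proof essentially line by line. Both use the dyadic tree with upper faces attached, the levels $j_s=\lceil\log_2(1/s)\rceil$ and $j_r=\lceil\log_2(1/r)\rceil$, ancestors by coordinatewise floor division, midpoints of level-$j_r$ descendants as the $r$-net (giving the bounds $(2/s)^d$ and $(2s/r)^d$), and $dj_r$-bit indices with lexicographic traversal.
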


\begin{proof}
For $j\ge0$ and $0\le k<2^j$, let $I_{j,k}$ be the half-open dyadic interval
$[k2^{-j},(k+1)2^{-j})$, with the right endpoint included when $k=2^j-1$.
The products
\[
  P_{j,k_1,\ldots,k_d}=\prod_{a=1}^d I_{j,k_a}
\]
form a partition of $[0,1]^d$; use their coordinatewise midpoints as
representatives.  For $0\le j'\le j$, the unique level-$j'$ ancestor has
address
\begin{equation}\label{eq:dyadic-general-ancestor}
  \left(j',
    \left\lfloor\frac{k_1}{2^{j-j'}}\right\rfloor,
    \ldots,
    \left\lfloor\frac{k_d}{2^{j-j'}}\right\rfloor
  \right).
\end{equation}
Set $j_s=\lceil\log_2(1/s)\rceil$ and
$j_r=\lceil\log_2(1/r)\rceil$.  The level-$j_s$ cells have diameter at most
$s$ and
\[
  |\mathcal P_s|=2^{dj_s}\le(2/s)^d.
\]
For $P\in\mathcal P_s$, let $\mathcal N_r(P)$ be the midpoints of its
level-$j_r$ descendants.  They form an $r$-net and
\[
  |\mathcal N_r(P)|=2^{d(j_r-j_s)}\le(2s/r)^d.
\]
A cell address contains one level and $d$ integers of at most $j_r$ bits;
lexicographic traversal, midpoint generation, and ancestor computation require
no stored geometric table.  This proves the lemma.
\end{proof}

For $0<\varepsilon\le1$, write $Q_\varepsilon(y)=\lfloor y/\varepsilon\rfloor$.

\begin{lemma}[Streaming mean primitive]\label{lem:streaming-mean}
Fix a prescribed segment length $n\ge1$ and precision $0<\varepsilon\le1$.  A
one-pass deterministic update using
\[
  \left\lceil\log_2\!\left(n\lfloor1/\varepsilon\rfloor+1\right)\right\rceil
\]
bits returns a rounded empirical mean $\widehat\mu$ satisfying
\begin{equation}\label{eq:finite-precision-register}
  \left|
    \widehat\mu-\frac1n\sum_{i=1}^n Y_i
  \right|<\varepsilon.
\end{equation}
\end{lemma}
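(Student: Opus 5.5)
The plan is to accumulate the quantized rewards $Q_\varepsilon(Y_i)$ in an integer counter and to output the rescaled count at the end. Concretely, set $k:=\lfloor1/\varepsilon\rfloor\ge1$. Keep the register $S_t:=\sum_{i=1}^tQ_\varepsilon(Y_i)$, updated in one pass by $S_t=S_{t-1}+Q_\varepsilon(Y_t)$. At the end output $\widehat\mu:=\varepsilon S_n/n$. Since $n$ and $\varepsilon$ are prescribed, this output is a deterministic function of the final register contents, and only $S_t$ has to be stored.

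The first step is the register range. Because $Y_i\in[0,1]$, we have $0\le Q_\varepsilon(Y_i)\le\lfloor1/\varepsilon\rfloor=k$. So every partial sum lies in $\{0,1,\ldots,nk\}$. This set has $nk+1$ values, so binary encoding uses exactly $\lceil\log_2(nk+1)\rceil$ bits, which is the stated count. The intermediate values $S_t$ with $t<n$ are smaller and fit in the same register. No other state is needed, because the position within the segment is tracked by the fixed tape schedule and not by the reward-dependent memory.

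The second step is the accuracy bound. For each $i$, $0\le Y_i-\varepsilon Q_\varepsilon(Y_i)<\varepsilon$, by the definition of the floor. Averaging over $i$ gives
\[
0\le\frac1n\sum_{i=1}^nY_i-\widehat\mu<\varepsilon,
\]
which yields \eqref{eq:finite-precision-register}.

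The only delicate point is the edge case $Y_i=1$ when $1/\varepsilon$ is not an integer. Then $Q_\varepsilon(1)=k$ and the residual is $1-\varepsilon k<\varepsilon$, so both the range and the strict error bound still hold. Otherwise the argument is routine.
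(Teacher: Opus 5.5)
Your proposal is correct and matches the paper's proof: both accumulate $S_t=\sum_{i\le t}\lfloor Y_i/\varepsilon\rfloor\in\{0,\ldots,n\lfloor1/\varepsilon\rfloor\}$, output $\widehat\mu=\varepsilon S_n/n$, and get $0\le\frac1n\sum_i Y_i-\widehat\mu<\varepsilon$ from the floor residual. Your separate treatment of $Y_i=1$ is harmless but unnecessary, since $0\le y-\varepsilon\lfloor y/\varepsilon\rfloor<\varepsilon$ already holds for every $y\in[0,1]$.
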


\begin{proof}
Maintain
\[
  S_t=\sum_{i=1}^t\left\lfloor Y_i/\varepsilon\right\rfloor,
  \qquad
  0\le S_t\le n\lfloor1/\varepsilon\rfloor.
\]
The displayed register size therefore suffices.  With
$\widehat\mu=\varepsilon S_n/n$,
\[
  0\le
  \frac1n\sum_{i=1}^n Y_i-\widehat\mu
  =\frac1n\sum_{i=1}^n
    \left(Y_i-\varepsilon\left\lfloor Y_i/\varepsilon\right\rfloor\right)
  <\varepsilon.
\]
\end{proof}

Use the public fallback action $x_\circ$ from
Section~\ref{subsec:policy-model} and deterministic lexicographic tie-breaking.

\subsection{Streaming refinement from a safe active set}\label{subsec:active-set-stream}

Recall $\ell_r=\log(e/r)$ and the prescribed segment length
$n_r=\lceil A_d^{\mathrm{ref}}r^{-2}\ell_r\rceil$ from
Section~\ref{subsec:active-set-interface}.  During terminal refinement, the
parent mask and best-child record remain
resident while one child mean is accumulated.  Since
\[
  \operatorname{bl}(n_r)+\operatorname{bl}(r^{-d-1})\le C_d \ell_r,
\]
Lemma~\ref{lem:streaming-mean} uses $O_d(\ell_r)$ auxiliary bits, after
which the segment statistic is erased.

Enumerate $\mathcal P_s=\{P_k:k\in[K]\}$ and fix pairwise disjoint
scheduled index fragments $\mathcal I_1,\ldots,\mathcal I_{J_0}\subseteq[K]$.
Let $t_j$ be the boundary immediately before refinement fragment $j$ is committed,
and set
\[
  \mathcal H_j:=\sigma\!\left(\omega,(A_t,Y_t)_{1\le t\le t_j}\right).
\]
A mask $Z^{(j)}\in\{0,1\}^{\mathcal I_j}$, an incumbent arm $\bar x_j$, and an event
$\mathcal G_j$ are $\mathcal H_j$-measurable.  Set
\[
  \mathcal C_j:=\{P_k:k\in\mathcal I_j,\ Z_k^{(j)}=1\},
  \qquad
  \mathcal C:=\bigsqcup_{j=1}^{J_0}\mathcal C_j,
  \qquad
  \mathcal G:=\bigcap_{j=1}^{J_0}\mathcal G_j.
\]
Assume that, on $\mathcal G_j$, every retained parent in $\mathcal C_j$ and the
incumbent $\bar x_j$ have gap at most $\kappa_d s$, and that, on $\mathcal G$,
some cell in $\mathcal C$ contains a maximizer.

\begin{lemma}[Conditional streaming refinement]
\label{lem:active-set-refinement-conditional}
There exists $C_d>0$ such that, if $C_d r^{-d-2}\ell_r\le T/2$, then one
refinement batch per scheduled fragment returns an arm
$\widehat x$ satisfying
\begin{align}
  \#\{\text{refinement pulls}\}
  &\le C_d r^{-d-2}\ell_r,
  \label{eq:active-set-total-pulls}\\
  \PP\!\left(
    \mathcal G\cap\{\Delta_f(\widehat x)>C_d r\}
  \right)
  &\le r.
  \label{eq:active-set-output}
\end{align}
Refinement followed by exploitation has expected regret at most
\begin{equation}\label{eq:conditional-stream-contract}
  C_d \left[sr^{-d-2}\ell_r+Tr\right]
  +T\PP(\mathcal G^\complement),
\end{equation}
and uses at most $C_d \ell_r$ memory bits beyond the resident parent mask after
every reward.
\end{lemma}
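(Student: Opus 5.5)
The plan is to describe the refinement policy explicitly and then control separately its pull count, its output guarantee, and its regret. For fragment $j$, the batch committed at $t_j$ runs through the indices $k\in\mathcal I_j$ in a fixed order. For each $k$, and for each child representative $x\in\mathcal N_r(P_k)$, it reserves a segment of $n_r$ slots. If $Z^{(j)}_k=1$, the segment pulls $x$; otherwise it pulls the incumbent $\bar x_j$ (a filler pull). Because the slot layout does not depend on $Z^{(j)}$, batch lengths are static. The total number of slots is at most $K\,C_d(s/r)^d\,n_r\le C_d r^{-d-2}\ell_r$ by Lemma~\ref{lem:dyadic-geometry}, which gives \eqref{eq:active-set-total-pulls}. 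During a real segment, Lemma~\ref{lem:streaming-mean} with precision $\varepsilon=r$ accumulates the mean. At the end of the segment, the estimate is compared with the resident record $(x_{\mathrm{best}},\widehat f_{\mathrm{best}})$, the record is replaced if the new estimate is larger, and the register is erased. Memory beyond the mask is therefore one register plus one record: an $O_d(\ell_r)$-bit address and an $O_d(\ell_r)$-bit quantized value, hence $O_d(\ell_r)$ in total.

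For \eqref{eq:active-set-output}, I would work on the intersection of $\mathcal G$ with a concentration event $\mathcal E$. On $\mathcal E$, every real segment mean is within $r$ of $f(x)$. Hoeffding's inequality gives deviation at most $\sqrt{\log(2/\delta)/(2n_r)}$. Choosing $A_d^{\mathrm{ref}}$ large and applying a union bound over at most $C_d r^{-d}$ segments with $\delta=r^{d+1}/C_d$ gives $\PP(\mathcal E^\complement)\le r$; this is where $\ell_r$ in $n_r$ is used.

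The main subtlety is that the masks are random and $\mathcal H_j$-measurable, so the segments actually pulled depend on earlier rewards. I would handle this by indexing the concentration event by slot position and arm: each segment's rewards are fresh conditional on $\mathcal H_j$, so a union bound over the fixed layout of potential (segment, arm) pairs works via a conditional Hoeffding inequality on each segment.

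On $\mathcal G\cap\mathcal E$, some retained cell contains a maximizer $x^\star$, and the net contains a point $x$ with $\|x-x^\star\|_\infty\le r$, so $f(x)\ge f^\star-r$. That point was pulled and its estimate is at least $f^\star-3r$, counting the rounding error $\varepsilon=r$ and the deviation $r$. The final record therefore has estimate at least $f^\star-3r$, so its true value is at least $f^\star-5r$, which gives $C_d=5$.

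For the regret, on $\mathcal G$ every refinement pull, real or filler, has gap at most $\kappa_d s+s$, since a child of a retained parent lies within the cell diameter $s$. Refinement therefore costs at most $C_d s r^{-d-2}\ell_r$. Exploitation costs at most $T\cdot C_d r$ on $\mathcal G\cap\mathcal E$, plus $T\,\PP(\mathcal E^\complement)\le Tr$. The complement of $\mathcal G$ is charged $T\,\PP(\mathcal G^\complement)$, using that gaps are at most $1$. Summing these three contributions gives \eqref{eq:conditional-stream-contract}.
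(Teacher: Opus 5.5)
Your proposal is correct and follows essentially the same route as the paper: one fixed real/filler slot layout per fragment, a streaming rounded mean compared against a single resident running-max record, conditional Hoeffding given $\mathcal H_j$ with a union bound over the $O_d(r^{-d})$ slots, a net point near the maximizer inside a retained cell, and the split of the regret over $\mathcal G$ and $\mathcal G^\complement$. The differences are only cosmetic: you take mesh $\varepsilon=r$ and obtain a $5r$ output gap, where the paper uses $\varepsilon=r/512$ and obtains $17r/16$, and your concentration event holds unconditionally rather than only intersected with $\mathcal G$.
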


\begin{proof}[Proof of Lemma~\ref{lem:active-set-refinement-conditional}]
Concatenate the fixed parent--child layouts over the scheduled fragments and
index the resulting slot stream by $i=1,\ldots,N$.  Disjointness and
Lemma~\ref{lem:dyadic-geometry} give
\[
  N
  =\sum_{j=1}^{J_0}\sum_{k\in\mathcal I_j}|\mathcal N_r(P_k)|
  \le C_d(s/r)^d\sum_{j=1}^{J_0}|\mathcal I_j|
  \le C_d(s/r)^d K
  \le C_d r^{-d}.
\]
A slot $i$ has an associated triple $(j,k,u)$ with
$k\in\mathcal I_j$ and $u\in\mathcal N_r(P_k)$.  Set
$R_i=Z_k^{(j)}$.  If $R_i=1$, the scheduled action is $u_i=u$; if $R_i=0$,
it is the fragment incumbent $\bar x_j$.  The slot count and batch boundary are fixed,
while the resident mask determines only whether each slot is real or filler.
Use the prescribed segment length $n_r$ and the mesh
\begin{equation}\label{eq:active-set-segment-definition}
  \varepsilon=r/512,
\end{equation}
and define the rounded score
\[
  \widehat\mu_i
  =\frac{\varepsilon}{n_r}\sum_{t=1}^{n_r}Q_\varepsilon(Y_{i,t}).
\]
All child scores use the same predetermined quantization mesh.

Initialize $(\widehat x_0,\widehat\eta_0)=(x_\circ,-1)$ and update
\begin{equation}
(\widehat x_i,\widehat\eta_i)
=
\begin{cases}
(u_i,\widehat\mu_i),
& R_i=1\text{ and }\widehat\mu_i>\widehat\eta_{i-1},\\
(\widehat x_{i-1},\widehat\eta_{i-1}),
& \text{otherwise}.
\end{cases}
\end{equation}
Thus $\widehat\eta_i$ is the largest score among real slots $1,\ldots,i$, with
lexicographic tie-breaking; filler scores never affect the output, and the
record passes unchanged between fragments.

For each scheduled slot $i$ in fragment $j$, the indicator $R_i$, the child
arm $u_i$, and $\mathcal G_j$ are $\mathcal H_j$-measurable.  On
$\{R_i=1\}$ the segment pulls $u_i$.  Conditional Hoeffding and
\eqref{eq:finite-precision-register} therefore give, after choosing
$A_d^{\mathrm{ref}}$ sufficiently large,
\begin{align}
&\PP\!\left(
  \mathcal G_j\cap\{R_i=1\}\cap
  \{|\widehat\mu_i-f(u_i)|>r/32\}
\right) \notag\\
&\quad=
\E\!\left[
  \mathbf1_{\mathcal G_j}\mathbf1_{\{R_i=1\}}
  \PP\!\left(
    |\widehat\mu_i-f(u_i)|>r/32\mid\mathcal H_j
  \right)
\right]
\le\frac rN.
\end{align}
A union bound over the $N$ scheduled slots yields an event
$\mathcal E$ such that
\begin{equation}
  \PP(\mathcal G\cap\mathcal E^\complement)\le r,
  \qquad
  |\widehat\mu_i-f(u_i)|\le r/32
  \quad\text{for every real slot on }\mathcal G\cap\mathcal E.
\end{equation}
The pull count is deterministic and satisfies
\begin{equation}
  Nn_r\le C_d r^{-d-2}\ell_r,
\end{equation}
which proves \eqref{eq:active-set-total-pulls}.  On $\mathcal G$, every real
child and every incumbent action has gap at most $C_d s$, so the refinement
charge is pathwise at most $C_d sr^{-d-2}\ell_r$.  Moreover, on $\mathcal G$
there is a retained parent $P^\star$ containing a maximizer $x^\star$.
Lemma~\ref{lem:dyadic-geometry} supplies a real child
$u^\star\in\mathcal N_r(P^\star)$ with
\begin{equation}
  f^\star-f(u^\star)\le r.
\end{equation}
Let $i^\star$ be the real slot associated with $u^\star$.  On
$\mathcal G\cap\mathcal E$, the running-maximum invariant gives
$\widehat\eta_N\ge\widehat\mu_{i^\star}$ and $\widehat x_N$ is a real child.
Therefore
\begin{align}
  f^\star-f(\widehat x_N)
  &=f^\star-f(u^\star)
    +f(u^\star)-\widehat\mu_{i^\star} \notag\\
  &\quad
    +\widehat\mu_{i^\star}-\widehat\eta_N
    +\widehat\eta_N-f(\widehat x_N) \notag\\
  &\le r+\frac r{32}+0+\frac r{32}
  =\frac{17}{16}r.
\end{align}
This proves \eqref{eq:active-set-output}.  If the remaining rounds
exploit $\widehat x_N$, the total refinement-and-exploitation regret is at most
\begin{equation}
  C_d \left[sr^{-d-2}\ell_r+Tr\right]
  +T\PP(\mathcal G^\complement),
\end{equation}
because $\PP(\mathcal G\cap\mathcal E^\complement)\le r$ and the complete
horizon contributes at most $T$ on $\mathcal G^\complement$.  This proves
\eqref{eq:conditional-stream-contract}.

After every pull, the auxiliary memory consists of the slot address, a
counter, the integer sum, and the global point--score pair.  The address uses
$O_d(\log(1/r))$ bits, while
\eqref{eq:finite-precision-register} and
\eqref{eq:active-set-segment-definition} give
\begin{equation}
  \log_2n_r+\log_2(1/\varepsilon)
  \le C_d \ell_r.
\end{equation}
The accumulator is erased after comparison.  Each fragment is one batch because
its action sequence and real/filler indicators are determined at its boundary;
the resident mask is erased afterward.
\end{proof}

\begin{proof}[Proof of Lemma~\ref{lem:active-set-refinement}]
Apply Lemma~\ref{lem:active-set-refinement-conditional} with
$\mathcal G_j$ equal to the whole sample space for every fragment.  Its pull,
memory, and regret bounds are exactly those stated in
Lemma~\ref{lem:active-set-refinement}.
\end{proof}

\begin{corollary}[Two-batch root case]\label{cor:two-batch-root}
There exist constants $C_d,c_d,T_d>0$ such that, for $T\ge T_d$ and every
integer $W\ge0$,
\begin{equation}\label{eq:two-batch-upper-curve}
  \mathfrak R_T(2,W)
  \le C_d T\min\left\{1,
  \max\left\{
    \left(\frac{\log(4T)}{T}\right)^{1/(d+3)},
    2^{-c_d W}
  \right\}\right\}.
\end{equation}
\end{corollary}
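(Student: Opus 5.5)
The plan is to handle the three regimes inside the minimum separately.

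\textbf{Trivial regime.} The bound $C_dT$ is automatic, since every gap is at most $1$. This regime absorbs all small $T$ and all small $W$. In particular, whenever $W$ falls below the bits needed by the refinement registers, we simply play $x_\circ$ throughout and pay linear regret.

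\textbf{Root refinement.} For $W\ge C'_d\log(4T)$, we run the two-batch root refinement. Take $s=1$, so the scale-$s$ partition is the single root cell $[0,1]^d$. The all-ones mask then represents a safe active set in the sense of Definition~\ref{def:safe-active-set}, with $\kappa_d\ge1$, because every gap is at most $1$. This mask is fixed, so no reward-dependent state is needed at the first boundary.

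Batch~1 is the fixed refinement tape over an $r$-net of $[0,1]^d$, with $n_r$ pulls per child. We stream it via Lemma~\ref{lem:active-set-refinement}, whose bad event $\mathcal G^\complement$ is empty, keeping only the best-child record. Batch~2 exploits $x_{\mathrm{best}}$. The resulting regret is
\[
  C_d\bigl[r^{-d-2}\ell_r+Tr\bigr].
\]
We choose the dyadic radius $r\asymp(\log(4T)/T)^{1/(d+3)}$. Then $\ell_r\lesssim_d\log(4T)$, both terms are of order $T(\log(4T)/T)^{1/(d+3)}$, and the condition $C_dr^{-d-2}\ell_r\le T/2$ holds for $T\ge T_d$. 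The memory used is $O_d(\ell_r)\le O_d(\log(4T))$ bits, which fits once $W\ge C'_d\log(4T)$.

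\textbf{Intermediate memory.} When $C''_d\le W<C'_d\log(4T)$, we again refine from the root, but with the coarser radius $r\asymp 2^{-c_dW}$. The register requirement is $C_d\ell_r=C_d\log(e/r)\lesssim W$, which holds for $c_d$ small. The regret is
\[
  C_d\bigl[r^{-d-2}\ell_r+Tr\bigr].
\]
We need $r^{-d-2}\ell_r\lesssim T$, i.e.\ $2^{c_d(d+2)W}W\lesssim T$. This holds in this range because $W<C'_d\log(4T)$ and $c_d$ can be chosen small relative to $1/C'_d$, so the exploration term is at most $T\cdot r$ up to constants. Hence the regret is $O_d(T2^{-c_dW})$.

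In the remaining case $W<C''_d$, the linear bound already suffices, since $2^{-c_dW}$ is then a constant.

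\textbf{Combining.} Taking the best of the three regimes gives
\[
  C_dT\min\{1,\max\{(\log(4T)/T)^{1/(d+3)},2^{-c_dW}\}\}.
\]
The point is that, in each range of $W$, the actual radius used is the larger of the two candidate radii.

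The main obstacle is bookkeeping the exact bit count. The slot address, the counter, the quantized sum at mesh $r/512$, and the best-child point and score must total at most $W$. This requires $O_d(\log(1/r))$ bits for each of them, including the point address, so that the memory-limited radius really is $2^{-\Theta_d(W)}$. Lemma~\ref{lem:streaming-mean} and Lemma~\ref{lem:dyadic-geometry} supply exactly these $O_d(\ell_r)$ bounds.
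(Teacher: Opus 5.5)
Your proof is correct and is essentially the paper's argument. Both use the root cube as a safe scale-$1$ active set, one streamed refinement batch via Lemma~\ref{lem:active-set-refinement}, one exploitation batch, and a fixed-arm fallback. The paper avoids your three-way case split by taking $r$ to be the dyadic ceiling of $\max\{(A_d^{\mathrm{root}}\log(4T)/T)^{1/(d+3)},2^{-\xi_dW}\}$ directly, and playing a fixed arm when this $r$ exceeds a small cutoff. Then $C_dr^{-d-2}\ell_r\le (C_d/A_d^{\mathrm{root}})Tr\le T/2$ follows for every such $r$ from the statistical lower bound on $r$ alone. This also sidesteps the small slip in your intermediate regime: there you need $2^{c_d(d+3)W}W\lesssim T$, not the $(d+2)$ version, to conclude that exploration costs $O(Tr)$.
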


\begin{proof}
The root cube is a safe scale-$1$ active set.  For large $A_d^{\mathrm{root}}$ and small
$\xi_d>0$, put
\begin{equation}
  \widetilde r
  =\max\left\{
    (A_d^{\mathrm{root}}\log(4T)/T)^{1/(d+3)},
    2^{-\xi_d W}
  \right\}.
\end{equation}
Let $r$ be the smallest dyadic radius at least $\widetilde r$.  If it exceeds a
fixed small cutoff, a fixed arm gives the claim.  Otherwise $r<2\widetilde r$,
$\ell_r\le C_d \log(4T)$, and the choice of $A_d^{\mathrm{root}}$ gives
\begin{equation}
  C_d r^{-d-2}\ell_r
  \le \frac{C_d}{A_d^{\mathrm{root}}}Tr
  \le T/2.
\end{equation}
Also $r\ge2^{-\xi_d W}$ gives $C_d \ell_r\le W$ after choosing $\xi_d$ and the
cutoff sufficiently small.  Thus
Lemma~\ref{lem:active-set-refinement} applies to the one-fragment root stream and
uses one refinement batch followed by one exploitation batch.  Its regret is
at most
\begin{equation}
  C_d \left[r^{-d-2}\ell_r+Tr\right]\le C_d Tr.
\end{equation}
Substituting $r<2\widetilde r$ proves
\eqref{eq:two-batch-upper-curve}.
\end{proof}

\section{Serialized active-set construction}
\label{app:serialized}

This appendix proves the serialized active-set guarantee.  A pass-frozen
tournament finds a safe incumbent; the final pass emits, refines, and erases one
active-set fragment at a time.  Recall $\ell_T$, $L_{\mathrm{ser}}$, $H_{\mathrm{ser}}$, and $\Gamma_T$ from
\eqref{eq:ellT-definition}, \eqref{eq:serialized-parameters}, and
\eqref{eq:serialization-overhead}.  The balanced schedule sharpens only
logarithmic factors: the simpler choice
\[
  n_\ell=\left\lceil N^{1-2^{-\ell}}\right\rceil,
  \qquad L_{\mathrm{pf}}\asymp\log\log N,
\]
already gives the same frontier exponents up to powers of $\log\log N$.

\subsection{Pass-frozen confidence-bound tournament}

At pass $i$, the routine forms and freezes a fresh incumbent estimate, scans
every fragment through levels $1,\ldots,i$, and compares candidate UCBs with the
frozen incumbent LCB.  A nonfinal pass selects the representative with the
largest final-level LCB after all fragments; the final pass adds slack $\zeta$ and
emits masks for immediate refinement.  The incumbent segment is prepended to
the first level-one batch.

Fix prescribed actions $x_1,\ldots,x_K$, integers $1\le S\le K$ and
$N\ge16$, a slack $\zeta\ge0$, and $0<\delta\le1/2$.  Partition $[K]$ into
fixed fragments $\mathcal I_1,\ldots,\mathcal I_J$ of size at most $S$, where
\[
  J:=\left\lceil\frac KS\right\rceil,
  \qquad
  f_K^\star:=\max_{k\le K}f(x_k),
  \qquad
  \mathcal K^\star:=\operatorname*{arg\,max}_{k\le K}f(x_k).
\]
Let $c_\diamond\ge64$ be fixed and define
\begin{align*}
  L_{\mathrm{pf}}
  &:=\left\lceil\log_2\log_2(4N)\right\rceil\vee1,\\
  n_0&:=1,
  \qquad
  c_\ell:=2^{\ell-L_{\mathrm{pf}}+1},
  \qquad
  n_\ell:=\left\lceil c_\ell\sqrt{Nn_{\ell-1}}\right\rceil,
  \quad 1\le\ell\le L_{\mathrm{pf}},\\
  \varepsilon
  &:=2^{-10-\lceil\frac12\log_2N\rceil},
  \qquad
  a_\ell:=
  \sqrt{\frac{2\log(c_\diamond K L_{\mathrm{pf}}^2/\delta)}{n_\ell}}
  +2\varepsilon.
\end{align*}
The tournament is the pass-frozen procedure described above with these
parameters.  During the final pass, let
$Z^{(j)}\in\{0,1\}^{\mathcal I_j}$ denote the mask emitted for fragment
$\mathcal I_j$.

\begin{lemma}[Pass-frozen confidence-bound tournament]
\label{lem:incumbent-selection}
Fix $d\ge1$.  There exist constants $C_d,N_d>0$ such that, if
$N_d\le N\le T$ and $C_d K N\le T$, then the tournament uses
\begin{equation}\label{eq:serialized-tournament-exact-batches}
  J\frac{L_{\mathrm{pf}}(L_{\mathrm{pf}}+1)}2
\end{equation}
batches, at most
\begin{equation}\label{eq:serialized-tournament-pulls}
  C_d K N
\end{equation}
pulls, and at most
\begin{equation}\label{eq:serialized-tournament-space}
  S+C_d(\log_2K+\log_2T)
\end{equation}
memory bits after every pull.  With probability at least $1-\delta$, the incumbent entering the final pass,
denoted $k_{\mathrm{fin}}$, satisfies
\begin{equation}\label{eq:serialized-tournament-statistics}
  f_K^\star-f(x_{k_{\mathrm{fin}}})
  \le C_d \sqrt{\frac{\log(c_\diamond K L_{\mathrm{pf}}^2/\delta)}{N}},
\end{equation}
and the regret relative to $f_K^\star$ is at most
\begin{equation}\label{eq:serialized-incumbent-regret-final}
  C_d K\left[
    \sqrt{N\log(c_\diamond K L_{\mathrm{pf}}^2/\delta)}+\zeta N
  \right].
\end{equation}
For every $j\in[J]$, the mask emitted for fragment $\mathcal I_j$ during the
final pass satisfies
\begin{equation}\label{eq:serialized-final-mask-contract}
\begin{aligned}
  k\in\mathcal I_j,\quad f_K^\star-f(x_k)\le\zeta
  &\ \Longrightarrow\ Z_k^{(j)}=1,\\
  k\in\mathcal I_j,\quad Z_k^{(j)}=1
  &\ \Longrightarrow\
  f_K^\star-f(x_k)\le\zeta+C_d a_{L_{\mathrm{pf}}-1}.
\end{aligned}
\end{equation}
\end{lemma}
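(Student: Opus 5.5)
I will prove Lemma~\ref{lem:incumbent-selection} by separating deterministic resource accounting from a single high-probability confidence event. I will analyze that event pass by pass.

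\textbf{Resources.} Pass $i\in[L_{\mathrm{pf}}]$ scans each of the $J$ fragments through levels $1,\ldots,i$. Each level is one committed batch, because its tape is determined by the currently resident fragment mask and the frozen incumbent. Pass $i$ therefore uses $Ji$ batches, and summing over passes gives \eqref{eq:serialized-tournament-exact-batches}; the incumbent segment is prepended to an existing batch and adds none.

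For pulls, the recursion $n_\ell=\lceil c_\ell\sqrt{Nn_{\ell-1}}\rceil$ with $c_{L_{\mathrm{pf}}}=2$ and $L_{\mathrm{pf}}\asymp\log\log N$ gives $n_\ell\le C N^{1-2^{-\ell}}\prod c$. The factors $c_\ell=2^{\ell-L_{\mathrm{pf}}+1}$ are chosen so that $\sum_\ell n_\ell\lesssim N$ and $n_{L_{\mathrm{pf}}}\asymp N$. Summed over $L_{\mathrm{pf}}$ passes, with at most $K$ arms per level, this stays at $C_dKN$. I will verify this through the geometric decay $\sum_{i}\sum_{\ell\le i}n_\ell\lesssim N$, which is the role of the balanced schedule. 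For memory, after each pull the state holds one $S$-bit fragment mask, one accumulator from Lemma~\ref{lem:streaming-mean} with $O(\log N+\log(1/\varepsilon))$ bits, the frozen incumbent index and its LCB, the champion record, and counters. This gives \eqref{eq:serialized-tournament-space}.

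\textbf{Confidence event.} Let $\mathcal E$ be the event that every segment estimate, for every arm, level, pass, and incumbent re-estimate, lies within $a_\ell$ of its mean. Conditional Hoeffding at each boundary, as in Lemma~\ref{lem:active-set-refinement-conditional}, plus Lemma~\ref{lem:streaming-mean} for the $2\varepsilon$ rounding term, controls each estimate. A union bound over at most $KL_{\mathrm{pf}}^2$ indices, absorbed by $c_\diamond$, gives $\PP(\mathcal E)\ge1-\delta$.

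On $\mathcal E$, the elimination rule keeps $k$ at level $\ell$ iff $\UCB_\ell(k)\ge\lambda-\zeta$, where $\lambda$ is the frozen incumbent LCB and $\lambda\le f_K^\star$. Hence no maximizer is ever eliminated. By the confidence rule displayed at the start of Section~\ref{sec:upper-main}, any survivor of level $\ell$ has gap at most $f_K^\star-\lambda+\zeta+2a_{\ell}$.

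\textbf{Induction over passes.} The key step is the inductive improvement of the incumbent. I will show that after pass $i$ the champion, chosen by largest final-level LCB among survivors, satisfies $f_K^\star-f(x_{k_c})\le C a_i$. The reason is that a maximizer survives to level $i$ with LCB at least $f_K^\star-2a_i$, and the champion's LCB is at least this. Consequently the frozen benchmark in pass $i+1$ is within $C a_i$ of $f_K^\star$, after accounting for the incumbent segment of length $n_i$. Applying this to the last nonfinal pass gives \eqref{eq:serialized-tournament-statistics}, since $a_{L_{\mathrm{pf}}-1}\asymp\sqrt{\log(\cdot)/N}$ up to constants from the schedule. The final-pass mask contract \eqref{eq:serialized-final-mask-contract} then follows at level $L_{\mathrm{pf}}$. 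Maximizer-side arms within $\zeta$ satisfy $\UCB\ge f\ge f_K^\star-\zeta\ge\lambda-\zeta$, and retained arms obey the gap bound with incumbent error $Ca_{L_{\mathrm{pf}}-1}$.

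\textbf{Regret.} In pass $i$, arms pulled at level $\ell$ survived level $\ell-1$, so their gap is at most $\zeta+C(a_{i-1}+a_{\ell-1})$, with level $1$ bounded trivially. Summing $K n_\ell$ times this gap gives $K\sum_\ell n_\ell a_{\ell-1}\lesssim K\sqrt{N\log}$, since $n_\ell\sqrt{1/n_{\ell-1}}\asymp c_\ell\sqrt N$ and $\sum c_\ell\le 4$. The term $K\zeta N$ comes from the slack. This gives \eqref{eq:serialized-incumbent-regret-final}; the event $\mathcal E^\complement$ is charged separately by the caller, as in the conditional refinement lemma.

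\textbf{Main obstacle.} I expect the main difficulty to be the cross-pass coupling: the incumbent term $a_{i-1}$ enters the survivor gaps at every level $\ell\le i$ of pass $i$. Controlling $n_\ell a_{i-1}$ for $\ell>i-1$ requires the schedule bound $n_\ell\le n_i$. Summed over $\ell\le i$, this is $\lesssim n_i a_{i-1}\lesssim c_i\sqrt{N\log}$, and summing $c_i$ over passes stays bounded. I would verify this double sum first.
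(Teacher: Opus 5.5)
Your proposal is correct and follows the paper's proof essentially step by step. It uses the same batch, pull, and memory accounting through the balanced schedule, one conditional-Hoeffding-plus-rounding good event, the max-LCB champion bound from survival of a maximizer (which, as in the paper, needs no real induction because it uses only $\lambda_i\le f_K^\star$), the same final-pass mask contract, and the same survivor-gap regret sum. Two small fixes: the ``cross-pass coupling'' you flag is immediate, since $\ell\le i$ and decreasing $a_\ell$ give $a_{i-1}\le a_{\ell-1}$ (this is how the paper absorbs both survivor and incumbent-filler charges), and summing over passes needs the weighted bound $\sum_{\ell}(L_{\mathrm{pf}}-\ell+1)c_\ell\le 8$, together with $L_{\mathrm{pf}}^2\le\sqrt N$ for the ceiling terms, rather than $\sum_\ell c_\ell\le 4$.
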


\begin{proof}
\proofparagraph{Balanced interpolation schedule}
Choose $N_d$ so that $L_{\mathrm{pf}}\ge2$, $\log_2N\ge2L_{\mathrm{pf}}$, and $L_{\mathrm{pf}}^2\le\sqrt N$.  For
$v_0=1$ and $v_\ell=c_\ell\sqrt{Nv_{\ell-1}}$, direct iteration gives
\[
  v_\ell=N^{1-2^{-\ell}}
  2^{-2(L_{\mathrm{pf}}-\ell)+L_{\mathrm{pf}}2^{1-\ell}},
  \qquad
  \log_2\frac{v_\ell}{v_{\ell-1}}
  =2+2^{-\ell}(\log_2N-2L_{\mathrm{pf}})\ge2.
\]
Thus $v_\ell\le n_\ell\le2v_\ell$ and $n_\ell\ge2n_{\ell-1}$, while the
definition of $L_{\mathrm{pf}}$ implies
\begin{equation}\label{eq:interpolation-terminal-bounds}
  \frac N2\le n_{L_{\mathrm{pf}}}\le2N,
  \qquad
  n_{L_{\mathrm{pf}}-1}\ge\frac N{16},
  \qquad
  n_\ell\le2N4^{-(L_{\mathrm{pf}}-\ell)}.
\end{equation}
Consequently,
\begin{align}
  \sum_{i=1}^{L_{\mathrm{pf}}}\sum_{\ell=1}^{i}n_\ell
  &=\sum_{\ell=1}^{L_{\mathrm{pf}}}(L_{\mathrm{pf}}-\ell+1)n_\ell\le CN,
  \label{eq:interpolation-first-sum}\\
  L_{\mathrm{pf}}n_1+
  \sum_{i=1}^{L_{\mathrm{pf}}}\sum_{\ell=2}^{i}
  \frac{n_\ell}{\sqrt{n_{\ell-1}}}
  &\le C\sqrt N.
  \label{eq:interpolation-level-one-sum}
\end{align}
Here $n_\ell/\sqrt{n_{\ell-1}}\le c_\ell\sqrt N+1$,
$\sum_{\ell=2}^{L_{\mathrm{pf}}}(L_{\mathrm{pf}}-\ell+1)c_\ell\le8$, and $L_{\mathrm{pf}}^2\le\sqrt N$.

\proofparagraph{Committed passes and memory}
Initialize $k_0=1$.  At the start of pass $i$, let $\bar k:=k_{i-1}$ be the
frozen incumbent entering that pass.  Prepend $n_i$ fresh pulls of $x_{\bar k}$
to the first level-one batch.  Let $\bar\mu_i$ be the rounded mean of this
segment and set
\[
  \lambda_i:=\bar\mu_i-a_i.
\]
At the start of each fragment set $Z_{k,0}=1$.  Let
$\widehat\mu_{k,\ell}$ be the rounded empirical mean of coordinate $k$'s
level-$\ell$ segment and define
\[
  \LCB_{k,\ell}:=\widehat\mu_{k,\ell}-a_\ell,
  \qquad
  \UCB_{k,\ell}:=\widehat\mu_{k,\ell}+a_\ell.
\]
At level $\ell\le i$, coordinate $k$ uses
\[
  X_{k,\ell}
  =\begin{cases}
    x_k,&Z_{k,\ell-1}=1,\\
    x_{\bar k},&Z_{k,\ell-1}=0,
  \end{cases}
\]
and updates
\begin{equation}\label{eq:serialized-benchmark-update}
  Z_{k,\ell}
  =Z_{k,\ell-1}
  \mathbf1_{\{\UCB_{k,\ell}\ge \lambda_i-\zeta_i\}},
  \qquad
  \zeta_i:=\zeta\mathbf1_{\{i=L_{\mathrm{pf}}\}}.
\end{equation}
A nonfinal pass sets the next incumbent to the surviving representative with
largest final-level LCB; if none survives, it retains the current incumbent.
The final pass has no incumbent update.

Pass $i$ has exactly $J i$ tournament batches, proving
\eqref{eq:serialized-tournament-exact-batches}.  Since $n_i\ge2n_{i-1}$,
\[
  \sum_{i=1}^{L_{\mathrm{pf}}}n_i\le2n_{L_{\mathrm{pf}}}\le4N.
\]
The fragment pulls are bounded by a constant times
$K\sum_i\sum_{\ell\le i}n_\ell$, so
\eqref{eq:interpolation-first-sum} proves
\eqref{eq:serialized-tournament-pulls}.  At any time the routine stores one
$S$-bit mask, the frozen incumbent address--benchmark pair, either one
pass-champion record or no champion, and one segment accumulator.  Addresses,
counters, and quantized scores have $O(\log K+\log T)$ bits by
Lemmas~\ref{lem:dyadic-geometry} and~\ref{lem:streaming-mean}.  This proves
\eqref{eq:serialized-tournament-space}.

\proofparagraph{Good event and incumbent invariant}
Let $\mathcal G$ be the event that every rounded score from a real candidate
segment at level $\ell$ is within $a_\ell$ of $f(x_k)$ and every fresh
incumbent score in pass $i$ is within $a_i$ of $f(x_{\bar k})$.  Conditional
Hoeffding, deterministic rounding, and a union bound give
\[
  \PP(\mathcal G^\complement)\le\delta.
\]
Fix $\mathcal G$.  For every pass,
\begin{equation}\label{eq:serialized-benchmark-bracket}
  f(x_{\bar k})-2a_i\le \lambda_i\le f(x_{\bar k}),
\end{equation}
and every real candidate segment satisfies
\begin{equation}\label{eq:serialized-confidence-brackets}
  f(x_k)-2a_\ell\le \LCB_{k,\ell}\le f(x_k)
  \le \UCB_{k,\ell}\le f(x_k)+2a_\ell.
\end{equation}

Let $k_i$ be the incumbent index after nonfinal pass $i$.  We prove
\begin{equation}\label{eq:serialized-incumbent-invariant}
  f_K^\star-f(x_{k_i})\le2a_i,
  \qquad 1\le i<L_{\mathrm{pf}}.
\end{equation}
Fix $k^\star\in\mathcal K^\star$.  Since
$\lambda_i\le f(x_{\bar k})\le f_K^\star$ and
$\UCB_{k^\star,\ell}\ge f_K^\star$, coordinate $k^\star$ survives every level.
Consequently the pass champion $k_i$ satisfies
\[
  \LCB_{k_i,i}\ge \LCB_{k^\star,i}\ge f_K^\star-2a_i.
\]
Equation~\eqref{eq:serialized-confidence-brackets} yields
\eqref{eq:serialized-incumbent-invariant}.  The same benchmark bracket holds in the
final pass.  Set $k_{\mathrm{fin}}:=k_{L_{\mathrm{pf}}-1}$.  Finally,
\eqref{eq:interpolation-terminal-bounds} implies
\eqref{eq:serialized-tournament-statistics}.

\proofparagraph{Final masks}
The incumbent entering the final pass obeys
\[
  f_K^\star-f(x_{\bar k})\le2a_{L_{\mathrm{pf}}-1}.
\]
If $k\in\mathcal I_j$ satisfies $f_K^\star-f(x_k)\le\zeta$, then
\[
  \UCB_{k,\ell}\ge f(x_k)\ge f_K^\star-\zeta\ge \lambda_{L_{\mathrm{pf}}}-\zeta,
\]
so \eqref{eq:serialized-benchmark-update} retains it at every level.
Conversely, if $k$ survives the final level,
then \eqref{eq:serialized-benchmark-bracket}--\eqref{eq:serialized-confidence-brackets}
give
\[
  f_K^\star-f(x_k)
  \le2a_{L_{\mathrm{pf}}-1}+\zeta+4a_{L_{\mathrm{pf}}}
  \le\zeta+6a_{L_{\mathrm{pf}}-1}.
\]
This proves \eqref{eq:serialized-final-mask-contract} after changing $C_d$.

\proofparagraph{Regret}
Let $\mathcal R$ denote the tournament regret relative to $f_K^\star$.
Level-one candidate pulls over all passes contribute at most $K L_{\mathrm{pf}}n_1$.  For
$i\ge2$ and $2\le\ell\le i$, an active candidate survived level $\ell-1$.
Equations \eqref{eq:serialized-benchmark-bracket}--\eqref{eq:serialized-confidence-brackets}
and \eqref{eq:serialized-incumbent-invariant} give
\[
  f_K^\star-f(x_k)
  \le C_d(a_{\ell-1}+\zeta_i).
\]
Every incumbent filler in pass $i$ has gap at most $2a_{i-1}$ for $i\ge2$,
and the same is true of the fresh incumbent segment.  The first-pass incumbent
segment is absorbed by the level-one term.  Since $a_{i-1}\le C a_{\ell-1}$
whenever $2\le\ell\le i$ and $K\ge1$, these incumbent charges are bounded by
the same interpolation sums as the active-candidate charges.  Therefore
\begin{align*}
  \mathcal R
  &\le C_d K\left[
    L_{\mathrm{pf}}n_1
    +\sum_{i=2}^{L_{\mathrm{pf}}}\sum_{\ell=2}^{i}n_\ell a_{\ell-1}
    +\zeta\sum_{\ell=1}^{L_{\mathrm{pf}}}n_\ell
  \right]
  +C_d \varepsilon K N\\
  &\le C_d K\left[
    \sqrt{N\log(c_\diamond K L_{\mathrm{pf}}^2/\delta)}+\zeta N
  \right].
\end{align*}
The last step uses \eqref{eq:interpolation-first-sum}--
\eqref{eq:interpolation-level-one-sum}.  This proves
\eqref{eq:serialized-incumbent-regret-final} and completes the lemma.
\end{proof}

\paragraph{Common-reference confidence rule.}
Partition the scale-$s$ cells into fixed index fragments
$\mathcal I_1,\ldots,\mathcal I_{J_s}$ of size at most $S$, and let $x_k$ be the
representative of cell $P_k$.  Instantiate Lemma~\ref{lem:incumbent-selection}
with terminal target $N=n_s$, failure level $\delta=r$, and final-pass slack
$\zeta=s$.  Its depth is $L_{\mathrm{pf}}=L_{\mathrm{ser}}$.  Let
$(n_\ell,a_\ell)_{\ell=1}^{L_{\mathrm{ser}}}$ be the resulting
predetermined schedule; it satisfies
$n_s/2\le n_{L_{\mathrm{ser}}}\le2n_s$.  At the beginning of pass $i$,
the learner takes $n_i$ fresh pulls of the frozen incumbent.  If $\bar\mu_i$ is
the resulting rounded mean, define the incumbent benchmark
\[
  \lambda_i:=\bar\mu_i-a_i.
\]
On the common concentration event, both this fresh incumbent estimate and a
level-$i$ candidate estimate have confidence radius $a_i$.  For a rounded
candidate mean $\widehat\mu_{k,\ell}$, define its lower and upper
confidence bounds (LCB and UCB) by
\[
  \LCB_{k,\ell}:=\widehat\mu_{k,\ell}-a_\ell,
  \qquad
  \UCB_{k,\ell}:=\widehat\mu_{k,\ell}+a_\ell.
\]
The same confidence-bound update is used in every pass.  Set
$\zeta_i:=s\mathbf1_{\{i=L_{\mathrm{ser}}\}}$ and, after a level-$\ell$ batch in pass $i$,
update all active bits by
\begin{equation}\label{eq:serialized-confidence-update}
  Z_k\leftarrow Z_k\mathbf1_{\{\UCB_{k,\ell}\ge \lambda_i-\zeta_i\}}.
\end{equation}
Thus nonfinal passes use no slack, whereas the final pass uses exactly the
scale-$s$ geometric slack.
At the end of a nonfinal pass, the surviving final-level representative with
the largest LCB becomes the next incumbent.  This max-LCB update closes the
incumbent invariant directly.  For each $P_k$, let $\mathcal N_r(P_k)$ be its
fixed radius-$r$ net.  During refinement, every arm in $\mathcal N_r(P_k)$ is
pulled for $n_r$ rounds when $P_k$ survives, while an eliminated cell uses the
incumbent in the corresponding slots.

\paragraph{Active-set subroutines.}
For readability, Algorithm~\ref{alg:serialized-main} packages the repeated batch
operations into three routines.  The candidate-bearing routines update the
unique resident record in place.
\emph{Selection.}
$\textsc{EliminateSelect}(\mathcal I,i,\bar k,\lambda;k_c,\lambda_c)$ uses exactly
$i$ batches.  It initializes every index in $\mathcal I$ as active; at level
$\ell$, active slots pull their representatives for $n_\ell$ rounds and
inactive slots pull the frozen incumbent, after which the mask is updated by
\eqref{eq:serialized-confidence-update}.  If $\lambda=\bot$, the first batch also
contains $n_i$ fresh incumbent pulls and returns the resulting benchmark.
As final-level LCBs are produced, the routine compares them directly with the
resident pass champion $(k_c,\lambda_c)$ and updates that pair in place.  It
returns the final mask, benchmark, and updated resident champion; no fragment-local
address--score record is stored.
\emph{Mask construction.}
$\textsc{EliminateMask}(\mathcal I,L_{\mathrm{ser}},\bar k,\lambda,s)$ performs the same elimination
without a champion update and returns only the final mask and benchmark.
\emph{Refinement.}
$\textsc{RefineFragment}(\mathcal I,Z,\bar k;x_{\mathrm{best}},\widehat f_{\mathrm{best}})$ uses one batch: retained
cells pull their radius-$r$ child nets, eliminated cells use incumbent fillers,
and every real child score is compared directly with the resident global
best-child record $(x_{\mathrm{best}},\widehat f_{\mathrm{best}})$.  The routine returns the updated resident
record and stores no fragment-local best child.

\paragraph{State variables.}
The algorithm keeps one active-set mask $Z$; a frozen incumbent index $\bar k$
and its LCB benchmark $\lambda$; a pass champion
$(k_c,\lambda_c)$ during nonfinal passes; and a global refined-arm record
$(x_{\mathrm{best}},\widehat f_{\mathrm{best}})$ during the final pass.  The symbols $\bot$ and
$\varnothing$ denote an uninitialized benchmark and address, respectively.

\paragraph{Batch structure and invariants.}
The balanced schedule satisfies
$\sum_{i=1}^{L_{\mathrm{ser}}}n_i\le2n_{L_{\mathrm{ser}}}\le4n_s$.
Each fresh incumbent estimate is embedded in the first elimination batch of its
pass and therefore requires no additional batch.  The three phases of Algorithm~\ref{alg:serialized-main} use respectively $J_sL_{\mathrm{ser}}(L_{\mathrm{ser}}-1)/2$,
$J_s(L_{\mathrm{ser}}+1)$, and one batch, for a total of
$J_sH_{\mathrm{ser}}+1$.  Their analysis maintains three invariants: after nonfinal pass
$i$, the incumbent is $2a_i$-optimal among the parent representatives; the
final masks retain a parent containing a maximizer and only $O_d(s)$-optimal
parents; and after every pull the mutable memory consists of one resident
mask, one incumbent address--benchmark pair, one streaming accumulator, and
either the pass champion or the global best child.

Algorithm~\ref{alg:serialized-main} in the main text uses these routines and
state variables.  The remainder of this appendix verifies its exact batch,
memory, pull, and regret bounds.

\subsection{Detailed realization and fixed-scale guarantee}

Algorithm~\ref{alg:serialized-main} interleaves the final tournament pass
with child refinement, using one fresh $n_{L_{\mathrm{ser}}}$-sample incumbent estimate for all
fragments.  Lemma~\ref{lem:incumbent-selection} remains valid under this
interleaving: the inserted refinement batches do not modify the frozen incumbent,
its benchmark, or any tournament mask, and the lemma's concentration and regret
arguments use only the chronological order of the tournament segments.  Under
$C_d r^{-d-2}\ell_T\le T/2$ and $r\le s$, one has
$\log K+\log(1/r)\le C_d \ell_T$.  Choosing $C_d^{\mathrm{ctl}}$ larger than the
constants in Lemmas~\ref{lem:incumbent-selection}
and~\ref{lem:streaming-mean} therefore bounds every address, score, counter, and
accumulator by $w_{\mathrm{ctl}}=\lceil C_d^{\mathrm{ctl}}\ell_T\rceil$ bits in total.  Together
with the resident mask, the phase-specific memory is at most $S+w_{\mathrm{ctl}}$ bits.

\begin{proof}[Proof of Proposition~\ref{prop:serialized-resource-contract}]
Use the parameters in \eqref{eq:serialized-parameters}, set
$N:=n_s$, and choose $A_d^{\mathrm{ser}}$ large enough that $N\ge N_d$.  Apply
Lemma~\ref{lem:incumbent-selection} with terminal target $N$, slack
\[
  \zeta=s,
  \qquad
  \delta=r.
\]
Let $k^\star$ index a parent containing a maximizer, and let $j^\star$
be the unique fragment index satisfying $k^\star\in\mathcal I_{j^\star}$.
By Lemma~\ref{lem:dyadic-geometry},
\begin{equation}\label{eq:serialized-cover-approximation}
  f(x_{k^\star})\ge f^\star-s,
  \qquad
  0\le f^\star-f_K^\star\le s.
\end{equation}
The terminal confidence radius satisfies
\begin{align}
  a_{L_{\mathrm{ser}}-1}
  &\le C_d \left[
    \sqrt{\frac{\log(c_\diamond K L_{\mathrm{ser}}^2/r)}{N}}+s
  \right]
  \le C_d s,
  \label{eq:serialized-final-radius}
\end{align}
where the last inequality follows from
$N=n_s=\lceil A_d^{\mathrm{ser}}s^{-2}\ell_T\rceil$ and
$\log(c_\diamond K L_{\mathrm{ser}}^2/r)\le C_d \ell_T$.

\proofparagraph{Good event and safe active set}
For $j\in[J_s]$, let $\mathcal G_j$ be the event that every tournament estimate
revealed through the completion of final-pass fragment $j$ is within its prescribed
radius, and set $\mathcal G=\mathcal G_{J_s}$.  The union bound in
Lemma~\ref{lem:incumbent-selection} gives
\[
  \mathcal G_1\supseteq\cdots\supseteq\mathcal G_{J_s},
  \qquad
  \PP(\mathcal G^\complement)\le r.
\]
The mask argument is fragment-local.  On $\mathcal G_j$, the final incumbent
has gap at most $C_d s$, and every retained cell from fragment $j$ satisfies
\[
  \sup_{\{k\in\mathcal I_j:Z_k^{(j)}=1\}}
  \sup_{x\in P_k}\Delta_f(x)
  \le \kappa_d s.
\]
On $\mathcal G$, the incumbent entering the final pass obeys
\begin{align*}
  f^\star-f(x_{\bar k})
  &=(f^\star-f_K^\star)+(f_K^\star-f(x_{\bar k})) \\
  &\le s+2a_{L_{\mathrm{ser}}-1}
  \le C_d s.
\end{align*}
Equation~\eqref{eq:serialized-cover-approximation} gives
$f_K^\star-f(x_{k^\star})\le s=\zeta$.  Hence the first implication in
\eqref{eq:serialized-final-mask-contract} yields
$Z_{k^\star}^{(j^\star)}=1$.  The second implication in
\eqref{eq:serialized-final-mask-contract}, together with
\eqref{eq:serialized-cover-approximation} and
\eqref{eq:serialized-final-radius}, gives
\begin{equation}\label{eq:serialized-active-set-contract}
  Z_{k^\star}^{(j^\star)}=1,
  \qquad
  \sup_{j\in[J_s]}
  \sup_{\{k\in\mathcal I_j:Z_k^{(j)}=1\}}
  \sup_{x\in P_k}\Delta_f(x)
  \le\kappa_d s.
\end{equation}
For refinement batch $j$, use the scheduled cell fragment
$\{P_k:k\in\mathcal I_j\}$, the mask $Z^{(j)}$, and retained subset
$\{P_k:k\in\mathcal I_j,\ Z_k^{(j)}=1\}$.  The scheduled fragments are disjoint.
The fragment-local bounds above, together with
\eqref{eq:serialized-active-set-contract} on the intersection $\mathcal G$,
verify the hypotheses of Lemma~\ref{lem:active-set-refinement-conditional}
for $(\mathcal G_j)_{j=1}^{J_s}$.

\proofparagraph{Batch, memory, and pull identities}
The batch count satisfies
\begin{align*}
  J_s\sum_{i=1}^{L_{\mathrm{ser}}-1}i+J_s L_{\mathrm{ser}}+J_s+1
  &=J_s\left(\frac{L_{\mathrm{ser}}(L_{\mathrm{ser}}+1)}2+1\right)+1 \\
  &=J_s H_{\mathrm{ser}}+1
  \le B.
\end{align*}
After every pull the resident objects are one $S$-bit mask and registers totaling
at most $w_{\mathrm{ctl}}$ bits; thus
\[
  S+w_{\mathrm{ctl}}\le W.
\]
Write $T_1$ and $T_2$ for the tournament and refinement pull counts.  Since
$K\le C_d s^{-d}$, $N=n_s\le C_d s^{-2}\ell_T$, and $r\le s$,
\begin{align*}
  T_1&\le C_d K N
       \le C_d s^{-d-2}\ell_T
       \le C_d r^{-d-2}\ell_T,\\
  T_2&\le C_d r^{-d-2}\ell_r
       \le C_d r^{-d-2}\ell_T.
\end{align*}
Hence
\[
  T_1+T_2\le C_d r^{-d-2}\ell_T\le T/2
\]
after fixing the constant in Proposition~\ref{prop:serialized-resource-contract}.

\proofparagraph{Regret decomposition}
Let $\mathcal E$ be the set of elimination rounds.  On
$\mathcal G$, Lemma~\ref{lem:incumbent-selection} yields
\begin{align}
  \sum_{t\in\mathcal E}\Delta_f(A_t)
  &=|\mathcal E|(f^\star-f_K^\star)
    +\sum_{t\in\mathcal E}(f_K^\star-f(A_t)) \\
  &\le s|\mathcal E|
    +C_d K\left[
      \sqrt{N\log(c_\diamond K L_{\mathrm{ser}}^2/r)}+sN
    \right] \\
  &\le C_d Ks^{-1}\ell_T
   \le C_d s^{-d-1}\ell_T,
  \label{eq:serialized-elimination-regret}
\end{align}
using $|\mathcal E|\le C_d K N$, $N\asymp_d s^{-2}\ell_T$, and
$K\le C_d s^{-d}$.  Conditional streaming refinement gives
\[
  \E\!
  \left[
    \sum_{t\notin\mathcal E}\Delta_f(A_t)
    \mathbf1_{\mathcal G}
  \right]
  \le C_d \left[s r^{-d-2}\ell_T+Tr\right].
\]
Finally,
\begin{equation}\label{eq:serialized-failure-charge}
  \E\!
  \left[
    \Reg_T(f)\mathbf1_{\mathcal G^\complement}
  \right]
  \le T\PP(\mathcal G^\complement)
  \le Tr.
\end{equation}
Since $r\le s$,
\[
  s^{-d-1}\ell_T\le sr^{-d-2}\ell_T.
\]
Combining \eqref{eq:serialized-elimination-regret}--
\eqref{eq:serialized-failure-charge} therefore proves
\eqref{eq:serialized-fixed-scale} and the proposition.
\end{proof}

\begin{lemma}[Optimized serialized active-set branch]
\label{lem:serialized-branch}
Fix $d\ge1$.  There exist constants $C_d,T_d>0$ such that, for
$T\ge T_d$, integer $W\ge C_d \ell_T$, and $B\ge C_d \Gamma_T$, a learner using
at most $B$ committed batches and at most $W$ bits after every pull
satisfies
\begin{equation}\label{eq:serialized-envelope}
  \sup_{\substack{f\in\Lip_1([0,1]^d)\\ \nu\in\Dclass(f)}}
  \E_{\nu}\Reg_T(f)
  \le
  C_d T^{\frac{d+2}{d+3}}\ell_T^{\frac1{d+3}}
  \left[
    \left(\frac{T}{\ell_T}\right)^{\frac d{d+2}}
    \wedge
    \left(1+\frac{(B-1)W}{\Gamma_T}\right)
  \right]^{-\frac1{d(d+3)}}.
\end{equation}
\end{lemma}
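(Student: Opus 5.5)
The approach is to instantiate Proposition~\ref{prop:serialized-resource-contract} at a dyadic regional scale $s$ chosen as the larger of the statistical floor and the serialized memory floor, and then set $r$ by the balancing rule $r\asymp r_s$. Concretely, I define
\[
  \widetilde s:=\max\left\{\left(\frac{A\ell_T}{T}\right)^{1/(d+2)},\ \left(\frac{A'\Gamma_T}{\Gamma_T+\chi}\right)^{1/d}\right\},
\]
with $A,A'$ large dimension-dependent constants. I then let $s$ be the smallest dyadic number at least $\widetilde s$, and $r$ the smallest dyadic number at least $(s\ell_T/T)^{1/(d+3)}$. If $s$ exceeds the fixed cutoff $1/16$, the bracket in \eqref{eq:serialized-envelope} is $O_d(1)$ and the right-hand side is $\gtrsim_d T$, so the trivial policy playing $x_\circ$ suffices. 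Otherwise $s\le1/16$, and $r\le s$ follows from $s\ge(A\ell_T/T)^{1/(d+2)}$. I will also check that $L_{\mathrm{ser}}\le L_T^{\mathrm{ser}}$, so that $H_{\mathrm{ser}}\le\Gamma_T$: since $n_s\le C_d s^{-2}\ell_T\le C_dT$, we get $\log_2(4n_s)\le 8\ell_T$ for $T\ge T_d$.

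Next comes the feasibility check against the hypotheses of Proposition~\ref{prop:serialized-resource-contract}. The pull condition $C_dr^{-d-2}\ell_T\le T/2$ follows from $r^{d+3}\ge s\ell_T/T$, which gives $r^{-d-2}\ell_T\le Tr/s\le T$ with room to spare once the constant $A$ is large; more precisely, $r^{-d-2}\ell_T\le T(r/s)$ and $r/s\le (\ell_T/(Ts^{d+2}))^{1/(d+3)}\le A^{-1/(d+3)}$. The memory condition $W-w_{\mathrm{ctl}}\ge1$ holds because $W\ge C_d\ell_T$ with $C_d$ larger than $C_d^{\mathrm{ctl}}+1$; moreover $W-w_{\mathrm{ctl}}\ge W/2$. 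The batch condition $J_sH_{\mathrm{ser}}+1\le B$ is the main obstacle, and the plan is to split into two cases.

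\emph{Case 1: $K\le W-w_{\mathrm{ctl}}$.} Then $J_s=1$ and we need $H_{\mathrm{ser}}+1\le B$, which holds since $B\ge C_d\Gamma_T$.

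\emph{Case 2: $K>W-w_{\mathrm{ctl}}$.} Then $J_s\le 2K/(W-w_{\mathrm{ctl}})\le 4K/W$, and $K\le C_ds^{-d}\le C_d(\Gamma_T+\chi)/(A'\Gamma_T)$. This gives
\[
  J_sH_{\mathrm{ser}}\le \frac{4C_d(\Gamma_T+\chi)}{A'W}.
\]
The term $4C_d\Gamma_T/(A'W)$ is at most $B/4$ because $B\ge C_d\Gamma_T$. The term $4C_d\chi/(A'W)=4C_d(B-1)/A'$ is at most $(B-1)/2$ once $A'$ is large. Hence $J_sH_{\mathrm{ser}}+1\le B$. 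The subtle point is that $\Gamma_T$ appears both as the replay overhead and in the floor, so this case analysis must be carried out with explicit constants.

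Finally, the proposition gives regret at most $C_d[sr^{-d-2}\ell_T+Tr]$. By the choice of $r$, with dyadic rounding costing factors of $2^{d+3}$, both terms are $\lesssim_d Tr\lesssim_d T^{\frac{d+2}{d+3}}s^{\frac1{d+3}}\ell_T^{\frac1{d+3}}$. Substituting $s\lesssim_d\widetilde s$, we have
\[
  s^{1/(d+3)}\asymp_d\left[(T/\ell_T)^{d/(d+2)}\wedge(1+\chi/\Gamma_T)\right]^{-1/(d(d+3))},
\]
using $\Gamma_T/(\Gamma_T+\chi)=(1+\chi/\Gamma_T)^{-1}$. This yields \eqref{eq:serialized-envelope}, with the policy using static boundaries since all schedules depend only on $(T,B,W,d,\omega)$.
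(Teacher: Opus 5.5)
Your proof follows the paper's argument step for step. You take $s$ as the larger of the statistical floor and the serialized memory floor, and choose $r$ to balance $sr^{-d-2}\ell_T$ against $Tr$. You verify the batch law $J_sH_{\mathrm{ser}}+1\le B$ from $J_s\lesssim 1+K/W$ and $K\lesssim_d s^{-d}\lesssim_d 1+\chi/\Gamma_T$, so the replay factor cancels and $\chi/W=B-1$ is absorbed by taking $A'$ large. Your case split on $K\le W-w_{\mathrm{ctl}}$ is an explicit version of the paper's bound $J_s\le C_d(1+K/W)$.

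The one wrong step is the cutoff case. If the bracket in \eqref{eq:serialized-envelope} is $O_d(1)$, the right-hand side is of order $T^{\frac{d+2}{d+3}}\ell_T^{\frac1{d+3}}$. That is sublinear, not $\gtrsim_d T$, so playing $x_\circ$ (whose regret can be of order $T$) does not establish the bound there.

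There are two ways to repair this. The paper covers this regime with the two-batch root refinement of Corollary~\ref{cor:two-batch-root}, which attains $C_dT^{\frac{d+2}{d+3}}\ell_T^{\frac1{d+3}}$ once $W\gtrsim_d\ell_T$. Alternatively, note that under your hypotheses the case never occurs. Indeed, $\chi/\Gamma_T\ge(C_d\Gamma_T-1)C_d\ell_T/\Gamma_T\gtrsim\ell_T$, so your memory floor $\bigl(A'/(1+\chi/\Gamma_T)\bigr)^{1/d}$ tends to zero. With $T_d$ fixed after $A,A'$, one then always has $\widetilde s\le 1/32$ and hence $s\le 1/16$.

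With either repair the proof is complete. The factor $2$ from dyadic rounding that is missing in your bound on $r/s$ is harmless.
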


\begin{proof}
In the nontrivial branch, the construction uses the exact batch count from
Proposition~\ref{prop:serialized-resource-contract}.  Put $p=d+2$ and fix a
sufficiently large dimension-dependent constant $A_d^{\mathrm{opt}}$.
Recall $\chi=(B-1)W$ and define
\begin{equation}
  \sigma_T=\left(\frac{A_d^{\mathrm{opt}}\ell_T}{T}\right)^{1/p},
  \qquad
  \Xi=
  \left(\frac{T}{\ell_T}\right)^{d/p}
  \wedge
  \left(1+\frac{\chi}{A_d^{\mathrm{opt}}\Gamma_T}\right).
\end{equation}
The first term is the statistically useful number of active-region coordinates,
whereas the second is the number that can be serialized under the
memory--batch budget.  Thus the
active-set radius is of order $\Xi^{-1/d}$, clipped at the sequential resolution.
If $\Xi$ is below a sufficiently large dimension-dependent constant, the root
active set bound in Corollary~\ref{cor:two-batch-root} is at most the right
side of \eqref{eq:serialized-envelope}.  Otherwise define
\begin{equation}\label{eq:serialized-parent-choice}
  \widetilde s=\max\{\sigma_T,\Xi^{-1/d}\},
  \qquad
  \widetilde r=\left(\frac{\widetilde s\ell_T}{T}\right)^{1/(p+1)},
  \qquad
  s=\lceil\widetilde s\rceil_2,
  \qquad
  r=\lceil\widetilde r\rceil_2.
\end{equation}
After increasing the threshold and $T_d$, these radii lie in $(0,1/16]$ and
satisfy $r\le s$.

Construct \eqref{eq:serialized-parameters}.  Since
$w_{\mathrm{ctl}}=\lceil C_d^{\mathrm{ctl}}\ell_T\rceil$ and $W\ge C_d \ell_T$,
\[
  w_{\mathrm{ctl}}\le W/2,
  \qquad J_s\le C_d(1+K/W),
  \qquad H_{\mathrm{ser}}\le C_d \Gamma_T.
\]
In the nontrivial branch $\chi/(A_d^{\mathrm{opt}}\Gamma_T)>1$.  Since
$K\le C_d s^{-d}\le C_d \Xi$, the hypothesis $B\ge C_d \Gamma_T$ and sufficiently
large constants give
\begin{align}
  J_sH_{\mathrm{ser}}
  &\le C_d J_s\Gamma_T
   \le C_d \left(1+\frac K{W}\right)\Gamma_T \notag\\
  &\le C_d \Gamma_T+\frac{C_d \Xi\Gamma_T}{W} \notag\\
  &\le C_d \Gamma_T+\frac{C_d \Gamma_T}{W}
      +\frac{C_d(B-1)}{A_d^{\mathrm{opt}}}
  \le B-1.
\end{align}
The last inequality follows by first fixing $A_d^{\mathrm{opt}}$ and then the
constant in $B\ge C_d\Gamma_T$ sufficiently large.  Hence
$J_sH_{\mathrm{ser}}+1\le B$.

The choice $s\ge\sigma_T$ and
$r^{p+1}\asymp s\ell_T/T$ give
\[
  r^{-p}\ell_T\le C_d T \bigl(A_d^{\mathrm{opt}}\bigr)^{-1/(p+1)}.
\]
Thus the exploratory-pull condition in
Proposition~\ref{prop:serialized-resource-contract} holds after increasing
$A_d^{\mathrm{opt}}$, and the proposition applies.  Moreover,
\begin{equation}
  sr^{-p}\ell_T\asymp Tr.
\end{equation}
Substitution of \eqref{eq:serialized-parent-choice} into $Tr$ gives
\[
  \sup_{\substack{f\in\Lip_1([0,1]^d)\\ \nu\in\Dclass(f)}}
  \E_{\nu}\Reg_T(f)
  \le
  C_d T^{\frac{d+2}{d+3}}\ell_T^{\frac1{d+3}}
  \Xi^{-\frac1{d(d+3)}},
\]
which is \eqref{eq:serialized-envelope}.  At statistical saturation it
specializes to $C_d T^{(d+1)/(d+2)}\ell_T^{1/(d+2)}$.
\end{proof}

\section{In-memory hierarchical active sets and the joint upper bound}
\label{app:in-memory}

Fix $d\ge1$ and put $p:=d+2$.  This appendix analyzes the in-memory hierarchy
used when the complete active-set mask fits in memory.  For $L\ge1$, define
\[
  \gamma_d(L):=\frac{p-1}{1-p^{-L}}.
\]
The construction uses explicit equalized radii.  For an integer $L\ge1$ and a
dyadic terminal radius $s=2^{-j_s}$, set
\[
  \theta_\ell:=\frac{1-p^{-\ell}}{1-p^{-L}},
  \qquad
  j_\ell:=\lceil \theta_\ell j_s\rceil,
  \qquad
  u_\ell:=2^{-j_\ell},
  \qquad 0\le\ell\le L.
\]
Then $1=u_0\ge\cdots\ge u_L=s$.  For the ideal radii
$\widetilde u_\ell=s^{\theta_\ell}$,
\[
  \widetilde u_{\ell-1}\widetilde u_\ell^{-p}
  =s^{-\gamma_d(L)}.
\]
Because $j_\ell=\lceil\theta_\ell j_s\rceil$,
$\widetilde u_\ell/2<u_\ell\le\widetilde u_\ell$ and $u_\ell\ge s$.
Consequently dyadic rounding changes every product
$u_{\ell-1}u_\ell^{-p}$ by at most the factor $2^p$, and
\begin{equation}\label{eq:in-memory-equalized-sums}
  \sum_{\ell=1}^{L}u_{\ell-1}u_\ell^{-p}
  \le C_d L s^{-\gamma_d(L)},
  \qquad
  \sum_{\ell=1}^{L}u_\ell^{-p}
  \le Ls^{-p}.
\end{equation}
The factor $L$ is polylogarithmic in the regime where this branch is used and
is absorbed by the logarithmic envelope of Theorem~\ref{thm:joint-frontier}.

Fix sufficiently large constants $A_d^{\mathrm{hier}},A_d^{\mathrm{samp}}\ge1$.
For $3\le B\le T$ and
dyadic $0<r\le s\le1/16$, set
\begin{align*}
  L_B(s)
  &:=\min\left\{
    B-2,
    \left\lfloor
      \frac{(d+1)\log(1/s)}{\log(d+2)}
    \right\rfloor
  \right\},\\
  b_{B,r}&:=\operatorname{bl}(A_d^{\mathrm{hier}} B r^{-d-1}),
  \qquad
  b_{B,T}:=\operatorname{bl}(8 B T).
\end{align*}

\begin{lemma}[In-memory hierarchical active-set branch]
\label{lem:in-memory-branch}
There exist constants $C_d,T_d>0$ such that, for $T\ge T_d$, the resource
conditions
\begin{equation}\label{eq:in-memory-fixed-resources}
  C_d L_B(s)b_{B,r} r^{-d-2}\le T/2,
  \qquad
  W\ge C_d \left(s^{-d}+b_{B,r}\right)
\end{equation}
imply the existence of a learner using at most $L_B(s)+2\le B$ batches and at
most $W$ memory bits after every pull, such that
\begin{equation}\label{eq:in-memory-fixed-scale-regret}
  \sup_{\substack{f\in\Lip_1(\X)\\ \nu\in\Dclass(f)}}
  \E_\nu\Reg_T(f)
  \le C_d \left[
    L_B(s)b_{B,r} s^{-\gamma_d(L_B(s))}
    +b_{B,r} sr^{-d-2}
    +Tr
  \right].
\end{equation}
\end{lemma}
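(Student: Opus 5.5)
The plan is to build the learner as a hierarchy of $L:=L_B(s)$ elimination batches over the nested dyadic partitions at radii $u_0=1\ge u_1\ge\cdots\ge u_L=s$, followed by one refinement batch (Lemma~\ref{lem:active-set-refinement-conditional}) and one exploitation batch. This gives $L+2\le B$ batches. The state after each pull is the full level-$\ell$ mask $Z_\ell\in\{0,1\}^{\mathcal P_{u_\ell}}$ (at most $C_d u_\ell^{-d}\le C_d s^{-d}$ bits), together with $O_d(b_{B,r})$ control bits: one streaming accumulator (Lemma~\ref{lem:streaming-mean}), the running largest first-sweep LCB, a slot address and counters. This matches the memory condition in \eqref{eq:in-memory-fixed-resources}. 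Level $\ell$ uses sample size $n_\ell:=\lceil A_d^{\mathrm{samp}}u_\ell^{-2}b_{B,r}\rceil$. The confidence radius is $a_\ell\asymp u_\ell$, and the failure level per estimate is $r/(\text{number of slots})$. The number of slots is at most $\sum_\ell u_\ell^{-d}\le Ls^{-d}\le Br^{-d-1}$, which explains why $b_{B,r}=\operatorname{bl}(A_d^{\mathrm{hier}}Br^{-d-1})$ is the right log factor for the union bound.

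\textbf{Batch $\ell$ (for $\ell=1,\dots,L$).} At the boundary, the resident mask $Z_{\ell-1}$ commits two fixed sweeps over all level-$\ell$ children of level-$(\ell-1)$ cells. In each sweep, a child of an active parent is pulled $n_\ell$ times at its representative; a child of an inactive parent has its slot filled by a fixed fallback. The fallback is the representative of the lexicographically first active parent, which is computable from $Z_{\ell-1}$. During the first sweep, the learner streams each child mean, keeps only the running maximum $\lambda_\ell$ of the real-child LCBs, and erases the rest. During the second sweep, it recomputes each real child's UCB and writes $Z_\ell(k)=\mathbf 1\{\UCB\ge\lambda_\ell\}$ online, overwriting $Z_{\ell-1}$ cell by cell. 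Overwriting is safe because of nestedness: each child's parent bit is read before any of its children are written. The write must come after the parent-bit read; otherwise one keeps $Z_{\ell-1}$ and $Z_\ell$ side by side, which is still $O_d(s^{-d})$ bits.

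\textbf{Good event and safety.} On the event $\mathcal G$ that all estimates lie within $a_\ell$, induction gives two facts:
\begin{itemize}
\item A cell containing a maximizer survives. Its representative has UCB $\ge f^\star-u_\ell\ge\lambda_\ell-u_\ell$, so the test must include slack $O(u_\ell)$, exactly as in the confidence rule of Section~\ref{sec:upper-main}.
\item Every surviving level-$\ell$ cell has gap $O_d(u_\ell)$. This follows from the confidence rule with $\varepsilon=O(u_\ell)$, $\zeta=O(u_\ell)$, $a=a_\ell$, plus the $u_\ell$ cell diameter.
\end{itemize}
Consequently, every pull in batch $\ell$ (real or filler) has gap $O_d(u_{\ell-1})$. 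The regret of batch $\ell$ is therefore at most $C_d u_\ell^{-d}n_\ell u_{\ell-1}\le C_d b_{B,r}u_{\ell-1}u_\ell^{-p}$, where $p=d+2$. Summing with \eqref{eq:in-memory-equalized-sums} gives the first term $C_dL\,b_{B,r}s^{-\gamma_d(L)}$. The pull count is at most $C_dL\,b_{B,r}s^{-p}\le C_dL\,b_{B,r}r^{-p}\le T/2$. The choice of $L_B(s)$, with its cap $\lfloor (d+1)\log(1/s)/\log(d+2)\rfloor$, only ensures that the dyadic radii are nondegenerate, $u_\ell\ge s$, and that $L$ stays logarithmic.

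The final mask $Z_L$ is a safe scale-$s$ active set on $\mathcal G$. Applying Lemma~\ref{lem:active-set-refinement-conditional} with a single fragment (and $\ell_r\le b_{B,r}$) gives the remaining terms $C_d[b_{B,r}sr^{-p}+Tr]$, plus $T\PP(\mathcal G^\complement)\le Tr$.

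The main obstacle I expect is the within-batch benchmark. The second sweep's actions are committed before $\lambda_\ell$ exists, so elimination may only affect the next mask, never the current pulls. I must check that the confidence-rule inequality still certifies the surviving gaps and the maximizer's survival with both sweeps sharing one union bound, and that filler choices depend only on $Z_{\ell-1}$. The careful bookkeeping lies in the bit budget for a simultaneous read of $Z_{\ell-1}$ and write of $Z_\ell$, and in the precision of $\lambda_\ell$.
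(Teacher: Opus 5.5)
Your proposal is correct and follows essentially the paper's proof: one committed two-sweep batch per equalized level, with the max first-sweep LCB as benchmark and a second-sweep UCB test that carries slack $u_\ell$ and is gated by the parent bit (so filler slots never activate a cell). The narrowing cost $C_d b_{B,r}u_{\ell-1}u_\ell^{-p}$ is summed via \eqref{eq:in-memory-equalized-sums}, and Lemma~\ref{lem:active-set-refinement-conditional} is applied to a single fragment followed by exploitation; the paper keeps old and new masks side by side until the new one is complete, which is your $O_d(s^{-d})$-bit fallback, so in-place overwriting is immaterial.
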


\begin{proof}
Set $L:=L_B(s)$.  Since $s\le1/16$, $L\ge1$.  Apply the equalized
construction above to obtain nested dyadic radii
\[
  1=u_0\ge u_1\ge\cdots\ge u_L=s
\]
satisfying \eqref{eq:in-memory-equalized-sums}.  The associated dyadic
partitions are nested.  Index the terminal partition as
$\mathcal P_s=\{P_k:k\in[K]\}$.  Initialize the unique level-zero root bit
to one.

\proofparagraph{Committed level update}
For every dyadic cell $C$, let $x_C$ be its midpoint representative.  At level
$\ell$, the complete two-sweep schedule is committed from the old mask as one
batch.  The second-sweep actions depend only on that old mask; the first-sweep
benchmark affects only the state update after the corresponding rewards arrive.
Thus using the benchmark within the batch does not redesign any committed action.
A real level-$\ell$ cell receives
\[
  n_\ell=\left\lceil A_d^{\mathrm{samp}}u_\ell^{-2}b_{B,r}\right\rceil
\]
pulls in each sweep, with quantization mesh $\varepsilon_\ell=u_\ell/512$ and
confidence half-width
\[
  a_\ell:=u_\ell/32.
\]
Let $\widehat\mu_{C,i}$ be the rounded empirical mean of cell $C$ in sweep
$i\in\{1,2\}$.  If $C$ is a level-$\ell$ cell, let $\operatorname{anc}_\ell(C)$ be its unique ancestor
at the preceding selected level, as in \eqref{eq:dyadic-general-ancestor}.  A
cell whose ancestor is inactive uses as filler the representative of the
lexicographically first active cell at the preceding selected level; if the old
mask is empty, it uses $x_\circ$.

The first sweep forms the benchmark
\[
  \lambda_\ell:=\max_{\{C:Z_{\ell-1,\operatorname{anc}_\ell(C)}=1\}}
       \{\widehat\mu_{C,1}-a_\ell\},
\]
with $\lambda_\ell=-1$ when the old mask is empty.  The second sweep writes
\begin{equation}\label{eq:in-memory-confidence-update}
  Z_{\ell,C}
  =Z_{\ell-1,\operatorname{anc}_\ell(C)}
   \mathbf1_{\{\widehat\mu_{C,2}+a_\ell\ge \lambda_\ell-u_\ell\}}.
\end{equation}
Thus \eqref{eq:in-memory-confidence-update} uses the same LCB--UCB comparison as
the serialized branch, with geometric slack $u_\ell$.  A filler can never create
an active cell, and the old
mask is erased only after the new mask is complete.

\proofparagraph{Concentration and active-set invariant}
Across all levels and both sweeps there are at most
\[
  C_d L r^{-d}\le C_d B r^{-d}
\]
potential real segments.  For each level-$\ell$ cell $C$ and sweep
$i\in\{1,2\}$, let $\mathcal H_{\ell,C,i}$ be the interaction sigma-field before
that segment begins.  The indicator $Z_{\ell-1,\operatorname{anc}_\ell(C)}$ and the segment arm are
$\mathcal H_{\ell,C,i}$-measurable; on
$\{Z_{\ell-1,\operatorname{anc}_\ell(C)}=1\}$ the segment pulls $x_C$.
Conditional Hoeffding and quantization give, almost surely,
\[
  \mathbf1_{\{Z_{\ell-1,\operatorname{anc}_\ell(C)}=1\}}
  \PP\!\left(
    |\widehat\mu_{C,i}-f(x_C)|>a_\ell
    \mid\mathcal H_{\ell,C,i}
  \right)
  \le2e^{-c_d b_{B,r}}.
\]
The definition of $b_{B,r}$ and a sufficiently large choice of
$A_d^{\mathrm{samp}}$ therefore give an event $\mathcal E_0$ with
\[
  \PP(\mathcal E_0^\complement)\le r/2
\]
on which every realized real score is accurate at its level.

Fix $x^\star\in\operatorname*{arg\,max}_{x\in\X}f(x)$.  We prove by induction that
\begin{equation}
  \exists C_\ell^\star:\quad
  x^\star\in C_\ell^\star,
  \quad Z_{\ell,C_\ell^\star}=1,
  \qquad
  \sup_{\{C:Z_{\ell,C}=1\}}\sup_{x\in C}
  \bigl(f^\star-f(x)\bigr)\le \kappa_d u_\ell.
\end{equation}
The claim is immediate at level zero.  Assume it holds at level $\ell-1$, and let
$C_\ell^\star$ be the unique level-$\ell$ cell containing $x^\star$.  Its ancestor is
active.  On $\mathcal E_0$,
\[
  \lambda_\ell\le f^\star,
  \qquad
  \lambda_\ell\ge \widehat\mu_{C_\ell^\star,1}-a_\ell
       \ge f^\star-u_\ell-2a_\ell.
\]
Moreover,
\[
  \widehat\mu_{C_\ell^\star,2}+a_\ell
  \ge f(x_{C_\ell^\star})
  \ge f^\star-u_\ell
  \ge \lambda_\ell-u_\ell,
\]
so $C_\ell^\star$ survives.  Conversely, if $C$ survives, then
\eqref{eq:in-memory-confidence-update} gives
\begin{align*}
  f(x_C)
  &\ge \widehat\mu_{C,2}-a_\ell\\
  &\ge \lambda_\ell-u_\ell-2a_\ell\\
  &\ge f^\star-2u_\ell-4a_\ell.
\end{align*}
For every $x\in C$, Lipschitzness and the diameter bound add at most $u_\ell$.
Since $a_\ell=u_\ell/32$, every active cell is pointwise
$(3+1/8)u_\ell$-optimal, which is at most $\kappa_d u_\ell$ by the fixed choice of
the active-set constant.  In particular, the old mask is nonempty on
$\mathcal E_0$, and every filler used at level $\ell$ lies in an active
level-$(\ell-1)$ cell.

\proofparagraph{Regret and resources}
Let $\mathcal R_\ell$ denote the regret incurred by the two level-$\ell$ sweeps.  On
$\mathcal E_0$, every real action and every filler at level $\ell$ lies in an
active level-$(\ell-1)$ cell and has gap at most $C_d u_{\ell-1}$.  Thus
\begin{equation}
  \mathcal R_\ell\mathbf1_{\mathcal E_0}
  \le C_d b_{B,r}\,u_{\ell-1}u_\ell^{-p}.
\end{equation}
Summing and using \eqref{eq:in-memory-equalized-sums} gives
\begin{equation}\label{eq:in-memory-total-narrowing-regret}
  \sum_{\ell=1}^{L}\mathcal R_\ell\mathbf1_{\mathcal E_0}
  \le C_d L b_{B,r} s^{-\gamma_d(L)}.
\end{equation}
The total number of narrowing pulls is at most
$C_d L b_{B,r}s^{-p}\le C_d L b_{B,r}r^{-p}$.

At level $L$, the active cells form a valid safe scale-$s$ active set on the
boundary-history-measurable event $\mathcal E_0$.  Enumerate the terminal
partition as $\mathcal P_s=\{P_k:k\in[K]\}$ and set
$Z_k:=Z_{L,P_k}$.  For terminal refinement, use the single scheduled index
fragment $[K]$, retain exactly the cells selected by $Z$, and use the
representative of the lexicographically first
active cell in filler slots.  On $\mathcal E_0$, that incumbent has gap at most
$\kappa_d s$.  Lemma~\ref{lem:active-set-refinement-conditional}
therefore applies with validity event $\mathcal E_0$.  It uses one additional
committed batch, and $\ell_r\le C_d b_{B,r}$.  One final batch exploits its output.
The construction therefore uses $L+2\le B$ batches.  The old and new masks use
$C_d s^{-d}$ bits, while addresses, counters, quantized sums, $\lambda_\ell$, and
the global point--score pair use $C_d b_{B,r}$ bits.  Under
\eqref{eq:in-memory-fixed-resources}, all pulls fit before a nonempty exploitation
batch and the memory uses at most $W$ bits after every pull.  The narrowing phase contributes at most
$T\PP(\mathcal E_0^\complement)$ outside $\mathcal E_0$, while the conditional
refinement lemma contributes another $T\PP(\mathcal E_0^\complement)$ in
addition to its own $O(Tr)$ score-failure charge.  Since
$2T\PP(\mathcal E_0^\complement)\le Tr$, combining these bounds with
\eqref{eq:in-memory-total-narrowing-regret} proves
\eqref{eq:in-memory-fixed-scale-regret}.

\end{proof}

\subsection{Proof of the joint upper bound}

\begin{proof}[Proof of the upper bound in Theorem~\ref{thm:joint-frontier}]
For $B=1$, the fixed-action policy has regret at most $T$, while
$\beta_{d,1}=1$.  Fix a sufficiently large constant $A_d^{\mathrm{env}}>0$.  Recall
$\chi=(B-1)W$ and put
\[
  B_0=\lceil A_d^{\mathrm{env}}\Gamma_T\rceil,
  \qquad
  \upsilon_d=\frac1{d(d+3)}.
\]
We use the root construction for $B=2$,
the in-memory construction for $3\le B<B_0$, and the serialized construction for
$B\ge B_0$.  Every choice is measurable with respect to the known parameters and $\Fzero$.

\proofparagraph{Two batches}
For $B=2$, Corollary~\ref{cor:two-batch-root}, the identity
$\beta_{d,2}=\frac{d+2}{d+3}$, and $W\ge C_d \ell_T$ give
\[
  \mathfrak R_T(2,W)
  \le C_d T^{\frac{d+2}{d+3}}\ell_T^{1/(d+3)}
  \le C_d \ell_T\frac{T^{\beta_{d,2}}}{2^2}.
\]
The memory-limited term in \eqref{eq:two-batch-upper-curve} is at most
$T^{\alpha_d}$ after increasing the logarithmic-memory constant.  Hence this
regime is bounded by the right side of \eqref{eq:joint-frontier-upper}.

\proofparagraph{Moderate batch complexity}
Suppose $3\le B<B_0$.  Put $\bar b_T=b_{B,T}$ and
\[
  \eta_T:=\frac{A_d^{\mathrm{env}}\Gamma_T\bar b_T}{T},
  \qquad
  \gamma_\star:=\frac{p-1}{1-p^{-(B-2)}},
  \qquad
  \vartheta:=\frac{p}{(p+1)\gamma_\star+1}.
\]
Then
\[
  \gamma_\star\vartheta=\beta_{d,B},
  \qquad
  1-\frac{1+\vartheta}{p+1}=\beta_{d,B},
  \qquad
  \vartheta\le\frac1p.
\]
Here $s_B$ is the radius dictated by $B$-level narrowing, $s_W$ is the
smallest radius whose complete mask fits in memory, and $\widetilde r$
balances fine refinement with exploitation.  Set
\[
  s_B:=\eta_T^\vartheta,
  \qquad
  s_W:=\left(\frac{A_d^{\mathrm{env}}}{1\vee W}\right)^{1/d},
  \qquad
  \widetilde s:=s_B\vee s_W,
  \qquad
  \widetilde r:=(\eta_T\widetilde s)^{1/(p+1)}.
\]
For $T$ large enough, $\eta_T\le1$.  Since
$p\vartheta\le1$ and $\widetilde s\ge s_B=\eta_T^\vartheta$,
\begin{equation}\label{eq:moderate-scale-feasibility}
  \eta_T\le \widetilde s^p,
  \qquad
  \widetilde r=(\eta_T\widetilde s)^{1/(p+1)}\le\widetilde s.
\end{equation}
After increasing the logarithmic-memory constant and $T_d$, uniformly over
$3\le B<B_0$ we also have $\widetilde s\le1/32$.  Take the smallest dyadic
$s\ge\widetilde s$ and $r\ge\widetilde r$.  Dyadic rounding is monotone, so
$0<r\le s\le1/16$.

The lower bound $\widetilde r\ge\eta_T^{1/p}$ from
\eqref{eq:moderate-scale-feasibility} implies
$\log(1/r)\le C_d\log T$ and hence
$b_{B,r}\le C_d\bar b_T$.  Moreover,
$s\ge s_W$ gives
$s^{-d}\le (1\vee W)/A_d^{\mathrm{env}}$; together with
$b_{B,r}\le C_d\ell_T\le W/A_d^{\mathrm{env}}$ this verifies the memory
condition in \eqref{eq:in-memory-fixed-resources}.  With
$L=L_B(s)\le B<B_0\le C_d\Gamma_T$, the same scale relation gives
$r^{-p}\le C_d\eta_T^{-1}$ and therefore
\[
  Lb_{B,r}r^{-p}
  \le C_d \Gamma_T\bar b_T\eta_T^{-1}
  \le T/2
\]
after increasing $A_d^{\mathrm{env}}$.  Lemma~\ref{lem:in-memory-branch}
therefore applies.  Since
$r^{p+1}\asymp\eta_Ts$ and $b_{B,r}\le C_d\bar b_T$, its terminal refinement
term satisfies
\[
  b_{B,r}sr^{-p}
  \asymp \frac{b_{B,r}}{\eta_T}r
  \le C_dTr.
\]

If $L=B-2$, then $\gamma_d(L)=\gamma_\star$ and $s\ge s_B$, so
\[
  Lb_{B,r}s^{-\gamma_d(L)}
  \le C_d\Gamma_T\bar b_T\eta_T^{-\gamma_\star\vartheta}
  \le C_d T^{\beta_{d,B}}(\Gamma_T\bar b_T)^{1-\beta_{d,B}}.
\]
If $L<B-2$, then
\[
  L=\left\lfloor\frac{(p-1)\log(1/s)}{\log p}\right\rfloor.
\]
Writing $q=p^{-L}$ gives $q\le p s^{p-1}$ and $q\le1/p$.  Therefore
\[
  \gamma_d(L)-(p-1)=\frac{(p-1)q}{1-q}
  \le C_d s^{p-1},
\]
and the boundedness of $s^{p-1}\log(1/s)$ yields
\[
  s^{-\gamma_d(L)}
  =s^{-(p-1)}
    \exp\!\left((\gamma_d(L)-(p-1))\log(1/s)\right)
  \le C_d s^{-(p-1)}.
\]
Using $\eta_T\le s^p$ from \eqref{eq:moderate-scale-feasibility},
\[
  Lb_{B,r}s^{-\gamma_d(L)}
  \le C_d\Gamma_T\bar b_Ts^{-(p-1)}
  \le C_dT\eta_Ts^{-(p-1)}
  \le C_dT(\eta_Ts)^{1/(p+1)}
  \le C_dTr.
\]
Finally, $s=s_B\vee s_W$ up to a factor two and
$r\asymp(\eta_Ts)^{1/(p+1)}$ imply
\[
  Tr\le C_d \max\left\{
    T^{\beta_{d,B}}(\Gamma_T\bar b_T)^{1-\beta_{d,B}},
    T^{\frac{d+2}{d+3}}(\Gamma_T\bar b_T)^{1/(p+1)}
      (1\vee W)^{-\upsilon_d}
  \right\}.
\]
Since $B\le C_d\Gamma_T$, $\bar b_T\le C_d\ell_T$,
$\Gamma_T=O((\log\ell_T)^2)$, and
$1-\beta_{d,B}\le1-\alpha_d=1/p$,
\[
  B^2(\Gamma_T\bar b_T)^{1-\beta_{d,B}}\le C_d \ell_T,
  \qquad
  (\Gamma_T\bar b_T)^{1/(p+1)}B^{\upsilon_d}\le C_d \ell_T
\]
for $T\ge T_d$.  Using $1+\chi\le B(1\vee W)$ proves that the in-memory
construction is bounded by the right side of
\eqref{eq:joint-frontier-upper}.

\proofparagraph{Large batch complexity}
Suppose $B\ge B_0$ and use Lemma~\ref{lem:serialized-branch}.  Define
\[
  \Xi_T=\left(\frac{T}{\ell_T}\right)^{d/(d+2)},
  \qquad
  \Xi_\chi=1+\frac{\chi}{A_d^{\mathrm{env}}\Gamma_T}.
\]
For the statistically saturated branch,
\[
  T^{\frac{d+2}{d+3}}\ell_T^{1/(d+3)}\Xi_T^{-\upsilon_d}
  =T^{\alpha_d}\ell_T^{1/(d+2)}
  \le \ell_T T^{\alpha_d}.
\]
For the boundary-information branch, $\Xi_\chi\ge c_d(1+\chi)/\Gamma_T$, hence
\begin{align*}
  T^{\frac{d+2}{d+3}}\ell_T^{1/(d+3)}\Xi_\chi^{-\upsilon_d}
  &\le C_d T^{\frac{d+2}{d+3}}(1+\chi)^{-\upsilon_d}
    \ell_T^{1/(d+3)}\Gamma_T^{\upsilon_d}\\
  &\le C_d \ell_T T^{\frac{d+2}{d+3}}(1+\chi)^{-\upsilon_d}.
\end{align*}
The serialized construction is therefore bounded by the right side of
\eqref{eq:joint-frontier-upper}.  Combining the three fixed regimes proves the
claim.
\end{proof}

\section{Proof of the batch-complexity corollary}
\label{app:batch-complexity-proof}

\begin{proof}[Proof of Corollary~\ref{cor:batch-complexity}]
Put $p=d+2$, choose $\varrho_d$ larger than the constant in
\eqref{eq:joint-frontier-upper}, and set
\[
  B_+
  :=T\wedge\left\lceil
    C_d\left[
      \log\log T
      \ \vee\
      \frac{T^{d/p}}{W}
    \right]
  \right\rceil.
\]
For $T\ge T_d$, the cap is inactive and, after increasing $C_d$,
\[
  (B_+-1)W\ge c_dT^{d/p},
  \qquad
  T^{\beta_{d,B_+}-\alpha_d}
  =\exp\!\left(
    \frac{\alpha_dp^{-B_+}}{1-p^{-B_+}}\log T
  \right)
  \le C_d.
\]
Substitution into \eqref{eq:joint-frontier-upper} gives
$\mathfrak R_T(B_+,W)\le\varrho_d\ell_TT^{\alpha_d}$ and therefore the upper
bound in \eqref{eq:batch-complexity-bounds}.

Conversely, suppose
$\mathfrak R_T(B,W)\le\varrho_d\ell_TT^{\alpha_d}$.  The memory floor in
\eqref{eq:effective-routing-identity} and the lower bound
\eqref{eq:joint-frontier-lower} imply
\[
  1+(B-1)W
  \ge c_dT^{d/(d+2)}\ell_T^{-d(d+3)},
  \qquad
  B\ge c_d\frac{T^{d/(d+2)}}{W\ell_T^{d(d+3)}}.
\]
The update-depth branch also requires $B^{-2}e^{u_B}\le C_d\ell_T$, where
$u_B:=\alpha_dp^{-B}\log T/(1-p^{-B})$.  If
$B\le c_d\log\log T$, then $p^{-B}\ge(\log T)^{-1/2}$, so
$u_B\ge c_d\sqrt{\log T}$ and the requirement fails.  Hence
$B\ge c_d\log\log T$.  Combining the two necessary conditions proves
\eqref{eq:batch-complexity-bounds}; \eqref{eq:batch-complexity-tilde} follows
after suppressing powers of $\ell_T$.
\end{proof}

\end{document}